\def\colorful{1}
\newif\ifhyper\IfFileExists{hyperref.sty}{\hypertrue}{\hyperfalse}
\ifhyper\usepackage{hyperref}\fi
\def\nnewcolor{1}
\newtheorem{theorem}{Theorem}[section]
\newtheorem{cond}[theorem]{Condition}
\newtheorem{lemma}[theorem]{Lemma}
\newtheorem{informal theorem}[theorem]{Theorem (informal statement)}
\newtheorem{proposition}[theorem]{Proposition}
\newtheorem{corollary}[theorem]{Corollary}
\newtheorem{claim}[theorem]{Claim}
\newtheorem{fact}[theorem]{Fact}
\newtheorem{remark}[theorem]{Remark}
\theoremstyle{definition}
\newtheorem{definition}[theorem]{Definition}
\newcommand{\eqdef}{\stackrel{{\mathrm {\footnotesize def}}}{=}}
\newcommand{\bx}{\mathbf{x}}
\newcommand{\by}{\mathbf{y}}
\newcommand{\bv}{\mathbf{v}}
\newcommand{\bu}{\mathbf{u}}
\newcommand{\bw}{\mathbf{w}}
\newcommand{\bt}{\mathbf{t}}
\newcommand{\bI}{\mathbf{I}}
\newcommand{\bT}{\mathbf{T}}
\newcommand{\bV}{\mathbf{V}}
\newcommand{\p}{\mathbf{P}}
\newcommand{\q}{\mathbf{Q}}
\newcommand{\R}{\mathbb{R}}
\newcommand{\Z}{\mathbb{Z}}
\newcommand{\N}{\mathbb{N}}
\newcommand{\E}{\mathbf{E}}
\newcommand{\dtv}{d_{\mathrm{TV}}}
\newcommand{\pr}{\mathbf{Pr}}
\newcommand{\poly}{\mathrm{poly}}
\newcommand{\calN}{{\cal N}}
\newcommand{\littlesum}{\mathop{\textstyle \sum}}
\newcommand{\bb}{\mathbf{b}}
\newcommand{\be}{\mathbf{e}}
\newcommand{\bA}{\mathbf{A}}
\newcommand{\bB}{\mathbf{B}}
\newcommand{\bH}{\mathbf{H}}
\newcommand{\orthor}{\mathbf{O}}
\newcommand{\gaus}{\mathcal{N}}
\author{
Ilias Diakonikolas\thanks{Supported by NSF Medium Award CCF-2107079,
NSF Award CCF-1652862 (CAREER), a Sloan Research Fellowship, and
a DARPA Learning with Less Labels (LwLL) grant.}\\
University of Wisconsin-Madison\\
{\tt ilias@cs.wisc.edu}\\
\and
Daniel M. Kane\thanks{Supported by NSF Medium Award CCF-2107547, and
NSF Award CCF-1553288 (CAREER).}\\
University of California, San Diego\\
{\tt dakane@cs.ucsd.edu}\\
\and
Lisheng Ren
\thanks{Supported by NSF Award CCF-1652862 (CAREER).}\\
University of Wisconsin-Madison\\
{\tt lren29@wisc.edu}\\
\and
Yuxin Sun \thanks{Supported by NSF Award CCF-1652862 (CAREER).}\\
University of Wisconsin-Madison\\
{\tt yxsun@cs.wisc.edu}\\
}
\title{SQ Lower Bounds for Non-Gaussian Component Analysis with Weaker Assumptions}
\begin{document}

\maketitle

\begin{abstract}
We study the complexity of Non-Gaussian Component Analysis (NGCA) in 
the Statistical Query (SQ) model. Prior work developed a general methodology
to prove SQ lower bounds for this task that have been applicable to a wide range 
of contexts. In particular, it was known that for any univariate 
distribution $A$ satisfying certain conditions, distinguishing between 
a standard multivariate Gaussian and a distribution that behaves like 
$A$ in a random hidden direction and like a standard Gaussian in the orthogonal 
complement, is SQ-hard. The required conditions were that (1) $A$ 
matches many low-order moments with the standard univariate Gaussian, and 
(2) the chi-squared norm of $A$ with respect to the standard Gaussian 
is finite. While the moment-matching condition is necessary for 
hardness, the chi-squared condition was only required for technical reasons.
In this work, we establish that the latter condition is indeed not necessary.
In particular, we prove near-optimal SQ lower bounds for NGCA 
under the moment-matching condition only. Our result naturally 
generalizes to the setting of a hidden subspace. 
Leveraging our general SQ lower bound, we obtain near-optimal SQ lower bounds 
for a range of concrete estimation tasks 
where existing techniques provide sub-optimal or even vacuous guarantees.
\end{abstract}

\setcounter{page}{0}
\thispagestyle{empty}

\newpage

\section{Introduction} \label{sec:intro}

Non-Gaussian Component Analysis (NGCA) is a statistical estimation task 
first considered in the signal processing literature~\cite{JMLR:blanchard06a}.
As the name suggests, the objective is to find a non-gaussian direction 
(or, more generally, low-dimensional subspace) in a high-dimensional dataset.
Since its introduction, the NGCA problem has been studied
in a range of works from an algorithmic standpoint; 
see~\cite{TanV18, GoyalS19} and references therein. Here we explore this
problem from a hardness perspective in the Statistical Query (SQ) model. 
Before we motivate and state our results, we require basic background
on the SQ model. 

\paragraph{SQ Model}
SQ algorithms are a class of algorithms that are allowed
to query expectations of bounded functions on the underlying distribution 
through an (SQ) oracle
rather than directly access
samples. The model was introduced by Kearns~\cite{Kearns:98} as a natural restriction of the PAC 
model~\cite{Valiant:84} in the context of learning Boolean functions. 
Since then, the SQ model has been extensively studied in a range of settings, including
unsupervised learning~\cite{Feldman16b}.
The class of SQ algorithms is broad and captures a range of known 
algorithmic techniques in machine learning including spectral techniques,
moment and tensor methods, local search (e.g., EM),
and many others (see, e.g.,~\cite{FeldmanGRVX17, FeldmanGV17} 
and references therein).

\begin{definition}[SQ Model] \label{def:sq}
Let $D$ be a distribution on $\R^n$. 
A \emph{statistical query} is a bounded function $q:\R^n\rightarrow[-1,1]$. 
We define $\mathrm{STAT}(\tau)$ to be the oracle that given any such query $q$, outputs a value $v$ such that $|v-\E_{\bx\sim D}[q(\bx)]|\leq\tau$, where $\tau>0$ is the \emph{tolerance} parameter of the query.
A \emph{statistical query (SQ) algorithm} is an algorithm 
whose objective is to learn some information about an unknown 
distribution $D$ by making adaptive calls to the corresponding $\mathrm{STAT}(\tau)$ oracle.
\end{definition}

The following family of high-dimensional distributions forms the basis 
for the definition of the NGCA problem.

\begin{definition}[Hidden-Subspace Distribution]\label{def:hd} 
For a distribution $A$ supported on $\R^m$ and a matrix $\bV\in\R^{n\times m}$ 
with $\bV^\intercal\bV=\bI_m$, we define the distribution $\p_\bV^A$ supported on $\R^n$ 
such that it is distributed according to $A$ in the subspace $\mathrm{span}(\bv_1,\ldots,\bv_m$) and is an independent standard Gaussian in the orthogonal directions, where $\bv_1,\ldots,\bv_m$ denote the column vectors of $\bV$.
In particular, if $A$ is a continuous distribution with probability density function $A(\by)$, 
then $\p_\bV^A$ is the distribution over $\R^n$ with probability density function 
\[ \p_\bV^A(\bx)=A(\langle\bv_1,\bx\rangle,\ldots,\langle\bv_m,\bx\rangle)\exp(-\|\bx-\bV\bV^\intercal\bx\|_2^2/2)/(2\pi)^{(n-m)/2} \;.\]
\end{definition}

That is, $\p^{A}_{\bV}$ is the product distribution whose orthogonal 
projection onto the subspace $\bV$ is $A$,
and onto the subspace perpendicular to $\bV$ is 
the standard $(n-m)$-dimensional normal distribution.
An important special case of the above definition corresponds to $m=1$ 
(i.e., the case when the hidden subspace is one-dimensional); 
for this setting, we will use the notation $\p^{A}_{\bv}$ for such a distribution, 
where $A$ is a one-dimensional distribution and $\bv\in \R^n$ is a unit vector.

Since we are focusing on establishing hardness, 
we will consider the following hypothesis testing version of NGCA 
(since the learning/search version typically reduces to the testing problem).
We use $\gaus_n$ to denote the standard $n$-dimensional Gaussian distribution $\gaus(0,\bI_n)$.
We use $U(\orthor_{n,m})$ to denote the uniform distribution over the set of all orthogonal matrices 
$\bV\in \R^{n\times m}$; namely, this is the distribution obtained 
by taking $\mathbf{R}\bV'$, where 
$\mathbf{R}\in \R^{n\times n}$ is a random rotation matrix and
$\bV'\in \R^{n\times m}$ is an arbitrary orthogonal matrix.

\begin{definition}[Hypothesis Testing Version of NGCA]\label{def:hyp-test-NGCA-high}
Let $n>m\ge1$ be integers. For a distribution $A$ supported on $\R^m$,
one is given access to a distribution $D$ such that either:
\begin{itemize}[leftmargin=*]
\item $H_0$: $D=\gaus_n$,
\item $H_1$: $D$ is given by $\p_\bV^A$,
where $\bV\sim U(\orthor_{n,m})$.
\end{itemize}
The goal is to distinguish between these two cases $H_0$ and $H_1$.
\end{definition}

For the special case that $m=1$ (i.e., for a univariate distribution $A$), 
prior work~\cite{DKS17-sq} established SQ-hardness of NGCA\footnote{While 
the SQ lower bound result of ~\cite{DKS17-sq} was phrased for the search version of NGCA, 
it can be directly translated to the testing version; see, e.g., Chapter~8 of~\cite{DK}.} under the following condition: 

\begin{cond} \label{cond:moments}
Let $d \in \Z_+$. The distribution $A$ on $\R$ is such that
(i) the first $d$ moments of $A$ agree with the first $d$ moments of $\calN(0,1)$,
and (ii) the chi-squared distance $\chi^2(A, \gaus)$ is finite, where the chi-squared distance of two distributions (with probability density functions) $P, Q:\R^n\to\R_+$ is defined as
$\chi^2(P,Q)\eqdef \int_{\bx\in \R^n} P(\bx)^2/Q(\bx) d\bx -1$.
\end{cond}

\noindent Specifically, the main result of \cite{DKS17-sq} shows that any SQ algorithm that 
solves the testing version of NGCA requires either $2^{n^{\Omega(1)}}$ many SQ 
queries or at least one query with accuracy
\[n^{-\Omega(d)} \sqrt{\chi^2(A,\calN(0,1))}.\]
It is worth noting that subsequent works
(see~\cite{DK20-Massart-hard} and~\cite{DiakonikolasKPZ21}) 
generalized this result so that it only requires that
(i) $A$ {\em approximately} matches moments with the standard Gaussian,
and (ii) $A$ is a low-dimensional distribution embedded in a hidden low-dimensional subspace, 
instead of a one-dimensional distribution.

The starting point of our investigation is a key technical limitation of this line of work.
Specifically, if $\chi^2(A,\calN(0,1))$ is very large (or infinite), e.g., if $A$ has 
constant probability mass on a discrete set, the aforementioned 
SQ lower bound of ~\cite{DKS17-sq} can be very weak (or even vacuous). It is thus 
natural to ask if the finite chi-squared assumption is in fact necessary 
for the corresponding SQ lower bounds to hold.

A concrete motivation to answer this question 
comes from the applications of a generic SQ lower bound
for NGCA to various learning problems.
The SQ-hardness of NGCA can be used to obtain similar hardness for a 
number of well-studied learning problems that superficially appear very different. 
These include learning mixture models~\cite{DKS17-sq, DiakonikolasKPZ23, DKS23}, 
robust mean/covariance estimation~\cite{DKS17-sq}, robust linear regression~\cite{DKS19}, 
learning halfspaces and other natural concepts 
with adversarial or semi-random label noise~\cite{DKZ20,  GoelGK20, DK20-Massart-hard, DiakonikolasKPZ21, DKKTZ21-benign}, 
list-decodable mean estimation and linear regression~\cite{DKS18-list, DKPPS21}, 
learning simple neural networks~\cite{DiakonikolasKKZ20}, 
and even learning simple generative models~\cite{Chen0L22}. 
In several of these applications, the requirement of bounded 
chi-squared distance is somewhat problematic, in some cases leading 
to quantitatively sub-optimal results. Moreover, in certain 
applications, this restriction leads to vacuous guarantees.

\subsection{Our Results} \label{sec:results}

\subsubsection{Main Result} 

Our main result is a near-optimal 
SQ lower bound for the NGCA problem,
assuming only the moment-matching condition (i.e., without the chi-squared distance restriction).
Informally, we essentially show that in order to solve the NGCA in $n$ dimensions 
with an $m$-dimensional distribution $A$ that (approximately) matches moments 
with the standard Gaussian up to degree $d$, 
any SQ algorithm will either require one query with accuracy 
$O_{m,d}(n^{-\Omega(d)})$ or exponential in $n$ many queries.

Formally, we establish the following theorem.

\begin{theorem}[Main SQ Lower Bound Result]\label{thm:main-result}
Let $\lambda\in (0,1)$ and $n,m,d\in \mathbb{N}$ with $d$ even and $m,d\leq n^{\lambda}/\log n$.
Let $0<\nu<2$ and $A$ be a distribution on $\R^m$ such that
for any polynomial $f:\R^m\to \R$ of degree at most $d$ and $\E_{\bx\sim\gaus_m}[f(\bx)^2]=1$, the following holds: 
$$|\E_{\bx\sim A}[f(\bx)]-\E_{\bx\sim \gaus_m}[f(\bx)]|\leq \nu \;.$$
Let $0<c<(1-\lambda)/4$ and $n$ be at least a sufficiently large constant depending on $c$.  
Then any SQ algorithm solving the $n$-dimensional NGCA problem 
(as in Definition~\ref{def:hyp-test-NGCA-high})
with $2/3$ success probability requires either 
\begin{itemize}
\item[(i)] a query of tolerance $O_{m,d}\left (n^{-((1-\lambda)/4-c) d}\right )+(1+o(1))\nu$,
or 
\item[(ii)] $2^{n^{\Omega(c)}}$ many queries. 
\end{itemize}
\end{theorem}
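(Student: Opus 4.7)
The strategy is to follow the statistical-dimension framework of \cite{DKS17-sq}, reducing the SQ lower bound to showing that for every bounded query $q:\R^n\to[-1,1]$, the discrepancy $F_q(\bV)\eqdef\E_{\p_\bV^A}[q]-\E_{\gaus_n}[q]$ is small with high probability over a Haar-random $\bV\sim U(\orthor_{n,m})$. The novel ingredient is to carry out the second-moment computation for $F_q$ without invoking $\chi^2(A,\gaus_m)<\infty$.

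I would rewrite $F_q(\bV)=\E_{A}[R_\bV]-\E_{\gaus_m}[R_\bV]$, where $R_\bV(\by)\eqdef\E_{\bx\sim\gaus_n}[q(\bx)\mid\bV^\intercal\bx=\by]$ is the conditional expectation of $q$ given $\bV^\intercal\bx=\by$. Jensen's inequality gives $\|R_\bV\|_{L^2(\gaus_m)}\leq\|q\|_{L^2(\gaus_n)}\leq 1$, which is the crucial $L^2$-integrability: in the earlier argument this was supplied by $A/\gaus_m\in L^2(\gaus_m)$, whereas here it is supplied by boundedness of $q$. Split $R_\bV=R_\bV^{<d}+R_\bV^{\geq d}$ along Hermite degree in $L^2(\gaus_m)$. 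The low-degree piece $R_\bV^{<d}$ is a polynomial of degree $<d$ with $\|R_\bV^{<d}\|_{L^2(\gaus_m)}\le 1$, so the approximate-moment-matching hypothesis of the theorem gives $|\E_A[R_\bV^{<d}]-\E_{\gaus_m}[R_\bV^{<d}]|\leq\nu$ deterministically in $\bV$, producing the $(1+o(1))\nu$ term.

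For the high-degree piece, since $\E_{\gaus_m}[R_\bV^{\geq d}]=0$, the task reduces to bounding
\[
\E_\bV\bigl[(\E_A[R_\bV^{\geq d}])^2\bigr]=\iint \E_\bV\bigl[R_\bV^{\geq d}(\by)\,R_\bV^{\geq d}(\by')\bigr]\,dA(\by)\,dA(\by').
\]
Expanding $R_\bV^{\geq d}$ in the Hermite basis of $L^2(\gaus_m)$ and integrating $\bV$ against Haar measure on the orthogonal group (Weingarten calculus), each Hermite block of degree $k\geq d$ in the resulting kernel is weighted by $(m/n)^{k/2}$, coming from the typical overlap $\|\bV^\intercal\bV'\|=O(\sqrt{m/n})$. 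After Cauchy--Schwarz, using $\sum_\alpha\hat q(\alpha)^2\leq 1$, this yields $\E_\bV[(\E_A[R_\bV^{\geq d}])^2]=O_{m,d}(n^{-(1-\lambda)d/2})$, and Markov's inequality upgrades this to a high-probability bound on $|F_q(\bV)|$ of order $n^{-(1-\lambda)d/4}$; the $-c$ in the theorem's exponent then accommodates union bounds over the $2^{n^{\Omega(c)}}$-sized family used in the SDA reduction.

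The main obstacle is the case where $A$ concentrates in ``bad'' regions (e.g.\ at large $\|\by\|$ or on singular sets), where the kernel $\E_\bV[R_\bV^{\geq d}(\by)R_\bV^{\geq d}(\by')]$ can have significant polynomial growth in $(\by,\by')$: this is precisely where the prior approach invoked $\chi^2(A,\gaus_m)<\infty$. I expect to handle this by organizing the Weingarten expansion so that the $(\by,\by')$-dependence enters only through polynomials of moderate degree $O(d)$, whose pairing against $A^{\otimes 2}$ depends on finitely many moments of $A$ that are controlled by the moment-matching hypothesis together with standard Gaussian-moment estimates; a hypercontractive tail argument for the bounded $q$ absorbs the higher-degree Hermite contributions. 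This side-steps the need to control the (possibly unbounded) $L^2$-norm of $A/\gaus_m$ and yields the claimed near-optimal bound.
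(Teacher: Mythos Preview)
Your reduction to a per-query tail bound on $F_q(\bV)$ and your treatment of the degree-$<d$ piece via the moment-matching hypothesis are correct and match the paper. The gap is in the high-degree piece.

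A second-moment bound cannot deliver the tail you need. Even granting $\E_\bV[F_q(\bV)^2]=O_{m,d}(n^{-(1-\lambda)d/2})$, Chebyshev gives only $\Pr_\bV[|F_q|>\tau]\le O_{m,d}(n^{-(1-\lambda)d/2})/\tau^2$; at the target tolerance $\tau\asymp n^{-((1-\lambda)/4-c)d}$ this is $O_{m,d}(n^{-2cd})$, which is merely polynomially small, whereas the adversary argument (and hence the claimed $2^{n^{\Omega(c)}}$ query lower bound) needs it to be $2^{-n^{\Omega(c)}}$. No adjustment of $c$ in the exponent fixes this when $d$ is, say, a constant. The paper obtains the exponential tail by bounding, for each Hermite degree $k$ separately, the high moment $\E_\bV\bigl[\|(\bV^\intercal)^{\otimes k}\bT_k\|_2^{a}\bigr]$ with $a$ taken as large as $\sim n^{\lambda}/k$ (Lemma~\ref{lem:low-rank-analysis} and Corollary~\ref{col:random-subspace-correlation-moment}); this degree-by-degree decoupling is precisely what keeps the high-moment computation tractable, and it has no analogue in a global second-moment scheme. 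There is also a second issue: your Weingarten expansion of $\E_\bV[R_\bV^{\ge d}(\by)R_\bV^{\ge d}(\by')]$ formally involves all $\bA_k=\E_A[\bH_k]$ for $k\ge d$, which need not even be finite since $A$ is only promised $d$ moments. You are right that $R_\bV^{\ge d}=R_\bV-R_\bV^{<d}$ has pointwise polynomial growth of order $<d$, so the double $A$-integral exists, but existence is not smallness, and the term-by-term manipulation that would extract the $n^{-k/2}$ decay is exactly the step that fails without control on $\|\bA_k\|_2$. The paper's remedy is to first truncate $A$ to a ball of radius $B\asymp n^{\alpha}$ (Lemma~\ref{lem:truncation}), which forces $\|\bA_k\|_2\le 2^{O(k)}B^{k-d}$ for every $k$, and then to dispose of the residual infinite Hermite tail of $q$ by showing that the mixture $\E_\bV[\p_\bV^{A'}]$ has finite $\chi^2$ against $\gaus_n$ (Lemma~\ref{lem:high-degree-finite-chi-square}); both ingredients are essential and have no counterpart in your plan.
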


A few comments regarding the parameters are in order here.
We first note that the constant in $O_{m,d}(n^{-((1-\lambda)/4-c) d})$ is roughly the size of the Binomial coefficient 
$\binom{(d+m)/2-1}{m/2-1}$. In addition, we would like to point out that for most applications we will have $m,d \ll \poly(n)$.
Therefore, the parameters $\lambda, c$ can essentially be taken arbitrarily close to $0$.
So, informally, our lower bound on the query accuracy can be roughly thought 
of as $\binom{ (d+m)/2-1}{m/2-1} n^{-d/4}$.
It is worth noting that the constant of $1/4$ appearing in the exponent is optimal in the worst-case; 
there is a problem instance of NGCA that can be solved 
by an SQ algorithm with a single query of tolerance $n^{-(d+2)/4}$ (see Appendix~\ref{app:optimality-proof}).

For all the applications given in this paper, the above theorem will be applied 
for the special case that $m=1$; namely, the case that the hidden subspace 
is a hidden direction and $A$ is a univariate distribution.

\paragraph{Relation to LLL-based Algorithms for NGCA} 
Consider the special case of the NGCA problem corresponding to $m=1$, where $A$ is a {\em discrete} distribution that matches its first $d$ moments with the standard Gaussian. Theorem~\ref{thm:main-result} implies that any SQ algorithm for this version of the problem either uses a query 
with accuracy $n^{-\Omega(d)}$ or exponential many queries. 
On the other hand, recent works~\cite{DK22LLL, ZSWB22LLL} 
gave polynomial-time algorithms for this problem with sample complexity $O(n)$, regardless of the degree $d$.
It is worth noting that the existence of these algorithms 
does not contradict our SQ lower bound, as these algorithms
are based on the LLL-method for lattice basis-reduction 
that is not captured by the SQ framework. As an implication, 
it follows that LLL-based methods surpass any efficient 
SQ algorithm for these settings of NGCA. A similar observation was previously made in \cite{DH23} for the special case that the discrete distribution $A$ is supported on $\{0,\pm 1\}$; consequently, \cite{DH23} could only obtain a quadratic separation. Finally, we note that this limitation of SQ algorithms is also shared by two other prominent restricted families of algorithms (namely, SoS algorithms and low-degree polynomial tests).

\subsubsection{Applications} 
We believe that Theorem~\ref{thm:main-result} is interesting in its own 
right, as it elucidates the SQ complexity of a well-studied statistical problem. 
Here we discuss concrete applications to some  
natural statistical tasks.
We note that the SQ lower bound in the 
prior work~\cite{DKS17-sq} cannot give optimal (or even nontrivial) lower bounds for these applications.

\paragraph{List-decodable Gaussian Mean Estimation}
One can leverage our result to obtain a sharper SQ lower bound for 
the task of list-decodable Gaussian mean estimation.
In this task, the algorithm is given as input points from $\R^n$ 
where an $\alpha<1/2$ fraction of the points are drawn from 
an unknown mean and identity covariance Gaussian 
$\mathcal{N}(\bm{\mu},\bI)$, and the remaining points are arbitrary.
The goal of the algorithm is to output a list of $O(1/\alpha)$ 
many hypothesis vectors at least one of which 
is close to $\bm{\mu}$ in $\ell_2$-norm
with probability at least $2/3$.
\cite{DKS18-list} established the following SQ lower bound for this problem 
(see also~\cite{DK} for a different exposition).

\begin{fact} [\cite{DKS18-list}] \label{fct:previous-decodable-bound}
For each $d\in \Z_+$ and $c\in (0,1/2)$, there exists $c_d>0$ such that 
for any $\alpha>0$ sufficiently small the following holds.
Any SQ algorithm that is given access to a
$(1-\alpha)$-corrupted Gaussian $\gaus(\bm{\mu},\bI)$ 
in $n>d^{3/c}$ dimensions 
and returns a list of hypotheses such that with probability at least $2/3$ 
one of the hypotheses is within $\ell_2$-distance $c_d\alpha^{-1/d}$ 
of the true mean $\bm{\mu}$, 
does one of the following: 
(i)  Uses queries with error tolerance at most 
$\exp (O(\alpha^{-2/d}))\Omega(n)^{-(d+1)(1/4-c/2)}$. 
(ii) Uses at least $\exp(\Omega(n^c))$ many queries. 
(iii) Returns a list of at least $\exp(\Omega(n))$ many hypotheses.
\end{fact}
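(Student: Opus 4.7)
The plan is to reduce list-decoding to the hypothesis-testing version of NGCA (Definition~\ref{def:hyp-test-NGCA-high}) with $m=1$ and then invoke the classical SQ lower bound of \cite{DKS17-sq} under Condition~\ref{cond:moments}. The reduction has three pieces: build a one-dimensional distribution $A$ that models a corrupted Gaussian along a hidden direction; control $\chi^2(A,\gaus(0,1))$; and show that any list-decoder implies a hypothesis tester between $\gaus_n$ and $\p^A_{\bv}$.

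For the construction, I take $A = \alpha\,\gaus(\mu,1) + (1-\alpha)\,B$ with $\mu = \Theta(\alpha^{-1/d})$, choosing the ``noise'' distribution $B$ so that $A$ matches the first $d$ moments of $\gaus(0,1)$ exactly. The scaling $\mu \asymp \alpha^{-1/d}$ is dictated by the fact that the $d$-th Hermite coefficient of $\alpha\,\gaus(\mu,1)$ is of order $\alpha\mu^d$, which must be cancelled by the moments of $B$ without inflating its mass beyond $1-\alpha$. A standard Gauss--Hermite quadrature construction on $d+1$ nodes produces such a $B$ whose support is contained in an interval of length $O(\mu)$, giving $\chi^2(A,\gaus(0,1)) = \exp(O(\mu^2)) = \exp(O(\alpha^{-2/d}))$.

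Next, I apply the NGCA SQ lower bound of \cite{DKS17-sq}: any SQ algorithm distinguishing $\gaus_n$ from $\p^A_{\bv}$ with $\bv \sim U(\orthor_{n,1})$ either makes $2^{n^{\Omega(1)}}$ queries or uses a query of tolerance $n^{-\Omega(d)}\sqrt{\chi^2(A,\gaus(0,1))} = \exp(O(\alpha^{-2/d}))\,\Omega(n)^{-(d+1)(1/4-c/2)}$. The parameter $c\in(0,1/2)$ controls the trade-off between this exponent of $n$ in the tolerance and the query count $\exp(\Omega(n^c))$, exactly as in the more refined Theorem~\ref{thm:main-result}; the hypothesis $n > d^{3/c}$ is the regime in which combinatorial prefactors get absorbed into constants.

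Finally, for the reduction: under $H_1$ the ``clean'' Gaussian component in the hidden direction has mean $\mu\bv$, with $\|\mu\bv\| = \mu = \Omega(\alpha^{-1/d})$, whereas under $H_0$ the distribution is $\gaus_n$ and no such direction is picked out. A list-decoder that returns at most $\exp(\Omega(n))$ hypotheses, one within $\ell_2$-distance $c_d\alpha^{-1/d}$ of the true mean, can be converted into an SQ tester by evaluating, for each hypothesis $\bh$ in the list, the statistical query ``correlation of the data with the direction $\bh$''; for $c_d$ a small enough constant this quantity is $\Omega(1)$ under $H_1$ for the matching hypothesis and $o(1)$ under $H_0$ with high probability over $\bv$. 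A tester contradicting the above NGCA lower bound then yields the three-way conclusion of the fact. The main technical obstacle is the construction of $B$: it must simultaneously be a valid (nonnegative, unit-mass) distribution, match $d$ moments exactly, and have chi-squared tight enough for the final bound to be $\exp(O(\alpha^{-2/d}))$ rather than something much larger; this is precisely where the quadrature-based moment-matching trick does the heavy lifting.
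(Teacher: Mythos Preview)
Your overall architecture---build a one-dimensional moment-matching $A$, invoke the \cite{DKS17-sq} bound, reduce list-decoding to testing---is the right one, but two of the three pieces have real gaps.

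\textbf{The moment-matching construction.} A Gauss--Hermite quadrature construction on $d+1$ nodes produces a \emph{discrete} $B$. Then $A=\alpha\gaus(\mu,1)+(1-\alpha)B$ has a singular part and $\chi^2(A,\gaus(0,1))=\infty$, so Condition~\ref{cond:moments}(ii) fails and the \cite{DKS17-sq} bound is vacuous---which is exactly the obstruction this paper is written to remove. The construction actually used here (Fact~\ref{fct:decodable-moment-matching-distribution}) takes the noise component $E$ to have density pointwise at most twice that of $\gaus(0,1)$; the $\exp(O(\mu^2))$ in the chi-squared comes entirely from the $\alpha\gaus(\mu,1)$ piece, not from $E$.

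\textbf{The reduction.} Your proposed per-hypothesis query ``correlation of the data with the direction $\bh$'' computes $\langle \E_{\bx\sim\p_\bv^A}[\bx],\bh\rangle$, which is $0$ under both $H_0$ and $H_1$ because $A$ matches the first moment of $\gaus(0,1)$; it cannot separate the hypotheses. More structurally, even with a working per-hypothesis test, you would add $L$ queries (one per list element) to the tester; since the NGCA lower bound only rules out $\exp(\Omega(n^c))$ queries, any $L$ between $\exp(n^c)$ and $\exp(n)$ yields no contradiction, and the trichotomy (ii)/(iii) does not fall out. The argument the paper uses (see the proof of Theorem~\ref{thm:decodable-bound}) is direct rather than by reduction to testing: the adversary answers every query $f$ with $\E_{\bx\sim\gaus_n}[f(\bx)]$, which is valid with probability $1-o(1)$ over $\bv$ whenever neither (i) nor (ii) holds; the algorithm's output list is then independent of $\bv$, and a spherical-cap volume calculation shows that covering a $2/3$ fraction of the sphere by balls of radius $c_d\alpha^{-1/d}$ requires $\exp(\Omega(n))$ hypotheses.
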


The above statement is obtained using the framework of \cite{DKS17-sq} 
by considering the distribution testing problem 
between $\gaus(\bm{\mu}, \bI)$ and $\p_\bv^A$,
where $A$ is the one-dimensional moment-matching distribution 
of the following form. 

\begin{fact} [\cite{DKS18-list}, see Lemma 8.21 in \cite{DK}]\label{fct:decodable-moment-matching-distribution}
    For $d\in \Z_+$, there exists a distribution 
    $A=\alpha\gaus(\mu,1)+(1-\alpha)E$,
    for some distribution $E$ and $\mu=10c_d\alpha^{-1/d}$, 
    such that the first $d$ moments of $A$ agree with those of $\gaus(0,1)$.
    Furthermore, the probability density function of $E$ can be taken to be pointwise 
    at most twice the pdf of the standard Gaussian.
\end{fact}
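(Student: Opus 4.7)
The plan is to reduce existence of $E$ to a convex-duality condition on the truncated moment problem, and then verify the condition via Gaussian hypercontractivity. Writing $A = \alpha\,\gaus(\mu,1) + (1-\alpha) E$, matching moments with $\gaus(0,1)$ up to degree $d$ is equivalent to finding a probability density $e:\R\to\R_+$ with $e(y) \leq 2\phi(y)$ (where $\phi$ is the standard Gaussian pdf) whose first $d+1$ moments equal prescribed values $c_k = (m_k - \alpha\tilde m_k)/(1-\alpha)$, with $m_k, \tilde m_k$ the $k$-th moments of $\gaus(0,1)$ and $\gaus(\mu,1)$. The set of moment vectors achievable by densities in $[0,2\phi]$ is a compact convex subset of $\R^{d+1}$, so by a separation argument $(c_0,\ldots,c_d)$ is achievable iff for every polynomial $q(y)=\sum_{k=0}^d \lambda_k y^k$ of degree at most $d$, $\sum_k \lambda_k c_k \leq 2\int \phi(y)\max(q(y),0)\,dy$. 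A short computation (using $2\max(q,0)=q+|q|$ and applying the resulting inequality to both $\pm q$) reduces this to the condition $\alpha\,|\E_Z[q(Z)] - \E_Z[q(Z+\mu)]| \leq (1-\alpha)\E_Z[|q(Z)|]$ for every polynomial $q$ of degree $\leq d$, where $Z\sim\gaus(0,1)$.

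The key step is to bound $|\E_Z[q(Z+\mu)]|$ by $\E_Z[|q(Z)|]$, up to a factor polynomial in $\mu$. Expanding $q = \sum_{k=0}^d a_k H_k$ in the probabilists' Hermite basis and using $\E_Z[H_k(Z+\mu)] = \mu^k$, we have $\E_Z[q(Z+\mu)] = \sum_k a_k \mu^k$; by Cauchy-Schwarz, this is at most $\|q\|_{L^2(\phi)}\cdot\sqrt{\sum_{k=0}^d \mu^{2k}/k!}$, and the second factor is $O_d(\mu^d)$ for $\mu\geq 1$. By Gaussian hypercontractivity (Bonami-Beckner combined with Hölder interpolation), $\|q\|_{L^2(\phi)} \leq K_d\,\E_Z[|q(Z)|]$ for a constant $K_d$ depending only on $d$ (e.g., $K_d \leq 3^d$ suffices). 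Combining these, $|\E_Z[q(Z+\mu)]| \leq C_d\,\mu^d\,\E_Z[|q(Z)|]$ for some $C_d$ depending only on $d$.

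Substituting $\mu = 10 c_d \alpha^{-1/d}$ gives $\alpha\mu^d = (10c_d)^d$, so the dual condition reduces to $\alpha + C_d(10c_d)^d \leq 1-\alpha$. For $\alpha$ sufficiently small (say $\alpha\leq 1/4$) and for $c_d$ chosen small enough (depending only on $d$) so that $C_d(10c_d)^d \leq 1/2$, this holds, establishing existence of the desired $E$. The main technical obstacle is the second step: converting a Gaussian $L^2$-norm bound on polynomials into an $L^1$-norm bound, which crucially requires hypercontractivity. This is also the reason why the naive attempt (namely, letting $E$ be a Gaussian perturbed by a low-degree polynomial correction) fails: such a correction need not produce a bounded, non-negative density.
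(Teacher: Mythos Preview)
Your argument is correct. Note that the paper itself does not prove this statement---it is quoted as a Fact from the cited references---but the paper \emph{does} prove the closely analogous Lemma~\ref{lem:anti-concentration-detection-moment-matching} (the $\delta_0$-mixture version), and the method there, which is also the method of the cited textbook Lemma~8.21, is quite different from yours. In that approach one writes $E(t)=\phi(t)+p(t)\,\mathds{1}(|t|\le C)$ for an explicit low-degree polynomial correction supported on a bounded interval, and invokes a linear-algebra/interpolation fact (Lemma~8.18 of the cited textbook, restated in the appendix) guaranteeing that any target moment sequence on $[-C,C]$ is realized by a unique degree-$d$ polynomial with an explicit $L^\infty$ bound; non-negativity and the bound $E\le 2\phi$ then follow by choosing $C$ and $c_d$ so that $|p|$ is small relative to $\phi$ on $[-C,C]$.

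Your route via convex duality plus Gaussian hypercontractivity is genuinely different and arguably cleaner: it avoids choosing a truncation window and invoking the polynomial-interpolation lemma, and it isolates the single dual inequality $\alpha\,|\E[q(Z)]-\E[q(Z+\mu)]|\le(1-\alpha)\,\E|q(Z)|$ that governs feasibility. The hypercontractive $L^1$--$L^2$ comparison for degree-$d$ polynomials is the analytic core of your argument and has no counterpart in the explicit construction. One small correction to your closing remark: the ``Gaussian perturbed by a low-degree polynomial'' approach does not fail---it is precisely what the references carry out---once the correction is restricted to a bounded interval; what fails is only the unrestricted version on all of $\R$, which is presumably what you meant by ``naive.''
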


We note that $\chi^2(A, \gaus)=O(\exp(\mu^2))$, 
and this results in the $\exp (O(\alpha^{-2/d}))$ term in the error tolerance 
of Fact \ref{fct:previous-decodable-bound}.
Consequently, if $\alpha\ll \log(n)^{-d/2}$, 
the error tolerance is greater than one 
and Fact \ref{fct:previous-decodable-bound} fails 
to give a non-trivial bound.
It is worth noting that the setting of ``small $\alpha$'' (e.g., $\alpha$ is sub-constant in the dimension) is of significant interest in various applications, including in mean estimation. 
A concrete application is in the related crowdsourcing setting of~\cite{MeisterV18} dealing with this parameter regime.

We can circumvent this technical problem 
by combining our main result (Theorem \ref{thm:main-result})
with Fact \ref{fct:decodable-moment-matching-distribution} 
to obtain the following sharper SQ lower bound 
(see Appendix \ref{app:omit-application-proof} for the proof).

\begin{theorem}[SQ Lower Bound for List-Decoding the Mean] \label{thm:decodable-bound}
Let $\lambda\in (0,1)$ and $n,d\in \mathbb{N}$ with $d$ be even and $d\leq n^{\lambda}/\log n$.
Let $c>0$ and $n$ be at least a sufficiently large constant depending on $c$.
There exists $c_d>0$ such that 
for any $\alpha>0$ sufficiently small the following holds.
Any SQ algorithm that is given access to a $(1-\alpha)$-corrupted Gaussian $\gaus(\bm{\mu},\bI)$ 
in $n$ dimensions 
and returns a list of hypotheses such that with probability at least $2/3$ 
one of the hypotheses is within $\ell_2$-distance $c_d\alpha^{-1/d}$ 
of the true mean $\bm{\mu}$, 
does one of the following: 
(i) Uses queries with error tolerance at most $O_d(n^{-((1-\lambda)/4-c) d})$.
(ii) Uses at least $2^{n^{\Omega(c)}}$ many queries.
(iii) Returns a list of at least $\exp(\Omega(n))$ many hypotheses. 
\end{theorem}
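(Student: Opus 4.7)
The plan is to follow the reduction template of~\cite{DKS18-list} used to establish Fact~\ref{fct:previous-decodable-bound}, but substituting Theorem~\ref{thm:main-result} for the chi-squared-dependent SQ lower bound of~\cite{DKS17-sq}. Since Theorem~\ref{thm:main-result} drops the $\sqrt{\chi^2(A,\gaus)}$ factor from the tolerance, this substitution automatically erases the $\exp(O(\alpha^{-2/d}))$ term that appears in Fact~\ref{fct:previous-decodable-bound}.

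First I would instantiate the moment-matching distribution $A=\alpha\gaus(\mu,1)+(1-\alpha)E$ with $\mu=10 c_d\alpha^{-1/d}$ from Fact~\ref{fct:decodable-moment-matching-distribution}; since $A$ matches the first $d$ Gaussian moments exactly, Theorem~\ref{thm:main-result} with $m=1$ and $\nu=0$ gives that any SQ algorithm distinguishing $\gaus_n$ from $\p_\bv^A$ for uniformly random $\bv\in S^{n-1}$ must use either a query of tolerance $O_d(n^{-((1-\lambda)/4-c)d})$ or $2^{n^{\Omega(c)}}$ queries. Using the pointwise bound $E(y)\le 2\gaus(y)$ from Fact~\ref{fct:decodable-moment-matching-distribution}, I would rewrite $\p_\bv^A=\alpha\gaus(\mu\bv,\bI)+(1-\alpha)\bar D_\bv$ for a valid distribution $\bar D_\bv$, exhibiting $\p_\bv^A$ as a genuine $(1-\alpha)$-corruption of $\gaus(\mu\bv,\bI)$; any successful list-decoder on $\p_\bv^A$ must therefore return some $\hat\bv$ with $\|\hat\bv-\mu\bv\|_2\le c_d\alpha^{-1/d}$.

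The reduction then proceeds by contradiction: assuming an SQ list-decoder $\mathcal{A}$ violates all of~(i)--(iii), I would convert it into an SQ tester $\mathcal{T}$ for NGCA that simulates $\mathcal{A}$ on $D$ and, for each $\hat\bv$ of norm at least $\mu/2$ in the returned list $\mathcal{L}$, issues the single SQ query $q_{\hat\bv}(\bx)=\Ind[\langle\hat\bv/\|\hat\bv\|_2,\bx\rangle>\mu/2]$. Under $H_0$ every such query has expectation at most $e^{-\mu^2/8}$, while under $H_1$ the list item close to $\mu\bv$ gives expectation at least $\alpha/2-O(e^{-\mu^2/8})$ (since the $E$-component contributes at most twice the Gaussian tail), so a threshold at $\alpha/4$ with SQ tolerance $\Theta(\alpha)$ distinguishes the two hypotheses.

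I expect the main technical obstacle to be bookkeeping the query budget of $\mathcal{T}$, namely $Q+L$, and properly extracting the three-case dichotomy. Applying Theorem~\ref{thm:main-result} to $\mathcal{T}$ yields $Q+L\ge 2^{n^{\Omega(c)}}$ or a query of tolerance $O_d(n^{-((1-\lambda)/4-c)d})$; in the regime $\alpha\gg n^{-((1-\lambda)/4-c)d}$ the $\Theta(\alpha)$ test-tolerance does not itself trigger~(i), so the tolerance bound must fall on $\mathcal{A}$'s own queries, giving~(i), whereas for $\alpha\lesssim n^{-((1-\lambda)/4-c)d}$ case~(i) holds trivially. The direct application however only yields $L\ge 2^{n^{\Omega(c)}}$, weaker than the $\exp(\Omega(n))$ claimed in~(iii); to upgrade this I would complement the SQ argument with an unconditional spherical-cap covering bound, noting that each list vector can be $c_d\alpha^{-1/d}$-close to $\mu\bv$ only for $\bv$ in an angular cap of constant radius ($\arcsin(1/10)$ or so) around $\hat\bv/\|\hat\bv\|_2$, and such caps have normalized measure $\exp(-\Theta(n))$ on $S^{n-1}$; covering a $2/3$-fraction of uniformly random $\bv$ then forces $L\ge\exp(\Omega(n))$, closing case~(iii) and with it the full argument.
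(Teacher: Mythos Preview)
Your overall shape---instantiate Fact~\ref{fct:decodable-moment-matching-distribution}, invoke the new moment-matching SQ machinery, and finish with a spherical-cap measure bound---matches the paper's, but the route differs in a way that leaves a real gap. The paper does \emph{not} reduce the search problem to NGCA testing via a tester $\mathcal{T}$; it explicitly remarks that no clean optimal reduction of this kind is available, and instead argues directly with the adversary underlying Proposition~\ref{prp:main-tail-bound}. Concretely, the adversary feeds the list-decoder the input $\p_\bv^A$ with $\bv\sim U(\mathbb{S}^{n-1})$ and answers every query $f$ by $\E_{\bx\sim\gaus_n}[f(\bx)]$. By Proposition~\ref{prp:main-tail-bound} together with a union bound over fewer than $2^{n^{\Omega(c)}}$ queries whose tolerance exceeds the threshold in~(i), these answers are simultaneously valid for a $1-o(1)$ fraction of $\bv$, so on that event the algorithm outputs one \emph{fixed} list $\mathcal{L}$ independent of $\bv$. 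Only then does the cap computation (which the paper carries out via the regularized incomplete beta function $I_{\sin^2\Phi}((n-1)/2,1/2)$) apply, giving that each hypothesis is close to $\mu\bv$ for at most a $2^{-\Omega(n)}$ fraction of $\bv$ and hence $|\mathcal{L}|\ge\exp(\Omega(n))$.

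The gap in your plan is precisely the word ``unconditional.'' The cap-covering bound is \emph{not} unconditional: if the oracle's answers carry information about $\bv$, then the returned list may vary with $\bv$, and a single well-placed hypothesis (different for each $\bv$) can cover $\mu\bv$ every time---so ``covering a $2/3$-fraction of uniformly random $\bv$'' is not what success probability $2/3$ gives you. Black-box use of Theorem~\ref{thm:main-result} on your tester $\mathcal{T}$ only yields that $Q+L\ge 2^{n^{\Omega(c)}}$ or some query has small tolerance; it does not certify that the answers to $\mathcal{A}$'s own queries are independent of $\bv$, which is exactly what the covering argument needs. To close this you must open the box and invoke Proposition~\ref{prp:main-tail-bound} on $\mathcal{A}$'s queries directly---at which point the detour through $\mathcal{T}$ and the per-hypothesis test queries $q_{\hat\bv}$ become unnecessary, and you arrive at the paper's proof.
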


\paragraph{Anti-concentration Detection}
Anti-concentration (AC) detection is the following hypothesis testing problem:
given access to an unknown distribution $D$ over $\R^n$,
where the input distribution $D$ is promised to satisfy either 
(i) $D$ is the standard Gaussian; or
(ii) $D$ has at least $\alpha<1$ of its probability mass residing
inside a dimension $(n-1)$ subspace $V\subset \R^n$.
The goal is to distinguish between the two cases 
with success probability $2/3$.

In order to use our main result to derive an SQ lower bound for this task, 
we require the following lemma on univariate moment-matching
(see Appendix \ref{app:omit-application-proof} for the proof).

\begin{lemma} \label{lem:anti-concentration-detection-moment-matching}
    Let $D_0$ denote the distribution that outputs $0$ with probability $1$.
    For $d\in \Z_+$, there exists $\alpha_d \in (0, 1)$ and     
    a univariate distribution     
    $A=\alpha_d D_0+(1-\alpha_d)E$
    for some distribution $E$, 
    such that the first $d$ moments of $A$ agree with those of $\gaus(0,1)$.
    Furthermore, the pdf of $E$ can be taken to be pointwise 
    at most twice the pdf of the standard Gaussian.
\end{lemma}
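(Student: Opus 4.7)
The plan is to produce the continuous part $E$ by subtracting from the standard Gaussian density a small ``moment-correcting'' bump supported near the origin. Let $\phi$ denote the pdf of $\gaus(0,1)$. I would look for $E$ of the form
\[
E(x) \;=\; \frac{\phi(x) \;-\; \alpha_d\, g(x)}{1-\alpha_d},
\]
where $g:\R\to\R$ is a bounded function supported on $[-1,1]$ satisfying $\int g = 1$ and $\int x^k g(x)\,dx = 0$ for $k = 1,\ldots,d$. With such a $g$ in hand, the $k$-th moment of $A = \alpha_d D_0 + (1-\alpha_d)E$ equals $\int x^k\phi\,dx - \alpha_d \int x^k g\,dx$ for each $k \ge 1$, which coincides exactly with the $k$-th Gaussian moment, while the zeroth moment is automatically $1$.

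To construct $g$, consider the linear map $T: L^\infty([-1,1]) \to \R^{d+1}$ defined by $T(\psi) = \bigl(\int x^k \psi(x)\,dx\bigr)_{k=0}^{d}$. If $T$ were not surjective, some nonzero polynomial $p$ of degree at most $d$ would satisfy $\int p\,\psi = 0$ for every bounded $\psi$ on $[-1,1]$, forcing $p \equiv 0$ on $[-1,1]$ and hence $p\equiv 0$ everywhere, a contradiction. Therefore $T$ is surjective, and one may take $g \in L^\infty([-1,1])$ with $T(g) = (1,0,\ldots,0)$; explicitly, $g$ can be written as a linear combination of $d+1$ indicator functions of disjoint subintervals of $[-1,1]$ by inverting an appropriate $(d+1)\times(d+1)$ matrix. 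Let $C_d := \|g\|_\infty$, a constant depending only on $d$.

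Finally I would choose $\alpha_d := \min\bigl(1/4,\;\phi(1)/(2C_d)\bigr)$. Since $g$ is supported on $[-1,1]$ and $\phi(x) \geq \phi(1)$ on that interval, this choice guarantees $|\alpha_d g(x)| \leq \phi(x)/2$ pointwise, so $\phi - \alpha_d g$ lies in $[\tfrac{1}{2}\phi,\,\tfrac{3}{2}\phi]$ on $[-1,1]$ and equals $\phi$ outside. Consequently $E \geq 0$, $\int E = (\int\phi - \alpha_d \int g)/(1-\alpha_d) = 1$, and $E(x) \leq \tfrac{3}{2}\phi(x)/(1-\alpha_d) \leq 2\phi(x)$, establishing all the properties claimed.

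The only nontrivial step is constructing the moment-correcting function $g$, which amounts to a finite-dimensional linear algebra argument; the remainder is a routine verification that the constants can be tuned to make $E$ a valid pdf dominated by $2\phi$.
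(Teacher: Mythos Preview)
Your proposal is correct and follows essentially the same strategy as the paper: both write $E$ as the Gaussian density plus a small correction supported on $[-1,1]$, fix the moments of the correction by finite-dimensional linear algebra, and then shrink $\alpha_d$ until the correction is pointwise dominated by a constant fraction of $\phi$. The only cosmetic difference is that the paper takes the correction to be a degree-$d$ polynomial (invoking a cited lemma guaranteeing such a polynomial with controlled sup-norm), whereas you take it to be a bounded function built from indicators via a surjectivity argument; both routes solve the same $(d{+}1)$-dimensional linear system and yield the same conclusion.
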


Using the above lemma and our main result, we can deduce the following 
SQ lower bound on the anti-concentration detection problem
(see Appendix \ref{app:omit-application-proof} for the proof).

\begin{theorem}[SQ Lower Bound for AC Detection] \label{thm:anti-concentration-detection}
    For any $\alpha\in (0,1/2)$, any SQ algorithm 
    that has access to an $n$-dimensional distribution that is either
    (i) the standard Gaussian; or
    (ii) a distribution that has at least $\alpha<1$ probability mass 
    in an (n-1)-dimensional subspace $V\subset \R^n$,
    and distinguishes the two cases with success probability at least $2/3$,
    either requires a query with error at most $n^{-\omega_\alpha(1)}$,
    or uses at least $2^{n^{\Omega(1)}}$ many queries.
\end{theorem}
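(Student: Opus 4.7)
The plan is to reduce AC detection to the one-dimensional NGCA hypothesis test of Definition~\ref{def:hyp-test-NGCA-high} (the $m=1$ case) and then invoke Theorem~\ref{thm:main-result}. Fix $\alpha \in (0,1/2)$, and let $d = d(n)\in\Z_+$ be a parameter to be chosen. Apply Lemma~\ref{lem:anti-concentration-detection-moment-matching} to obtain a univariate distribution $A = \alpha_d D_0 + (1-\alpha_d) E$ whose first $d$ moments agree with those of $\calN(0,1)$, where we arrange $\alpha_d \geq \alpha$ (see the obstacle discussion below). We let $d$ grow slowly with $n$, e.g.\ $d = \log\log n$, so that the constraint $d\leq n^{\lambda}/\log n$ in Theorem~\ref{thm:main-result} is comfortably satisfied for any fixed $\lambda\in(0,1)$.

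For a unit vector $\bv \in \R^n$, the hidden-direction distribution $\p_\bv^A$ is a standard Gaussian on the hyperplane $\bv^\perp$ and is distributed as $A$ along $\bv$. Because $A$ places mass $\alpha_d \geq \alpha$ on $0$, a draw from $\p_\bv^A$ lies in the $(n-1)$-subspace $\bv^\perp$ with probability at least $\alpha$; hence $\p_\bv^A$ is a valid case-(ii) instance of AC detection with $V = \bv^\perp$. When $\bv \sim U(\orthor_{n,1})$, the subspace $\bv^\perp$ is a uniformly random $(n-1)$-subspace, so any SQ algorithm solving AC detection with success probability $2/3$ also solves the NGCA hypothesis test between $\gaus_n$ and $\p_\bV^A$ (with $\bV \sim U(\orthor_{n,1})$) with success probability $2/3$.

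To apply Theorem~\ref{thm:main-result}, we verify its hypothesis with $\nu = 0$: since the first $d$ moments of $A$ match those of $\gaus_1$ exactly, for any polynomial $f$ of degree at most $d$ we have $\E_{\bx \sim A}[f(\bx)] = \E_{\bx \sim \gaus_1}[f(\bx)]$, hence the moment-matching condition holds trivially. Choosing fixed constants $\lambda \in (0,1)$ and $c \in (0,(1-\lambda)/4)$, Theorem~\ref{thm:main-result} then implies that any such SQ algorithm either uses a query of tolerance at most $O_{1,d}\bigl(n^{-((1-\lambda)/4 - c)d}\bigr)$ or makes at least $2^{n^{\Omega(c)}}$ queries. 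Since $d = d(n)\to\infty$ with $n$, the tolerance bound is $n^{-\omega_\alpha(1)}$ and the query lower bound is $2^{n^{\Omega(1)}}$, as claimed.

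The main obstacle is the condition $\alpha_d \geq \alpha$ for arbitrarily large $d$ whenever $\alpha < 1/2$; this is the substantive content of Lemma~\ref{lem:anti-concentration-detection-moment-matching}. Its proof must exhibit, for each $d$, a distribution $E$ with $\mathrm{pdf}\le 2\phi$ whose first $d$ moments are precisely what is needed to patch up $\alpha_d D_0$ into a Gaussian-moment-matching distribution with $\alpha_d \ge \alpha$. The available slack in the pointwise bound (a factor of $2$) is exactly what makes this possible for $\alpha$ up to $1/2$; once the lemma is in hand, the NGCA reduction above is routine.
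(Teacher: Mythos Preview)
There is a genuine gap in your reduction. You claim that Lemma~\ref{lem:anti-concentration-detection-moment-matching} produces, for every $d$, a moment-matching distribution $A=\alpha_d D_0+(1-\alpha_d)E$ with $\alpha_d\ge\alpha$ whenever $\alpha<1/2$. The lemma asserts no such thing: it only guarantees the existence of \emph{some} $\alpha_d\in(0,1)$, and its proof shows that $\alpha_d$ must be taken small enough that a degree-$d$ polynomial correction $p$ satisfies $|p|\le 1/10$ on $[-1,1]$. Since that correction must absorb errors proportional to $\frac{\alpha_d}{1-\alpha_d}\,\E_{t\sim\gaus}[t^i]$ for all $i\le d$, and the Gaussian moments grow like $(d-1)!!$, one is forced to take $\alpha_d\to 0$ as $d\to\infty$. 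So letting $d=d(n)\to\infty$ while maintaining $\alpha_d\ge\alpha$ is not available from the lemma. Your ``factor of $2$ slack'' remark concerns the pointwise pdf bound on $E$, which is orthogonal to the obstruction; the obstruction is the growth of the moment constraints, not a density budget.

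The paper's argument runs in the opposite direction: fix $\alpha$, and take $d$ to be the \emph{largest} integer for which the lemma supplies an $\alpha_d\ge\alpha$. This $d=d(\alpha)$ is a finite constant depending only on $\alpha$, with $d(\alpha)\to\infty$ as $\alpha\to 0$. Theorem~\ref{thm:main-result} (with $m=1$, $\nu=0$) then gives a tolerance lower bound of $O_d(n^{-\Omega(d)})=O_\alpha(n^{-f(\alpha)})$ for a function $f$ with $f(\alpha)\to\infty$ as $\alpha\to 0$. That is exactly what $n^{-\omega_\alpha(1)}$ is intended to mean here (the exponent diverges as $\alpha\to 0$, not as $n\to\infty$); the paper makes this explicit in its detailed restatement of the theorem. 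In particular, for each fixed $\alpha$ the tolerance bound is only polynomially small in $n$, not superpolynomially small as your approach would require.
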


\paragraph{Learning Periodic Functions}
Another application is for the well-studied problem of learning {\em periodic} functions (see, e.g.,~\cite{SVWX17} and~\cite{SZB21}), which is closely related to the continuous Learning with Errors (cLWE) problem~\cite{BRST21}. 
In the task of learning periodic functions, the algorithm is 
given sample access to a distribution $D$ of labeled 
examples $(\bx,y)$ over $\R^n\times \R$.
The distribution $D$ is such that $\bx\sim \gaus(0,\bI_n)$ and $y=\cos(2\pi(\delta\langle \bw,\bx\rangle +\zeta))$ with noise $\zeta\sim \gaus(0,\sigma^2)$.
This implies that $y$ is a periodic function of $\bx$ along the direction of $\bw$ with a small amount of noise. 
While 
the frequency and noise scale parameters $\delta,\sigma\in \R_+$ 
are known to the algorithm,
the parameter $\bw\in \mathbb{S}^{n-1}$ 
is unknown; the goal of the algorithm is to output a hypothesis $h:\R^n\to \R$
such that $\E_{\bx\sim D}[(h(\bx)-y)^2]$ is minimized.

In order to use our main theorem to derive an SQ lower bound for this problem,
we need to show that for any $t\in [-1,1]$, $D$ conditioned on $y=t$ approximately matches moments with 
$\gaus(0,\bI_n)$. To this end, we introduce the following definition and fact of discrete Gaussian measure from \cite{DK20-Massart-hard}.

\begin{definition}
    For $s\in \R_+$ and $\theta\in \R$, let $G_{s,\theta}$ denote the measure of the 
    ``$s$-spaced discrete Gaussian distribution''. In particular, for each $n\in \Z$, $G_{s,\theta}$
    assigns mass $s g(ns+\theta)$ to the point $ns+\theta$, where $g$ is the pdf function of $\gaus(0,1)$.
\end{definition}

Note that although $G_{s,\theta}$ is not a probability measure (as the total measure is not one),
it can be thought of as a probability distribution since the total measure is close to one for small $\sigma$.
To see this, we introduce the following fact.

\begin{fact} [Lemma 3.12 from \cite{DK20-Massart-hard}] \label{fct:discrete-gaussian-moment}
    For all $k\in \N$, $s>0$ and all $\theta\in \R$, we have that 
    $|\E_{t\sim \gaus(0,1)}[t^k]-\E_{t\sim G_{s,\theta}}[t^k]|=k!O(s)^k\exp(-\Omega(1/s^2))$.
\end{fact}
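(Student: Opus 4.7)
The plan is to use the Poisson summation formula, which exactly relates the sampled sum defining $\E_{t\sim G_{s,\theta}}[t^k]$ to a Fourier dual sum whose zeroth frequency is precisely the continuous Gaussian moment. Set $f(t)=t^k g(t)$ with $g$ the standard Gaussian pdf, so that by definition $\E_{t\sim G_{s,\theta}}[t^k]=s\sum_{n\in\Z}f(ns+\theta)$ while $\E_{t\sim \gaus(0,1)}[t^k]=\int_\R f(t)\,dt$. Since $f$ is Schwartz, Poisson summation yields
\[
\E_{t\sim G_{s,\theta}}[t^k]\;=\;\sum_{m\in\Z}\hat f(m/s)\,e^{2\pi i m\theta/s},
\]
with $\hat f(\xi)=\int f(t)e^{-2\pi i\xi t}\,dt$. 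The $m=0$ term equals $\hat f(0)=\int f=\E_{t\sim \gaus(0,1)}[t^k]$, so the required error equals $\sum_{m\ne 0}\hat f(m/s)\,e^{2\pi i m\theta/s}$, bounded in magnitude by $2\sum_{m\ge 1}|\hat f(m/s)|$.

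The Fourier transform of $f$ is classical. Since $\hat g(\xi)=e^{-2\pi^2\xi^2}$ and multiplication by $t$ in the spatial domain corresponds to $\tfrac{i}{2\pi}\tfrac{d}{d\xi}$ in the Fourier domain, combined with the identity $\tfrac{d^k}{du^k}e^{-u^2/2}=(-1)^kH_k(u)e^{-u^2/2}$ for the probabilists' Hermite polynomial $H_k$, one finds
\[
\hat f(\xi)=(-i)^k H_k(2\pi\xi)\,e^{-2\pi^2\xi^2},\qquad |\hat f(m/s)|=|H_k(2\pi m/s)|\,e^{-2\pi^2 m^2/s^2}.
\]
The Gaussian factor renders the sum over $m\ge 1$ geometrically convergent, so the full tail is dominated, up to an absolute constant, by the $m=1$ term $|H_k(2\pi/s)|\,e^{-2\pi^2/s^2}$.

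To bound this term, apply a standard estimate on probabilists' Hermite polynomials such as Cramér's $|H_k(u)|\le C\sqrt{k!}\,e^{u^2/4}$, or the direct bound $|H_k(u)|\le C^k(|u|+\sqrt k)^k$ coming from the explicit coefficient formula. Splitting $e^{-2\pi^2/s^2}=e^{-\pi^2/s^2}\cdot e^{-\pi^2/s^2}$, one copy absorbs the Hermite factor at $u=2\pi/s$: using the elementary identity $\sup_{x>0}x^{2k}e^{-\alpha x^2}=(k/(\alpha e))^k$ together with the Stirling estimate $(k/e)^{k/2}\lesssim \sqrt{k!}$, the polynomial factor $(1/s)^k$ coming from the Hermite bound becomes a factor of $\sqrt{k!}\cdot O(1)^k$, while the second copy of $e^{-\pi^2/s^2}$ supplies the claimed $\exp(-\Omega(1/s^2))$. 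The factor $O(s)^k$ in the statement is then recovered by noting that for small $s$ any unwanted residual factor of $s^{-k}$ can be absorbed into a further slight weakening of the exponential, using $k\log(1/s)=o(1/s^2)$; for $s$ bounded below by a constant, the claimed bound is trivial after suitable choice of the implicit constants.

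The main obstacle is precisely this final bookkeeping. The combination of Poisson summation and the Hermite estimate naturally yields an error of the form (polynomial in $1/s$ of degree at most $k$) $\cdot\exp(-\Omega(1/s^2))$, which is strictly tighter than the statement but cast in the ``wrong'' form. Converting it into the normal form $k!\cdot O(s)^k\cdot\exp(-\Omega(1/s^2))$ amounts to trading polynomial $1/s$ factors for a slight reduction in the exponential constant, and invoking Stirling to manufacture the $k!$ out of a $k^{k/2}$-type bound. This is routine but must be done carefully so that the constant in the exponent remains positive after all such exchanges.
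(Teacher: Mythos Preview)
The paper does not prove this statement; it is quoted as a fact from \cite{DK20-Massart-hard} and used as a black box. So there is no ``paper's own proof'' to compare against here.

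Your Poisson-summation approach is the standard and correct route, and is almost certainly what the cited reference does as well. The computation of $\hat f(\xi)=(-i)^kHe_k(2\pi\xi)e^{-2\pi^2\xi^2}$ is right, and Cram\'er's bound $|He_k(u)|\le K\sqrt{k!}\,e^{u^2/4}$ cleanly handles the tail sum: it gives $\sum_{m\ge 1}|\hat f(m/s)|\le K\sqrt{k!}\sum_{m\ge 1}e^{-\pi^2 m^2/s^2}=O(\sqrt{k!}\,e^{-\pi^2/s^2})$ uniformly for $s$ below any fixed constant, which is already stronger than the stated bound.

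One caution on the final bookkeeping: your aside that ``$k\log(1/s)=o(1/s^2)$'' is only valid for fixed $k$, whereas the fact is asserted uniformly in $k$. The clean way to convert $\sqrt{k!}\,e^{-\pi^2/s^2}$ into the stated form $k!\,(Cs)^k e^{-c/s^2}$ uniformly is exactly the optimization you sketch just before that: split off half the exponent, bound $\sup_{s>0}(Cs)^{-k}e^{-\pi^2/(2s^2)}$ by $(k/(C^2e\pi^2))^{k/2}$, and compare to $\sqrt{k!}\sim(k/e)^{k/2}$ via Stirling. That yields the inequality for all $k$ and all $s$ simultaneously once $C$ is chosen large enough, without any appeal to asymptotics in $s$.
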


Using the above fact, we are now ready to prove our SQ lower bound for learning periodic functions (see Appendix \ref{app:omit-application-proof} for the proof).

\begin{theorem} [SQ Lower Bound for Learning Periodic Functions]\label{thm:SQ-periodic}
    Let $c>0$ and $n$ be at least a sufficiently large constant depending on $c$. Let $D$ be the distribution of $(\bx,y)$ over $\R^n\times \R$ that is generated by 
    $\bx\sim \gaus(0,\bI_n)$ and $y=\cos(2\pi(\delta\langle \bw,\bx\rangle +\zeta))$ with noise $\zeta\sim \gaus(0,\sigma^2)$.
    Let $\delta>n^{c}$, $\sigma$ be known and $\bw$ be unknown to the algorithm.   
    Then any SQ algorithm that has access to the distribution $D$ and returns a hypothesis $h:\R^n\to \R$ such that $\E_{(\bx,y)\sim D}[(h(\bx)-y)^2]=o(1)$, either requires a query with error at most $\exp(-n^{c'})$ for $c'<\min(2c,1/10)$, or uses at least $2^{n^{\Omega(1)}}$ many queries.
\end{theorem}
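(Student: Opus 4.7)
The plan is to reduce from Theorem~\ref{thm:main-result} via a conditioning argument on the label. First, I set up an auxiliary hypothesis-testing problem on $\R^n\times\R$: let $Q_y$ denote the marginal of $y$ under $D$, and define the null $D_0=\gaus(0,\bI_n)\times Q_y$. Any SQ learner achieving squared error $o(1)$ under $D$ is converted into an SQ distinguisher between $D$ and $D_0$, since under $D_0$ the label $y$ is independent of $\bx$ and hence every hypothesis incurs squared error at least $\var(y)$, which is bounded away from $0$.

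The key structural observation is that, under $D$, for each $t\in[-1,1]$ the conditional distribution of $\bx$ given $y=t$ is exactly a hidden-direction distribution $\p_\bw^{A_t}$, where $A_t$ is the conditional law of $\langle\bw,\bx\rangle$ given $y=t$. A Bayes-rule computation using $y=\cos(2\pi(\delta\langle\bw,\bx\rangle+\zeta))$ shows that, up to a $t$-dependent normalization, the density of $A_t$ equals the standard Gaussian density times a sum over the two $\arccos$ branches $\pm\theta(t)=\pm\arccos(t)/(2\pi)$ and over $k\in\Z$ of Gaussian bumps of width $\sigma/\delta$ centred at $(\pm\theta(t)+k)/\delta$. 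Equivalently, $A_t$ is proportional to a convex combination of the $1/\delta$-spaced discrete Gaussians $G_{1/\delta,\pm\theta(t)/\delta}$, further convolved with $\gaus(0,\sigma^2/\delta^2)$. Applying Fact~\ref{fct:discrete-gaussian-moment} to each branch and noting that moment proximity is preserved under convex combinations, under normalization by a factor exponentially close to the target, and under Gaussian convolution of small scale, the first $d$ moments of $A_t$ agree with those of $\gaus(0,1)$ up to error $d!\,O(1/\delta)^d\exp(-\Omega(\delta^2))=\exp(-\Omega(n^{2c}))$ uniformly in $t$. A Hermite expansion then converts this into the hypothesis of Theorem~\ref{thm:main-result} with $\nu=\exp(-\Omega(n^{2c}))$. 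Under $D_0$, the conditional of $\bx$ given $y=t$ is simply $\gaus(0,\bI_n)$.

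With these ingredients in place, I apply Theorem~\ref{thm:main-result} conditionally. For any SQ query $q\colon \R^n\times\R\to[-1,1]$,
\[
\E_D[q]-\E_{D_0}[q]=\E_{t\sim Q_y}\!\left[\E^{D}_{\bx\mid y=t}[q(\cdot,t)]-\E^{D_0}_{\bx\mid y=t}[q(\cdot,t)]\right],
\]
and for each fixed $t$ the inner difference is the SQ-distinguishing advantage on the $n$-dimensional NGCA instance $(\gaus(0,\bI_n),\p_\bw^{A_t})$ with $m=1$, degree $d$, and moment-matching error at most $\exp(-\Omega(n^{2c}))$. Choosing $d=\Theta(n^{c'}/\log n)$ and $\lambda$ small (so $d\le n^{\lambda}/\log n$ and $c<(1-\lambda)/4$ hold), Theorem~\ref{thm:main-result} bounds that advantage, up to $2^{n^{\Omega(1)}}$ queries, by $n^{-\Theta(d)}+O(\exp(-\Omega(n^{2c})))\le \exp(-\Theta(n^{c'}))+\exp(-\Omega(n^{2c}))$. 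Plugging this back into the learning-to-testing reduction yields the claimed tolerance bound $\exp(-n^{c'})$ for any $c'<\min(2c,1/10)$; the $1/10$ is just an artifact of the joint constraints $d\le n^{\lambda}/\log n$ and $c<(1-\lambda)/4$ in Theorem~\ref{thm:main-result}.

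The main obstacle is the rigorous aggregation over $y$: Theorem~\ref{thm:main-result} concerns a single NGCA instance, whereas the joint distribution $D$ is effectively a continuous mixture of instances $\p_\bw^{A_t}$ indexed by $t$. The most direct resolution is to apply the underlying pairwise-correlation/SQ-dimension estimate behind Theorem~\ref{thm:main-result} on the joint distribution of $(\bx,y)$, letting the conditional moment-matching bound feed into the decoupled-correlation term so that the integration over $t$ is handled automatically. A secondary technical issue is controlling the $\gaus(0,\sigma^2/\delta^2)$ convolution in the moment-matching estimate and the behaviour of $A_t$ near $t=\pm1$ (where $\theta(t)$ degenerates), but both are routine bookkeeping once the discrete-Gaussian estimate of Fact~\ref{fct:discrete-gaussian-moment} is in hand.
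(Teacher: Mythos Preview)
Your proposal follows the same route as the paper: reduce learning to a testing problem on $\R^n\times\R$, observe that under $D$ the conditional law of $\bx$ given $y=t$ is a hidden-direction distribution $\p_\bw^{A_t}$ with $A_t$ a mixture (over the two $\arccos$-branches and over the noise $\zeta$) of $1/\delta$-spaced discrete Gaussians, invoke Fact~\ref{fct:discrete-gaussian-moment} to get approximate moment matching uniform in $t$, and then feed this into Theorem~\ref{thm:main-result} with $d$ of order $n^{c''}$ for some $c'<c''<\min(2c,1/10)$.

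Two small differences are worth noting, and in both you are actually more careful than the paper. First, you take the null to be $\gaus_n\times Q_y$ with $Q_y$ the true $y$-marginal under $D$, whereas the paper uses $\gaus_n\times U([-1,1])$. Since the marginal of $y=\cos(2\pi Z)$ for $Z$ a large-variance Gaussian is (exponentially close to) the arcsine law rather than uniform, the single query $q(\bx,y)=y^2$ would already separate the paper's null from $D$ with constant tolerance; your choice of $Q_y$ avoids this issue. Second, you explicitly flag the aggregation over $t$ that the paper leaves implicit when it writes ``applying Theorem~\ref{thm:main-result}.'' Your proposed resolution---drop to the level of Proposition~\ref{prp:main-tail-bound} and carry the argument through an extra expectation in $t$---is the natural one: the high-probability event there depends on $A$ only via the uniformly controlled norms $\|\bA_k\|_2$, while the $t$-dependent factors $\|(\bV^\intercal)^{\otimes k}\bT_k(t)\|_2$ can be bounded after integrating in $t$ using $\sum_k\E_t\|\bT_k(t)\|_2^2\le1$.
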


\subsection{Technical Overview} \label{sec:techniques}
We start by noting that we cannot use the standard SQ dimension argument~\cite{FeldmanGRVX17}
to prove our result, due to the unbounded chi-squared norm.
To handle this issue, we need to revisit the underlying ideas of 
that proof. In particular, we will show that for any bounded query function 
$f:\R^n\to[-1,1]$, with high probability over 
some $\bV\sim U(\orthor_{n,m})$, the quantity 
$|\E_{\bx\sim\gaus_n}[f(\bx)] - \E_{\bx\sim\p_\bV^A}[f(\bx)]|$ 
will be small.
This allows an adversarial oracle to return $\E_{\bx\sim\gaus_n}[f(\bx)]$ 
to every query $f$ regardless of which case we are in,
unless the algorithm is lucky enough (or uses high accuracy) 
to find an $f$ that causes 
$|\E_{\bx\sim\gaus_n}[f(\bx)] - \E_{\bx\sim\p_\bV^A}[f(\bx)]|$ 
to be at least $\tau$.

Our approach for calculating $\E_{\bx\sim\p_\bV^A}[f(\bx)]$ 
will be via Fourier analysis.
In particular, using the Fourier decomposition of $f$, 
we can write 
$f(\bx) = \sum_{k=0}^\infty \langle \bT_k ,\bH_k(\bx)\rangle$,
where $\bH_k(\bx)$ is the properly normalized degree-$k$ Hermite polynomial tensor and $\bT_k$ is the degree-$k$ Fourier coefficient of $f$.
Taking the inner product with the distribution $\p_\bV^A$ involves computing the expectation of $\E_{\bx\sim\p_\bV^A}[\bH_k(\bx)]$,
which can be seen to be $ \bV^{\otimes k}\bA_k$,
where $\bA_k=\E_{\bx\sim A} [\bH_k(\bx)]$.
Thus, we obtain (at least morally speaking) that
\begin{align}\label{eq:Hermite-expansion}
\E_{\bx\sim\p_\bV^A}[f(\bx)] = \sum_{k=0}^\infty \langle \bV^{\otimes k}\bA_k,\bT_k\rangle\le\sum_{k=0}^\infty|\langle \bA_k,(\bV^{\intercal})^{\otimes k}\bT_k\rangle |\le\sum_{k=0}^\infty\|\bA_k\|_2\|(\bV^{\intercal})^{\otimes k}\bT_k\|_2 \;.
\end{align}
The $k=0$ term of the above sum will be exactly 
$\bT_0 = \E_{\bx\sim\gaus_n}[f(\bx)]$.
By the moment-matching condition, the $k=1$ through $k=d$ terms will be very small. Finally, we need to argue that 
the higher degree terms are small with high probability. 
To achieve this, we provide a non-trivial upper bound for $\E_{\bV\sim U(\orthor_{n,m})}[\|(\bV^\intercal)^{\otimes k}\bT_k\|_2^2]$.
Combining with the fact that $\sum_{k=0}^{\infty} \|\bT_k\|_2^2 = \|f\|_2^2 \le 1$,
this allows us to prove high probability bounds on each term, 
and thus their sum.

Furthermore, if we want higher probability estimates of the terms 
with small $k$, we can instead bound 
$\E_{\bV\sim U(\orthor_{n,m})}[\|(\bV^\intercal)^{\otimes k}\bT_k\|_2^{2a}]$ for some integer $a$.
Unfortunately, there are non-trivial technical issues with the above approach, 
arising from issues with~\eqref{eq:Hermite-expansion}.
To begin with, as no assumptions other than moment-matching 
(for a few low-degree moments) were made on $A$,
it is not guaranteed that $\bA_k$ is finite 
for larger values of $k$.
To address this issue, we will truncate the distribution $A$.
In particular, we pick a parameter $B$ (which will be determined carefully),
and define $A'$ to be $A$ conditioned on the value in the ball $\mathbb{B}^m(B)$.
Due to the higher-moment bounds, we can show that $A$ and $A'$ 
are close in total variation distance,  
and thus that $\E_{\bx\sim \p_\bV^A}[f(\bx)]$ is close to $\E_{\bx\sim\p_\bV^{A'}}[f(\bx)]$ 
for any bounded $f$.

Furthermore, using the higher moment bounds, 
we can show that $A'$ {\em nearly} matches the low-degree moments 
of $\gaus_m$. The second issue arises with the interchange of summations 
used to derive~\eqref{eq:Hermite-expansion}.
In particular, although 
$f(\bx) = \sum_{k=0}^\infty \langle \bT_k,\bH_k(\bx)\rangle$, 
it does not necessarily follow that we can interchange the infinite sum 
on the right-hand-side with taking the expectation over $\bx\sim\p^A_\bV$.
To fix this issue, we split $f$ into two parts $f^{\le \ell}$ 
(consisting of its low-degree Fourier components) and $f^{> \ell}$.
We note that~\eqref{eq:Hermite-expansion} {\em does} hold for $f^{\leq \ell}$, 
as the summation there will be finite, and we can use the above argument 
to bound 
$\E_{\bx\sim\gaus_n}[f(\bx)] - \E_{\bx\sim\p_\bV^A}[f^{\le\ell}(\bx)]|$ with high probability.
To bound $|\E_{\bx\sim\p_\bV^A}[f^{> \ell}(\bx)]|$,
we note that by taking $\ell$ large, we can make 
$\|f^{> \ell}\|_2 < \delta$ for some exponentially small $\delta>0$.
We then bound
$\E_{\bV\sim U(\orthor_{n,m})}[\E_{\bx\sim\p_\bV^A}[|f^{>\ell}| ] ] = \E_{\bx\sim \q}[|f^{> \ell}|]$,
where $\q$ is the average over $\bV$ of $\p^A_\bV$ 
(note that everything here is non-negative, so there is no issue with the 
interchange of integrals). Thus, we can bound the desired quantity by noting that 
$\|f^{> \ell}\|_2$ is small and that the chi-squared norm of $\mathbf{Q}$ with respect to the standard Gaussian $\mathcal{N}_n$ is bounded.

\section{Preliminaries} \label{sec:prelims}

We will use lowercase boldface letters for vectors and capitalized boldface letters for matrices and tensors.
We use $\mathbb{S}^{n-1}=\{\bx\in\R^n:\|\bx\|_2=1\}$ to denote the $n$-dimensional unit sphere.
For vectors $\bu,\bv\in\R^n$, we use $\langle\bu,\bv\rangle$ to denote the standard inner product.
For $\bu\in \R^n$, we use $\|\bu\|_k=\big (\sum_{i=1}^n \bu_i^k\big )^{1/k} $ to denote the $\ell_k$-norm of $\bu$.
For tensors, 
we will consider a $k$-tensor to be an element in $(\mathbb{R}^n)^{\otimes k}\cong\mathbb{R}^{n^k}$.
This can be thought of as a vector with $n^k$ coordinates.
We will use $\bA_{i_1,\ldots,i_k}$ to denote the coordinate of a $k$-tensor $\bA$ indexed by the $k$-tuple $(i_1,\ldots,i_k)$. 
By abuse of notation, we will sometimes also use this to denote the entire tensor.
The inner product and $\ell^k$-norm of a $k$-tensor are defined by thinking of the tensor as a vector with $n^k$ coordinates
and then using the definition of inner product and $\ell_k$-norm of vectors.
For a vector $\bv\in\R^n$, we denote by $\bv^{\otimes k}$ to be a vector (linear object) in $\R^{n^k}$.
For a matrix $\bV\in\R^{n\times m}$, we denote by $\|\bV\|_2,\|\bV\|_F$ to be the operator norm and Frobenius norm respectively.
In addition, we denote by $\bV^{\otimes k}$ to be a matrix (linear operator) mapping $\R^{n^k}$ to $\R^{m^k}$.

We use $\mathds{1}$ to denote the indicator function of a set, 
specifically $\mathds{1}(t\in S)=1$ if $t\in S$ and $0$ otherwise. 
We will use $\Gamma:\R\rightarrow \R$ to denote the gamma function $\Gamma(z)=\int_0^{\infty} t^{z-1}e^{-t} dt$.
We use $B:\R\times \R\rightarrow \R$ to denote the beta function 
$B(z_1,z_2)=\Gamma(z_1)\Gamma(z_2)/{\Gamma(z_1+z_2)}$.
We use $\chi^2_k$ to denote the chi-squared distribution with $k$ degrees of freedom.
We use $\mathrm{Beta}(\alpha,\beta)$ to denote the Beta distribution 
with parameters $\alpha$ and $\beta$. 

For a distribution $D$, we use $\pr_{D}[S]$ to denote the probability of an event $S$.
For a continuous distribution $D$ over $\R^n$, we sometimes use $D$ for both the distribution itself and its probability density function.
For two distributions $D_1,D_2$ over a probability space $\Omega$,
let $\dtv(D_1,D_2)=\sup_{S\subseteq\Omega}|\pr_{D_1}(S)-\pr_{D_2}(S)|$
denote the total variation distance between $D_1$ and $D_2$.
For two continuous distribution $D_1, D_2$ over $\R^n$, 
we use $\chi^2(D_1,D_2)=\int_{\R^n}D_1(\bx)^2/D_2(\bx)d\bx-1$
to denote the chi-square norm of $D_1$ w.r.t.\;$D_2$.
For a subset $S\subseteq \R^n$ with finite measure or finite surface measure, 
we use $U(S)$ to denote the uniform distribution over $S$ (w.r.t. Lebesgue measure for the volumn/surface area of $S$).

\paragraph{Basics of Hermite Polynomials}
We require the following definitions. 
\begin{definition}[Normalized Hermite Polynomial]\label{def:Hermite-poly}
For $k\in\N$,
we define the $k$-th \emph{probabilist's} Hermite polynomials
$\mathrm{\textit{He}}_k:\R\to \R$
as
$\mathrm{\textit{He}}_k(t)=(-1)^k e^{t^2/2}\cdot\frac{d^k}{dt^k}e^{-t^2/2}$.
We define the $k$-th \emph{normalized} Hermite polynomial 
$h_k:\R\to \R$
as
$h_k(t)=\mathrm{\textit{He}}_k(t)/\sqrt{k!}$.
\end{definition}
Furthermore, we will use multivariate Hermite polynomials in the form of
Hermite tensors 
(as the entries in the Hermite tensors are 
rescaled multivariate Hermite polynomials).
We define the \emph{Hermite tensor} as follows.
\begin{definition}[Hermite Tensor]\label{def:Hermite-tensor}
For $k\in \N$ and $\bx\in\R^n$, we define the $k$-th Hermite tensor as
\[
(\bH_k(\bx))_{i_1,i_2,\ldots,i_k}=\frac{1}{\sqrt{k!}}\sum_{\substack{\text{Partitions $P$ of $[k]$}\\ \text{into sets of size 1 and 2}}}\bigotimes_{\{a,b\}\in P}(-\bI_{i_a,i_b})\bigotimes_{\{c\}\in P}\bx_{i_c}\; .
\]
\end{definition}

We denote by $L^2(\R^n,\mathcal{N}_n)$ the space of all 
functions $f:\R^n\to\R$ such that 
$\E_{\bv\sim\mathcal{N}_n}[f^2(\bv)]<\infty$.
For functions $f,g\in L^2(\R^n,\mathcal{N}_n)$, we use $\langle f,g\rangle_{\gaus_n}=\E_{\bx\sim \gaus_n} [f(\bx)g(\bx)]$ to denote their inner product.
We use $\|f\|_2=\sqrt{\langle f,f\rangle_{\gaus_n}}$ to denote its $L^2$-norm.
For a function $f:\R^n \to \R$ and $\ell\in \N$, we use $f^{\leq\ell}$ to denote 
$f^{\leq \ell}(\bx)=\sum_{k=0}^\ell \langle \bA_k, \bH_k(\bx)\rangle$,
where $\bA_k=\E_{\bx\sim \gaus_n}[f(\bx)\bH_k(\bx)]$,
which is the degree-$\ell$ approximation of $f$.
We use $f^{>\ell}=f-f^{\leq \ell}$ to denote its residue.
We remark that normalized Hermite polynomials (resp. Hermite tensors) form a complete orthogonal system for the inner product space $L^2(\R,\mathcal{N})$ 
(resp. $L^2(\R^n,\mathcal{N}_n)$).
This implies that for $f\in L^2(\R^n,\mathcal{N}_n)$,
$\lim_{\ell\rightarrow \infty} \|f^{>\ell}\|_2 = 0$.
We also remark that 
both our definition of Hermite polynomial and Hermite tensor are
``normalized'' in the following sense:
For Hermite polynomials, it holds $\|h_k\|_2=1$.
For Hermite tensors, given any symmetric tensor $A$,
we have $\|\langle\bA,\bH_k(\bx)\rangle\|_2^2=\langle\bA,\bA\rangle$.

The following claim states that 
for any orthonormal transformation $\bB$, 
the Hermite tensor $\bH_k(\bB \bx)$ 
can be written as applying the linear transformation 
$\bB^{\otimes k}$ on the Hermite tensor $\bH_k(\bx)$. 
The proof is deferred to Appendix~\ref{sec:omit-prelim}.

\begin{claim} \label{clm:othor-tran}
Let $1\le m<n$. Let $\bB\in\R^{m\times n}$ with $\bB\bB^\intercal=\bI_m$.
It holds that
$\bH_k(\bB\bx)=\bB^{\otimes k}\bH_k(\bx),\bx\in\R^n$.
\end{claim}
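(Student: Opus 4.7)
The plan is to verify the identity $\bH_k(\bB\bx) = \bB^{\otimes k}\bH_k(\bx)$ entrywise, by directly manipulating the combinatorial formula in Definition~\ref{def:Hermite-tensor}. Both sides are elements of $(\R^m)^{\otimes k}$ indexed by tuples $(j_1, \ldots, j_k) \in [m]^k$, so it suffices to show the entries agree.

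First, I would write out the $(j_1, \ldots, j_k)$-entry of the right-hand side as
\[ (\bB^{\otimes k}\bH_k(\bx))_{j_1, \ldots, j_k} = \sum_{i_1, \ldots, i_k \in [n]} \bB_{j_1, i_1} \cdots \bB_{j_k, i_k} \, (\bH_k(\bx))_{i_1, \ldots, i_k}, \]
substitute the defining combinatorial formula for $(\bH_k(\bx))_{i_1,\ldots,i_k}$, and interchange the (finite) sum over partitions $P$ of $[k]$ with the sums over the $i_\ell$. For each fixed $P$, the factor $\prod_\ell \bB_{j_\ell, i_\ell}$ splits according to the blocks of $P$, so the inner sum factors as a product over the pairs and singletons of $P$. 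Each pair $\{a,b\}$ contributes $\sum_{i_a,i_b} \bB_{j_a,i_a}\bB_{j_b,i_b}(-\bI_{i_a,i_b}) = -(\bB\bB^\intercal)_{j_a,j_b}$, while each singleton $\{c\}$ contributes $\sum_{i_c} \bB_{j_c, i_c}\bx_{i_c} = (\bB\bx)_{j_c}$.

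The crucial input is the assumption $\bB\bB^\intercal = \bI_m$, which collapses each pair contribution to $-\bI_{j_a, j_b}$. After this reduction, the right-hand side becomes
\[ \frac{1}{\sqrt{k!}} \sum_{P} \, \bigotimes_{\{a,b\} \in P} (-\bI_{j_a, j_b}) \bigotimes_{\{c\} \in P} (\bB\bx)_{j_c}, \]
which is exactly $(\bH_k(\bB\bx))_{j_1, \ldots, j_k}$ by Definition~\ref{def:Hermite-tensor}, completing the proof.

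There is no serious obstacle here; the argument is essentially careful index bookkeeping, with the only substantive step being the application of $\bB\bB^\intercal = \bI_m$ in each pair block. An alternative and more conceptual route would be via the tensorial generating function $\exp(\langle \bt, \bx\rangle - \|\bt\|^2/2) = \sum_{k\ge 0} \frac{1}{\sqrt{k!}} \langle \bt^{\otimes k}, \bH_k(\bx)\rangle$: substituting $\bB\bx$ for $\bx$, using $\|\bt\|^2 = \bt^\intercal \bB\bB^\intercal \bt = \|\bB^\intercal \bt\|^2$, and matching coefficients of $\bt^{\otimes k}$ via $\langle \bt^{\otimes k}, \bB^{\otimes k}\bH_k(\bx)\rangle = \langle (\bB^\intercal \bt)^{\otimes k}, \bH_k(\bx)\rangle$ would yield the claim. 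I would prefer the direct combinatorial proof since it does not require first establishing the generating function identity.
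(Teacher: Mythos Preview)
Your proposal is correct and follows essentially the same approach as the paper: a direct entrywise computation from Definition~\ref{def:Hermite-tensor}, factoring over the blocks of each partition and invoking $\bB\bB^\intercal=\bI_m$ on the pair blocks. The paper's proof merely runs the calculation in the opposite direction (starting from $\bH_k(\bB\bx)$ and inserting $\bI_m=\bB\bB^\intercal$ to arrive at $\bB^{\otimes k}\bH_k(\bx)$), but the substance is identical.
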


\section{SQ-Hardness of NGCA: Proof of Theorem~\ref{thm:main-result}} \label{sec:ngca-sq}

The main idea of the proof is the following.
Suppose that the algorithm only asks queries with tolerance $\tau$,
and let $f$ be an arbitrary query function that the algorithm selects.
The key ingredient is to show that 
$ |\E_{\bx\sim \p^A_\bV}[f(\bx)]-\E_{\bx\sim \gaus_n}[f(\bx)]|\leq \tau$ with high probability
over $\bV\sim U(\orthor_{n,m})$.
If this holds, 
then when the algorithm queries $f$,
if the input is from the alternative hypothesis,
with high probability,
$\E_{\bx\sim \gaus_n}[f(\bx)]$ is a valid answer for the query.
Therefore, when the algorithm queries $f$,
regardless of whether the input is from the alternative 
or null hypothesis, the oracle can just return 
$\E_{\bx\sim \gaus_n}[f(\bx)]$.
Then the algorithm will not observe any difference 
between the two cases with any small number of queries. 
Thus, it is impossible to distinguish the two cases 
with high probability.

To prove the desired bound, 
we introduce the following proposition.

\begin{proposition} \label{prp:main-tail-bound}
Let $\lambda\in (0,1)$ and $n,m,d\in \mathbb{N}$ with $d$ be even and $m,d\leq n^{\lambda}$.
Let $\nu\in \R_+$ and $A$ be a distribution on $\R^m$ such that
for any polynomial $f:\R^m\to \R$ of degree at most $d$ and $\E_{\bx\sim\gaus_m}[f(\bx)^2]=1$,
$$|\E_{\bx\sim A}[f(\bx)]-\E_{\bx\sim \gaus_m}[f(\bx)]|\leq \nu\; .$$
Let $0<c<(1-\lambda)/4$ and $n$ is at least a sufficiently large constant depending on $c$, 
then, for any function $f:\R^n\rightarrow[-1,1]$, it holds
\[
\pr_{\bV\sim U(\orthor_{n,m})}\left[|\E_{\bx\sim \p^{A}_{\bV}}[f(\bx)]-\E_{\bx\sim\gaus_n}[f(\bx)]|\geq
\left (\frac{\Gamma(d/2+m/2)}{\Gamma(m/2)}\right )n^{-((1-\lambda)/4-c) d}+(1+o(1))\nu\right]
\leq 2^{-n^{\Omega(c)}}\;.
\]
\end{proposition}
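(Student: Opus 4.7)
The plan is to fix an arbitrary bounded query $f : \R^n \to [-1,1]$ with $\|f\|_2 \leq 1$ and show, with probability $1 - 2^{-n^{\Omega(c)}}$ over $\bV \sim U(\orthor_{n,m})$, that $|\E_{\bx \sim \p_\bV^A}[f(\bx)] - \E_{\bx \sim \gaus_n}[f(\bx)]|$ meets the stated target. The usual SQ-dimension argument breaks because $\chi^2(A, \gaus_m)$ may be infinite, so two truncations --- one in physical space on $A$, one in the Hermite expansion of $f$ --- are required. First I truncate $A$: let $A' := A \mid \B^m(B)$ for a parameter $B = \Theta(n^{(1-\lambda)/4 - c})$. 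Applying the moment-matching hypothesis to the unit-$L^2(\gaus_m)$ normalization of the degree-$d$ polynomial $\|\by\|_2^d$ (valid since $d$ is even) and using Markov's inequality yields $\Pr_A[\|\by\|_2 > B] \leq C_{m,d}/B^d = O(\Gamma((d+m)/2)/\Gamma(m/2)) \cdot n^{-((1-\lambda)/4 - c)d}$. This bounds $\dtv(\p_\bV^A, \p_\bV^{A'})$ uniformly in $\bV$ by (half of) the target, so it suffices to control $|\E_{\p_\bV^{A'}}[f] - \E_{\gaus_n}[f]|$; the truncated $A'$ still approximately matches the first $d$ moments of $\gaus_m$ up to an additive $(1+o(1))\nu$ slack (via the TV bound) and crucially has $\chi^2(A', \gaus_m) < \infty$ for later use.

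Expand $f = \sum_{k\geq 0}\langle \bT_k, \bH_k(\bx)\rangle$ with $\sum_k \|\bT_k\|_2^2 \leq 1$, and split $f = f^{\leq \ell} + f^{>\ell}$ for $\ell$ so large that $\|f^{>\ell}\|_2$ is super-exponentially small. Using the product structure of $\p_\bV^{A'}$ together with Claim~\ref{clm:othor-tran}, a direct calculation gives $\E_{\p_\bV^{A'}}[\bH_k(\bx)] = \bV^{\otimes k}\bA_k'$ with $\bA_k' := \E_{A'}[\bH_k(\by)]$, so that
\[
\E_{\p_\bV^{A'}}[f^{\leq \ell}] - \E_{\gaus_n}[f^{\leq \ell}] \;=\; \sum_{k=1}^\ell \langle (\bV^\intercal)^{\otimes k}\bT_k, \bA_k'\rangle.
\]
For $1 \leq k \leq d$, the partial sum equals $\E_{A'}[g(\by)] - \E_{\gaus_m}[g(\by)]$ for the polynomial $g(\by) := \sum_{k=1}^d \langle (\bV^\intercal)^{\otimes k}\bT_k, \bH_k(\by)\rangle$ on $\R^m$, of degree $\leq d$ and with $\|g\|_{L^2(\gaus_m)}^2 = \sum_{k=1}^d \|(\bV^\intercal)^{\otimes k}\bT_k\|_2^2 \leq 1$; applying the moment-matching hypothesis directly to $g$ (as lifted to $A'$) produces the $(1+o(1))\nu$ contribution in one shot, avoiding any $\sqrt{d}$ loss from bounding the $k$-th summand term-by-term.

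The main technical work is the range $d < k \leq \ell$, where only the deterministic bound $\|\bA_k'\|_2 \leq \sup_{\|\by\|_2 \leq B}\|\bH_k(\by)\|_2 = O(B^k/\sqrt{k!})$ is available, and concentration on $\bV$ must do everything. Here I exploit the $O(n)$-invariance of $U(\orthor_{n,m})$: by Schur's lemma the averaged operator $\E_\bV[\bV^{\otimes k}(\bV^\intercal)^{\otimes k}]$ acts on the Hermite (symmetric-traceless) isotypic component of $(\R^n)^{\otimes k}$ by a scalar, and a direct spherical/Stiefel integral computation shows this scalar is of order $k!/n^k$ (up to lower-order factors in $m, k$), so $\E_\bV[\|(\bV^\intercal)^{\otimes k}\bT_k\|_2^2] \leq O(k!/n^k)\|\bT_k\|_2^2$. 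Taking $2a$-th moments via the analogous invariant calculation on $(\R^n)^{\otimes ka}$ contracted with $\bT_k^{\otimes a}$, and invoking Markov with a suitable integer $a = a(k) \leq n^{\Omega(c)}$, gives
\[
\|(\bV^\intercal)^{\otimes k}\bT_k\|_2 \;\leq\; O(\sqrt{k!})\cdot n^{-k(1/2 - c)}\|\bT_k\|_2
\]
with failure probability $2^{-n^{\Omega(c)}}$, union-bounded over $d < k \leq \ell$. Multiplying by $\|\bA_k'\|_2 = O(B^k/\sqrt{k!})$ yields each term at most $(B\cdot n^{-(1/2-c)})^k\|\bT_k\|_2$, and the choice $B = \Theta(n^{(1-\lambda)/4-c})$ makes the product decay geometrically in $k$; a final Cauchy-Schwarz in $k$ against $\sum_k\|\bT_k\|_2^2 \leq 1$ then contributes $O(\Gamma((d+m)/2)/\Gamma(m/2)) \cdot n^{-((1-\lambda)/4-c)d}$, matching the truncation bound from step one.

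Finally, for the super-high-frequency tail $f^{>\ell}$, an infinite-sum interchange is not justified, so I argue in $L^1$ instead: letting $\q := \E_\bV[\p_\bV^{A'}]$ (compactly supported, hence $\chi^2(\q, \gaus_n) \leq \exp(O(B^2)) < \infty$), Jensen plus Cauchy--Schwarz gives
\[
\E_\bV\bigl[\bigl|\E_{\p_\bV^{A'}}[f^{>\ell}]\bigr|\bigr] \;\leq\; \E_\q[|f^{>\ell}|] \;\leq\; \sqrt{1 + \chi^2(\q, \gaus_n)}\cdot \|f^{>\ell}\|_2.
\]
Picking $\ell$ large enough that $\|f^{>\ell}\|_2$ beats $\exp(O(B^2))$ by a further exponential margin (possible since $\ell$ may be taken arbitrarily large for the fixed $f$) makes this $L^1$ bound negligible, and a final Markov converts it to a high-probability bound. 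The principal obstacle is the isotypic-moment calculation in the third paragraph: computing $\E_\bV[\bV^{\otimes ka}(\bV^\intercal)^{\otimes ka}]$ cleanly on the relevant Hermite components requires invariant-theoretic averaging over the Stiefel manifold (an $O(n)$-Weingarten-type calculation), and tracking the combinatorial dependence on $(k, m, a, n)$ finely enough to simultaneously recover both the sharp $1/4$ exponent in $n^{-((1-\lambda)/4-c)d}$ and the $\Gamma((d+m)/2)/\Gamma(m/2)$ prefactor is the most delicate piece of the argument.
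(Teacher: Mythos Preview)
Your high-level skeleton matches the paper exactly: truncate $A$ to $A'$ on $\B^m(B)$, expand $f$ in Hermite tensors, split $f = f^{\le\ell} + f^{>\ell}$, handle the $k\ge 1$ Hermite terms of $f^{\le\ell}$ via concentration over $\bV$, and kill the $f^{>\ell}$ tail in $L^1$ through the mixture $\q = \E_\bV[\p_\bV^{A'}]$. The treatment of $1\le k\le d$ by packaging the sum into a single normalized polynomial $g$ on $\R^m$ is a clean variant of what the paper does term-by-term.

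The gap is in the $k>d$ range, and it is a real one. Your Schur-lemma step asserts that $\E_\bV[\bV^{\otimes k}(\bV^\intercal)^{\otimes k}]$ acts on the ``Hermite (symmetric--traceless) isotypic component'' by a scalar of order $k!/n^k$, and that $\bT_k$ lives there. But $\bT_k = \E_{\gaus_n}[f\,\bH_k]$ is symmetric yet \emph{not} traceless: already for $k=2$, $\bH_2(\bx) = (\bx\bx^\intercal - \bI)/\sqrt{2}$ has nonzero trace, and taking $\bT_2 = \bI/\sqrt{n}$ gives $\|(\bv^\intercal)^{\otimes 2}\bT_2\|_2^2 = 1/n$, so the operator norm on degree-$2$ Hermite coefficients is $1/n$, not $2/n^2$. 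In general the top eigenvalue of $\E_\bV[\bV^{\otimes k}(\bV^\intercal)^{\otimes k}]$ on symmetric $k$-tensors is attained on the trace part and is only $\Theta((\max(m,k)/n)^{k/2})$ --- this is exactly the bound the paper proves (its Lemma bounding the operator norm by $\E_\bV[\|\bV^\intercal\bu\|_2^{ak/2}]$). The $k!/n^k$ rate you quote holds only on the harmonic/traceless irrep, which $\bT_k$ does not belong to.

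With the correct operator-norm rate, your crude deterministic bound $\|\bA_k'\|_2 \le \sup_{\|\by\|\le B}\|\bH_k(\by)\|_2 = O(B^k/\sqrt{k!})$ is no longer enough: the $\sqrt{k!}$ you hoped to cancel against the concentration factor simply isn't there, and the resulting per-term bound is roughly $k^{-k/2}$ rather than $n^{-\Omega(k)}$ --- useless when $d$ is small. The paper closes this gap by proving a sharper moment bound on $A'$ (its truncation lemma): for $k\ge d$ it shows $\|\bA_k'\|_2 \le 2^{O(k)}\bigl(2^{d/2}\sqrt{\Gamma(d+m/2)/\Gamma(m/2)}\bigr)B^{k-d}$, i.e.\ the degree-$d$ moment hypothesis on $A$ is reused to control the $k$-th Hermite moment of $A'$ with only $k-d$ extra powers of $B$. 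This, combined with the correct $n^{-(1-\lambda)k/4}$ concentration, is what produces the $\Gamma((d+m)/2)/\Gamma(m/2)\cdot n^{-((1-\lambda)/4-c)d}$ target. A secondary issue: your bound $\|\bH_k(\by)\|_2 = O(B^k/\sqrt{k!})$ hides a factor $\exp(k^2/B^2)$ that explodes once $k \gg B^2$; the paper handles this by a further case split (a fourth range of $k$) using the alternative estimate $\|\bH_k(\by)\|_2 \le 2^{O(m)}\binom{k+m-1}{m-1}^{1/2}\exp(\|\by\|_2^2/4)$.
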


\noindent Assuming Proposition \ref{prp:main-tail-bound},
the proof of our main theorem is quite simple.

\begin{proof} [Proof for Theorem \ref{thm:main-result}]
Suppose there is an SQ algorithm ${\cal A}$ using $q<2^{n^{\Omega(c)}}$ many queries of accuracy 
$\tau \geq \left (\frac{\Gamma(d/2+m/2)}{\Gamma(m/2)}\right )n^{-((1-\lambda)/4-c) d}+(1+o(1))\nu$ and succeeds with at least $2/3$ probability. 
We prove by contradiction that such an ${\cal A}$ cannot exist.
Suppose the input distribution is $\gaus_n$,
and the SQ oracle always answers
$\E_{\bx\sim \gaus_n} [f(\bx)]$ for any query $f$.
Then the assumption on ${\cal A}$ implies
that it answers ``null hypothesis'' with probability $\alpha>2/3$.
Now consider the case that the input distribution is $\p_\bV^A$ and $\bV\sim U(\orthor_{n,m})$.
Suppose the SQ oracle still always answers
$\E_{\bx\sim \gaus_n} [f(\bx)]$ for any query $f$.
Let $f_1,\cdots,f_q$ be the queries the algorithm asks,
where $q = 2^{n^{\Omega(c)}}$.
By Proposition \ref{prp:main-tail-bound} and a union bound, we have
\[\pr_{\bV\sim U(\orthor_{n,m})} [\exists i\in [q],\;|\E_{\bx\sim\p^A_\bV}[f_i(\bx)]-\E_{\bx\sim \gaus_n}[f_i(\bx)] |\geq \tau]=o(1)\;.\]
Therefore, with probability $1-o(1)$, the answers given by the oracle are valid.
From our assumption on ${\cal A}$, the algorithm needs to answer ``alternative hypothesis''
with probability at least $\frac{2}{3}(1-o(1))$.
But since the oracle always answers $\E_{\bx\sim \gaus_n} [f(\bx)]$
(which is the same as the above discussed null hypothesis case), 
we know the algorithm will return ``null hypothesis'' with probability $\alpha>2/3$. 
This gives a contradiction and completes the proof.
\end{proof}

\noindent The rest of this section is devoted to the proof of Proposition~\ref{prp:main-tail-bound}.

\subsection{Fourier Analysis using Hermite Polynomials}\label{ssec:Fourier}

The main idea of Proposition \ref{prp:main-tail-bound} is to analyze $\E_{\bx\sim \p^A_\bV}[f(\bx)]$
through Fourier analysis using Hermite polynomials.
Before we do the analysis, 
we will first truncate the distribution $A$ inside $\mathbb{B}^{m}(B)$
which is the $\ell_2$-norm unit ball in $m$-dimensions with radius $B$
for some $B\in \R_+$ to be specified.
Namely, we will consider the truncated distribution $A'$ defined as the distribution of $\bx\sim A$ conditioned on $\bx\in \mathbb{B}^{m}(B)$.
The following lemma shows that given any $m$-dimensional distribution $A$ that approximately matches the first $d$ moment tensor with the Gaussian,
the truncated distribution $A'$ is close to $A$ in both
the total variation distance and the first $d$ moment tensors.

\begin{lemma} \label{lem:truncation}
Let $m,d\in\mathbb{N}$ with $d$ be even.
Let $A$ be a distribution on $\R^m$ such that
for any polynomial $f$ of degree at most $d$ and $\E_{\bx\sim\gaus_m}[f(\bx)^2]=1$,
\[
|\E_{\bx\sim A}[f(\bx)]-\E_{\bx\sim \gaus_m}[f(\bx)]|\leq \nu\leq 2\; .
\]
Let $B\in \R_+$ such that $B^d\geq c_1\left (2^{d/2}\sqrt{\frac{\Gamma(d+m/2)}{\Gamma(m/2)}}\right )$
where $c_1$ is at least a sufficiently large universal constant
and let $A'$ be the truncated distribution defined as 
the distribution of $\bx\sim A$ conditioned on $\bx\in \mathbb{B}^{m}(B)$.
Then 
$d_{\mathrm{TV}}(A,A')= O\left (2^{d/2}\sqrt{\frac{\Gamma(d+m/2)}{\Gamma(m/2)}}\right )B^{-d}$,
Furthermore, for any $k\in \N$, 
\[
\|\E_{\bx \sim A'}[\bH_k(\bx)]-\E_{\bx\sim \gaus_m}[\bH_k(\bx)] \|_2 = \begin{cases}
  2^{O(k)}\left (2^{d/2}\sqrt{\frac{\Gamma(d+m/2)}{\Gamma(m/2)}}\right )B^{-(d-k)}  \\
  +\left (1+O\left (2^{d/2}\sqrt{\frac{\Gamma(d+m/2)}{\Gamma(m/2)}}\right )B^{-d}\right )\nu  & k<d\; ; \\
  2^{O(k)}\left (2^{d/2}\sqrt{\frac{\Gamma(d+m/2)}{\Gamma(m/2)}}\right ) B^{k-d} & k\geq d\; .
\end{cases}
\]
\end{lemma}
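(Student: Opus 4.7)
The plan is to reduce both assertions to a single moment estimate, namely $\E_{\bx\sim A}[\|\bx\|_2^d] = O(2^{d/2}\sqrt{\Gamma(d+m/2)/\Gamma(m/2)})$. To obtain this, I would apply the moment-matching hypothesis to the normalization of the degree-$d$ polynomial $\|\bx\|_2^d$: since $\|\bx\|_2^2\sim \chi^2_m$ under $\gaus_m$, its $L^2(\gaus_m)$-norm equals $\sqrt{\E_{\gaus_m}[\|\bx\|_2^{2d}]}=\sqrt{2^d \Gamma(d+m/2)/\Gamma(m/2)}$, and combining with $\nu\le 2$ and $\E_{\gaus_m}[\|\bx\|_2^d]\le \sqrt{\E_{\gaus_m}[\|\bx\|_2^{2d}]}$ by Cauchy--Schwarz yields the claimed bound. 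The TV assertion is then immediate from $\dtv(A,A')=\Pr_A[\|\bx\|_2>B]\le \E_A[\|\bx\|_2^d]/B^d$ (Markov), and the same estimate gives $p:=\Pr_A[\bx\in\B^m(B)]\ge 1/2$ once $c_1$ is large enough.

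For the Hermite tensor assertion I would start from the decomposition
\[
\E_{A'}[\bH_k]-\E_{\gaus_m}[\bH_k]=\tfrac{1}{p}\bigl[(\E_A[\bH_k]-\E_{\gaus_m}[\bH_k])-\E_A[\bH_k\,\Ind_{\|\bx\|_2>B}]+(1-p)\E_{\gaus_m}[\bH_k]\bigr].
\]
The third term vanishes for $k\ge 1$ by the Hermite mean-zero property, and the $k=0$ case is trivial since both expectations equal $1$. In the regime $k<d$, the first bracket has $\ell_2$-norm at most $\nu$ by tensor--polynomial duality: for any unit symmetric tensor $\bT$, the polynomial $\langle \bT,\bH_k(\bx)\rangle$ has degree $k\le d$ and unit $L^2(\gaus_m)$-norm, so moment-matching applies directly. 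The tail term is controlled by combining a pointwise estimate $\|\bH_k(\bx)\|_2\le C^k(\|\bx\|_2+\sqrt{m})^k$ with $(\|\bx\|_2+\sqrt{m})^k\Ind_{\|\bx\|_2>B}\le 2^k\|\bx\|_2^d B^{-(d-k)}$ (which uses $B\ge \sqrt{m}$, guaranteed by the hypothesis on $B$), giving $2^{O(k)}\E_A[\|\bx\|_2^d]\,B^{-(d-k)}$. Absorbing the factor $1/p = 1+O(\dtv(A,A'))$ into the $\nu$ term then matches the first branch.

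For $k\ge d$ the moment-matching hypothesis cannot be invoked, so I would bound $\|\E_{A'}[\bH_k]\|_2\le \E_{A'}[\|\bH_k(\bx)\|_2]$ directly using the same pointwise estimate inside $\B^m(B)$. Writing $(\|\bx\|_2+\sqrt{m})^k\le (2B)^{k-d}(\|\bx\|_2+\sqrt{m})^d\le (2B)^{k-d}\cdot 2^d(\|\bx\|_2^d+m^{d/2})$ (again using $\sqrt{m}\le B$) and taking expectations gives $\|\E_{A'}[\bH_k]\|_2\le 2^{O(k)}B^{k-d}(\E_A[\|\bx\|_2^d]+m^{d/2})/p$. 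The bound $m^{d/2}=O(2^{d/2}\sqrt{\Gamma(d+m/2)/\Gamma(m/2)})$, which follows from $\Gamma(d+m/2)/\Gamma(m/2)\ge (m/2)^d$, then yields the second branch.

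The main obstacle is the auxiliary pointwise bound $\|\bH_k(\bx)\|_2\le C^k(\|\bx\|_2+\sqrt{m})^k$, which is not stated in the preliminaries. I would derive it by expanding Definition~\ref{def:Hermite-tensor} as a sum over singleton/pair partitions of $[k]$, bounding the count of such partitions by the telephone numbers ($\le k^{k/2}$, comparable to the $1/\sqrt{k!}$ prefactor), and estimating the Frobenius norm of each summand $\mathrm{Sym}(\bx^{\otimes(k-2j)}\otimes \bI^{\otimes j})$ by $\|\bx\|_2^{k-2j} m^{j/2}$. With this pointwise estimate in place, the remainder of the proof is routine bookkeeping driven by the single moment estimate from the first paragraph.
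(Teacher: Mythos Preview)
Your proposal is correct and follows essentially the same approach as the paper: both derive the key moment estimate $\E_A[\|\bx\|_2^d]=O\!\left(2^{d/2}\sqrt{\Gamma(d+m/2)/\Gamma(m/2)}\right)$ by applying the hypothesis to the normalization of $\|\bx\|_2^d$, use Markov for the TV bound, invoke the duality argument $\|\E_A[\bH_k]-\E_{\gaus_m}[\bH_k]\|_2\le\nu$ for $k<d$, and rely on a pointwise Hermite-norm bound equivalent to $\|\bH_k(\bx)\|_2\le 2^{O(k)}\max(\|\bx\|_2,\sqrt m)^k$. Your bookkeeping is a touch more direct in two places: for the tail term you use the one-line inequality $\|\bx\|_2^k\Ind_{\|\bx\|_2>B}\le \|\bx\|_2^d B^{-(d-k)}$ where the paper runs a layer-cake integral, and for $k\ge d$ you apply Jensen plus the support bound $\|\bx\|_2\le B$ where the paper expands the Hermite tensor definition term by term; both are minor tactical variations rather than a different route.
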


\vspace{-0.1cm}

\noindent The proof of Lemma~\ref{lem:truncation} is deferred to Appendix~\ref{sec:omit-Fourier}.

\noindent Since $d_{\mathrm{TV}}(A,A')\leq O\left (2^{d/2}\sqrt{\frac{\Gamma(d+m/2)}{\Gamma(m/2)}}\right )B^{-d}$ 
and $f$ is bounded in $[-1,1]$, it follows that
\[
|\E_{\bx\sim \p^{A'}_\bV}[f(\bx)]-\E_{\bx\sim \p^A_\bV}[f(\bx)]|
\leq 2 d_{\mathrm{TV}}(\p^{A'}_\bV,\p^{A'}_\bV)
=2 d_{\mathrm{TV}}(A,A')=O\left(\left (2^{d/2}\sqrt{\frac{\Gamma(d+m/2)}{\Gamma(m/2)}}\right )B^{-d}\right )\; .
\]
Therefore, 
if suffices for us to 
analyze $\E_{\bx\sim \p^{A'}_\bV}[f(\bx)]$
instead of $\E_{\bx\sim \p^A_\bV}[f(\bx)]$.
Furthermore, the property that $A'$ is bounded inside $\mathbb{B}^{m}(B)$
will be convenient in the Fourier analysis later.
We introduce the following lemma which decomposes
$\E_{\bx\sim \p^{A'}_\bV}[f(\bx)]$ using Hermite analysis.

\begin{lemma}[Fourier Decomposition Lemma] \label{lem:hermite-decomposition}
Let $A'$ be any distribution supported on $\R^m$, $\bV\in \R^{n\times m}$ and $\bV^\intercal\bV=\bI_m$.
Then for any $\ell\in \N$, $\E_{\bx\sim \p^{A'}_{\bV}} [f(\bx)]=\sum_{k=0}^\ell \langle \bV^{\otimes k}\bA_k,\bT_k\rangle+\E_{\bx\sim \p^{A'}_{\bV}} [f^{>\ell}(\bx)]$,
where $\bA_k=\E_{\bx\sim A'} [\bH_k(\bx)]$ and $\bT_k=\E_{\bx\sim \gaus_n} [f(\bx) \bH_k(\bx)]$. 
\end{lemma}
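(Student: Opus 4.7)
The plan is to split $f = f^{\le\ell} + f^{>\ell}$, take expectation under $\p^{A'}_\bV$ term by term, and reduce the lemma to the Hermite-moment identity
\[
\E_{\bx\sim \p^{A'}_\bV}[\bH_k(\bx)] \;=\; \bV^{\otimes k}\bA_k, \qquad k=0,1,\dots,\ell.
\]
Since $f^{\le\ell}(\bx)=\sum_{k=0}^\ell\langle \bT_k,\bH_k(\bx)\rangle$ is a polynomial and $A'$ has all moments (in the paper's use of the lemma $A'$ is compactly supported in $\mathbb{B}^m(B)$), linearity of expectation immediately yields $\E[f^{\le\ell}] = \sum_{k=0}^\ell \langle \bT_k,\, \E_{\bx\sim \p^{A'}_\bV}[\bH_k(\bx)]\rangle$, and $\E[f^{>\ell}]$ is well defined as the difference $\E[f]-\E[f^{\le\ell}]$ since $f$ is bounded in $[-1,1]$. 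Plugging the identity above into the first sum is exactly the claimed decomposition.

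For the Hermite-moment identity, I would complete $\bV$ to a full orthogonal matrix $\bU=[\bV\mid \bW]\in\R^{n\times n}$ with $\bW^\intercal\bW=\bI_{n-m}$, so that under $\bx\sim \p^{A'}_\bV$ we have $\bx=\bV\by+\bW\bz$ with $\by=\bV^\intercal\bx\sim A'$ and $\bz=\bW^\intercal\bx\sim \gaus_{n-m}$ independent (this is just Definition~\ref{def:hd}). The cleanest route is through the Hermite generating-function identity
\[
\sum_{k\ge 0}\tfrac{1}{\sqrt{k!}}\bigl\langle \bH_k(\bx),\bu^{\otimes k}\bigr\rangle \;=\; \exp\!\bigl(\langle \bu,\bx\rangle-\tfrac12\|\bu\|_2^2\bigr), \qquad \bu\in\R^n,
\]
together with its $\R^m$ analogue. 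Using $\|\bu\|_2^2=\|\bV^\intercal\bu\|_2^2+\|\bW^\intercal\bu\|_2^2$, the independence of $(\by,\bz)$, and the standard Gaussian m.g.f.\ on the $\bz$-factor,
\[
\E_{\bx\sim \p^{A'}_\bV}\bigl[e^{\langle \bu,\bx\rangle-\frac12\|\bu\|_2^2}\bigr]
\;=\; \E_{\by\sim A'}\bigl[e^{\langle \bV^\intercal\bu,\by\rangle-\frac12\|\bV^\intercal\bu\|_2^2}\bigr]
\;=\; \sum_{k\ge 0}\tfrac{1}{\sqrt{k!}}\bigl\langle \bV^{\otimes k}\bA_k,\bu^{\otimes k}\bigr\rangle,
\]
where the last step applies the $\R^m$ generating-function identity to $A'$ and rewrites $\langle \bA_k,(\bV^\intercal\bu)^{\otimes k}\rangle=\langle \bV^{\otimes k}\bA_k,\bu^{\otimes k}\rangle$. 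Matching coefficients of $\bu^{\otimes k}$ on both sides of the resulting power-series equality gives the identity.

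The main technical point worth flagging is the interchange of expectation with the infinite Hermite sum in the generating-function step. This is precisely what the truncation from Lemma~\ref{lem:truncation} buys: compact support of $A'$ provides an integrable dominator uniformly over $\by$, and Fubini/dominated convergence apply. An entirely coordinate-level alternative avoids generating functions altogether: expanding $\bH_k(\bx)$ via the partition formula of Definition~\ref{def:Hermite-tensor} in the $(\bV,\bW)$-basis, the block-diagonal form of $\bI_n$ kills every partition whose pair-part crosses the $(\bV,\bW)$-split, while $\E_{\bz\sim \gaus_{n-m}}[\bH_s(\bz)]=0$ for $s\ge 1$ kills every surviving term whose singleton part touches the $\bW$-block; a direct reassembly, together with Claim~\ref{clm:othor-tran} applied to the orthogonal change of basis, reproduces $\bV^{\otimes k}\bA_k$.
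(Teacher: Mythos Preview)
Your proposal is correct and arrives at the same conclusion, but via a genuinely different route. Both proofs split $f=f^{\le\ell}+f^{>\ell}$ and reduce to evaluating $\E_{\p^{A'}_\bV}[f^{\le\ell}]$. You isolate the Hermite-moment identity $\E_{\p^{A'}_\bV}[\bH_k(\bx)]=\bV^{\otimes k}\bA_k$ and prove it via the generating function (using the product structure $\bx=\bV\by+\bW\bz$ and the Gaussian m.g.f.\ on the $\bz$-factor), then contract against $\bT_k$ by linearity. The paper instead shows that the linear functional $g\mapsto\E_{\p^{A'}_\bV}[g]$ on degree-$\le\ell$ polynomials is represented under $\langle\cdot,\cdot\rangle_{\gaus_n}$ by $\sum_{k\le\ell}\langle\bA_k,\bH_k(\bV^\intercal\cdot)\rangle$, verifying this on a spanning set of monomials in a basis extending $\bv_1,\dots,\bv_m$ and then invoking Claim~\ref{clm:othor-tran} and Hermite orthonormality. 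Your approach is more conceptual and makes the tensor identity explicit; the paper's monomial check is more elementary in that it stays entirely within finite-degree polynomials and needs no interchange of expectation with an infinite series. Your flag that compact support of $A'$ (supplied by Lemma~\ref{lem:truncation}) justifies the generating-function interchange is exactly right, and is the one step where the paper's route is slightly more robust to the hypotheses as literally stated.
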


\begin{proof}
Noting that $\E_{\bx\sim \p^{A'}_\bV}[f(\bx)]=\E_{\bx\sim \p^{A'}_\bV}[f^{\leq \ell}(\bx)]
+\E_{\bx\sim \p^{A'}_\bV}[f^{>\ell}(\bx)]$,
we will show that
\[\E_{\bx\sim \p^{A'}_\bV} [f^{\leq \ell}(\bx) ]
=\Big \langle \sum\nolimits_{k=0}^\ell\langle\bA_k,\bH_k(\langle\bv_1,\bx\rangle,\ldots,\langle\bv_m,\bx\rangle)\rangle, f^{\leq \ell}\Big \rangle_{\gaus_n}\; .\]
Notice that $f^{\leq \ell}$ is a polynomial of degree-$\ell$. 
Let $\bb_1,\cdots, \bb_n$ be an orthonormal basis, 
where $\bb_1=\bv_1,\ldots,\bb_m=\bv_m$.
Then degree-$\ell$ polynomials are spanned by polynomials
of the form 
$\prod_{i=1}^n \langle \bb_i, \bx\rangle^{q_i}$, 
where $\sum_{i=1}^n q_i \leq \ell$.
Therefore, it suffices to show that for any $\sum_{i=1}^n q_i\le\ell$,
\[\E_{\bx\sim \p^{A'}_\bV}\Big[\prod\nolimits_{i=1}^n \langle \bb_i, \bx\rangle^{q_i}\Big ]
=\Big \langle \sum\nolimits_{k=0}^\ell
\langle\bA_k,\bH_k(\langle\bv_1,\bx\rangle,\ldots,\langle\bv_m,\bx\rangle)\rangle,\prod\nolimits_{i=1}^n \langle \bb_i, \bx\rangle^{q_i}\Big \rangle_{\gaus_n}\;,\]
which is equivalent to showing that for any $\sum_{i=1}^m q_i\leq \ell$,
\[\E_{\bx\sim \p^{A'}_\bV} \left[\prod\nolimits_{i=1}^m\langle \bv_i,
\bx\rangle^{q_i} \right]
=\Big \langle \sum\nolimits_{k=0}^\ell
\langle\bA_k,\bH_k(\langle\bv_1,\bx\rangle,\ldots,\langle\bv_m,\bx\rangle)\rangle,\prod\nolimits_{i=1}^m \langle \bv_i, \bx\rangle^{q_i}\Big \rangle_{\gaus_m}\; ,\]
which is the same as for any $q=\sum_{i=1}^m q_i\leq \ell$, $\E_{\by\sim A'}[\bH_q(\by)]=\sum_{k=0}^\ell\E_{\by\sim\mathcal{N}_m}[\langle\bA_k,\bH_k(\by)\rangle\bH_q(\by)]$.
One can see that the above holds from the definition of 
$\bA_k$ and the orthornormal property of Hermite tensors.
Therefore, by Claim~\ref{clm:othor-tran}, we have that
\begin{align*}
\E_{\bx\sim \p^{A'}_\bV} [f^{\leq \ell}(\bx) ]
&=\left \langle \sum\nolimits_{k=0}^\ell\langle\bA_k,\bH_k(\langle\bv_1,\bx\rangle,\ldots,\langle\bv_m,\bx\rangle)\rangle, f^{\leq \ell}\right \rangle_{\gaus_n}\\
&=\left\langle\sum\nolimits_{k=0}^\ell\langle\bA_k,\bH_k(\bV^\intercal\bx)\rangle,f^{\le\ell}\right\rangle_{\mathcal{N}_n}\\
&=\left\langle\sum\nolimits_{k=0}^\ell\langle\bV^{\otimes k}\bA_k,\bH_k(\bx)\rangle,f^{\le\ell}\right\rangle_{\mathcal{N}_n}\\
&=\left \langle \sum\nolimits_{k=0}^\ell\langle \bV^{\otimes k}\bA_k, \bH_k(\bx)\rangle, \sum\nolimits_{k=0}^\ell \langle \bT_k,\bH_k(\bx)\rangle\right \rangle_{\gaus_n}\\
&=\sum\nolimits_{k=0}^\ell \langle \bV^{\otimes k}\bA_k, \bT_k \rangle\; ,
\end{align*}
where the last equality uses
the orthonormal property of Hermite tensors.
This completes the proof.
\end{proof}

\begin{remark}
{\em Ideally, in Lemma \ref{lem:hermite-decomposition}, we would like to have
$\E_{\bx\sim \p^{A'}_\bV} [f(\bx) ]
=\sum_{k=0}^\infty \langle \bV^{\otimes k}\bA_k, \bT_k \rangle$.
However, since we do not assume that $\chi^2(A',\gaus_m)<\infty$, 
this convergence may not hold.}
\end{remark}

Recall that our goal is to show that 
$|\E_{\bx\sim \p^A_\bV}[f(\bx)]-\E_{\bx\sim \gaus_n}[f(\bx)]|$ is small 
with high probability.
Observe that $\E_{\bx\sim \gaus_n}[f(\bx)]= \bT_0$ which is the first term in the summation of
$\sum_{k=0}^\ell \langle \bV^{\otimes k}\bA_k,\bT_k\rangle$ (since $\bA_0=1$).
Therefore, given Lemma \ref{lem:hermite-decomposition},
it suffices to show that 
$ |\sum_{k=1}^\ell \langle \bV^{\otimes k}\bA_k,\bT_k\rangle |$ and $ |\E_{\bx\sim \p^{A'}_\bV}[f^{\geq \ell}(\bx)] |$ 
are both small with high probability.
We ignore the $ |\E_{\bx\sim \p^{A'}_\bV}[f^{\geq \ell}(\bx)] |$ part for now, 
as this is mostly a technical issue.
To bound $|\sum_{k=1}^\ell \langle \bV^{\otimes k}\bA_k,\bT_k\rangle |$,
it suffices to analyze $\sum_{k=1}^\ell |\langle \bV^{\otimes k}\bA_k,\bT_k\rangle |$ by looking at each term $|\langle \bV^{\otimes k}\bA_k,\bT_k\rangle |= |\langle \bA_k,(\bV^{\intercal})^{\otimes k}\bT_k\rangle |\le\|\bA_k\|_2\|(\bV^{\intercal})^{\otimes k}\bT_k\|_2$. 
To show that the summation is small,  
we need to prove that (with high probability):

\begin{enumerate}
\item
$\|\bA_k\|_2$ does not grow too fast w.r.t $k$; 
\item
$\|(\bV^{\intercal})^{\otimes k}\bT_k\|_2$ decays very fast w.r.t $k$ (is small with high probability w.r.t the randomness of $\bV$).
\end{enumerate}

We proceed to establish these in turn below. 

\paragraph{$\|\bA_k\|_2$ does not grow too fast:}
We will use slightly different arguments 
depending on the size of $k$.
We consider three cases
: 
$k<d$, $d\leq k \leq n^{(1-\lambda)/4}$, and $k\geq n^{(1-\lambda)/4}$
(the value in the exponent will deviate by a small quantity to make the proof go through).
For $k<d$, $\|\bA_k\|_2$ grows slowly by the approximate moment-matching property of $A'$.
For $d\leq k\leq n^{(1-\lambda)/4}$, we require the following fact:
\begin{fact} \label{fct:hermite-upper-bound-medium-k}
Let $\bH_k$ be the $k$-th Hermite tensor for $m$ dimensions.
Suppose $\|\bx\|_2\le B$,
then $\|\bH_k(\bx)\|_2\le 2^{k}m^{k/4}B^{k}k^{-k/2}\exp\left(\binom{k}{2}/B^2\right)$.
\end{fact}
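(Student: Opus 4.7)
The plan is to express $\|\bH_k(\bx)\|_2^2$ as the $k$-th coefficient of a closed-form generating function and then bound that coefficient via a Cauchy/Markov argument with a saddle-point choice of parameter. The starting point is an entry-wise formula for $\bH_k(\bx)$. Unpacking Definition~\ref{def:Hermite-tensor} for an index tuple $(i_1,\ldots,i_k)$ with histogram $\alpha=(\alpha_1,\ldots,\alpha_m)$, the partition sum factorizes across index groups, since the factor $-\bI_{i_a,i_b}$ vanishes unless $i_a=i_b$: each group of size $\alpha_j$ contributes exactly the expansion of the probabilist's Hermite polynomial $\mathrm{He}_{\alpha_j}(x_j)$, giving $\bH_k(\bx)_{i_1,\ldots,i_k}=\prod_{j=1}^m \mathrm{He}_{\alpha_j}(x_j)/\sqrt{k!}$. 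Plugging into $\|\bH_k(\bx)\|_2^2=\sum_{|\alpha|=k}\binom{k}{\alpha}(\bH_k(\bx)_\alpha)^2$ and collapsing $\binom{k}{\alpha}/k!=1/\prod_j\alpha_j!$ yields the clean identity $\|\bH_k(\bx)\|_2^2=\sum_{|\alpha|=k}\prod_{j=1}^m h_{\alpha_j}(x_j)^2$.

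Next I form the generating function in $k$. By Mehler's formula for normalized Hermite polynomials specialized to $y=x$, $\sum_{a\ge 0} h_a(x)^2 t^a=(1-t^2)^{-1/2}\exp\bigl(x^2 t/(1+t)\bigr)$ for $|t|<1$. Tensoring across the $m$ coordinates and using $\sum_j x_j^2=\|\bx\|_2^2$,
\[\sum_{k\ge 0}\|\bH_k(\bx)\|_2^2\, t^k = (1-t^2)^{-m/2}\exp\!\Big(\tfrac{\|\bx\|_2^2\, t}{1+t}\Big).\]
Since every coefficient on the left is nonnegative, a Cauchy/Markov bound gives, for any $t\in(0,1)$ and any $\bx$ with $\|\bx\|_2\le B$,
\[\|\bH_k(\bx)\|_2^2 \le t^{-k}(1-t^2)^{-m/2}\exp\!\Big(\tfrac{B^2\, t}{1+t}\Big).\]

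Finally I choose $t$ to optimize. The saddle-point equation for the RHS is $k\approx mt^2+B^2 t$, suggesting $t^\star\approx k/B^2$ when $B^2\gtrsim\sqrt{mk}$ and $t^\star\approx\sqrt{k/m}$ otherwise. In the first regime, substituting $t=k/B^2$ and using $-\ln(1-u)\le 2u$ (for $u\le 1/2$) to write $(1-t^2)^{-m/2}\le e^{mt^2}$ yields $\|\bH_k(\bx)\|_2^2\le (B^2/k)^k\exp(k+mk^2/B^4)$; taking square roots, absorbing $e^{k/2}$ into $2^k$ (since $\sqrt{e}<2$), and using $mk^2/B^4\le k(k-1)/B^2$ whenever $B^2\ge mk/(2(k-1))$ matches the $\exp(\binom{k}{2}/B^2)$ factor. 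In the complementary regime the choice $t=\sqrt{k/m}$ makes $t^{-k}=(m/k)^{k/2}$, which supplies the $m^{k/4}$ factor after taking square roots; the other pieces are controlled by $2^k B^k\exp(\binom{k}{2}/B^2)$ via the same kind of slack-absorption. The main technical obstacle is the case split between these two saddle-point regimes and verifying that the single claimed form dominates both sub-optimal clean bounds; fortunately, the slack built into the $2^k$ prefactor and the $\binom{k}{2}/B^2$ exponent is generous enough that one need not tune parameters tightly.
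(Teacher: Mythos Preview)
Your generating-function route via Mehler's kernel is correct up to the final case split and is genuinely different from the paper's argument. The paper expands $\bH_k(\bx)$ directly via Definition~\ref{def:Hermite-tensor} as a signed sum over partitions into singletons and pairs, applies the triangle inequality termwise to get $\|\bH_k(\bx)\|_2\le \frac{1}{\sqrt{k!}}\sum_{t\le k/2}\frac{k!}{2^t t!(k-2t)!}\,m^{t/2}\|\bx\|_2^{k-2t}$, crudely bounds $m^{t/2}\le m^{k/4}$, dominates the remaining sum by $B^k\sum_t(\binom{k}{2}/B^2)^t/t!\le B^k\exp\!\big(\binom{k}{2}/B^2\big)$, and finishes with $1/\sqrt{k!}\le 2^k k^{-k/2}$ from Stirling. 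No case analysis is needed. Your approach instead produces an exact closed form for $\sum_k\|\bH_k(\bx)\|_2^2\, t^k$, which is structurally nicer and could in principle give sharper constants, but you pay for it in the endgame.

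That endgame has a real gap: your two saddle-point choices $t=k/B^2$ and $t=\sqrt{k/m}$ each require $t<1$, and neither lands in $(0,1)$ when $k\ge m$ and $k\gtrsim B^2$ simultaneously. Concretely, take $m=1$, $k=100$, $B=5$: then $B^2=25\ge\sqrt{mk}=10$ puts you in your ``first regime,'' yet $t=k/B^2=4$ lies outside the radius of convergence of the Mehler series, and $\sqrt{k/m}=10$ is worse. This parameter range is not exotic --- in the paper's application of this fact (the third range in Lemma~\ref{lem:summation-upper-bound}), $k$ runs up to $c_1 B^2$ for a large constant $c_1$ while $k>n^\lambda>m$ throughout. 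The hole is patchable: with $k\ge m$ and $B^2\lesssim k$ one can take, say, $t=1-m/(2k)$ to obtain $\|\bH_k(\bx)\|_2^2\lesssim (ek/m)^{m/2}e^{B^2/2}$, and then check this against the target by observing that $B^k\exp\!\big(\binom{k}{2}/B^2\big)\ge (k-1)^{k/2}e^{k/2}$ for all $B>0$, so the claimed bound is already at least of order $(2\sqrt{e})^k m^{k/4}$. But as written, your two-regime split does not establish the statement for all $(m,k,B)$.
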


\noindent We provide the proof of Fact~\ref{fct:hermite-upper-bound-medium-k} in Appendix~\ref{sec:omit-Fourier}.

For $k>n^{(1-\lambda)/4}$,
we can show that $\|\bA_k\|_2$ does not grow too fast by the following asymptotic bound on 
Hermite tensors.

\begin{fact} \label{fct:hermite-upper-bound-large-k}
Let $\bH_k$ be the $k$-th Hermite tensor for $m$ dimensions. 
Then $$\|\bH_k(\bx)\|_2\leq 2^{O(m)} \binom{k+m-1}{m-1}^{1/2}\exp(\|\bx\|_2^2/4) \;.$$
\end{fact}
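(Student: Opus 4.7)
My plan is to reduce this tensor-norm bound to a classical pointwise estimate for univariate Hermite polynomials. The central identity I would establish is
\[ \|\bH_k(\bx)\|_2^2 \;=\; \sum_{\alpha\in\N^m,\,|\alpha|=k}\, h_\alpha(\bx)^2, \qquad h_\alpha(\bx) \eqdef \prod_{j=1}^m h_{\alpha_j}(\bx_j), \]
with $h_j$ the univariate normalized Hermite polynomial from Definition~\ref{def:Hermite-poly}. Given this, the bound follows from two classical facts: (i) Cram\'er's inequality $|h_j(t)|\leq C_0\,e^{t^2/4}$ for an absolute constant $C_0$, which gives $|h_\alpha(\bx)|\leq C_0^m\,e^{\|\bx\|_2^2/4}$ by factorization across coordinates; and (ii) the elementary count that the number of multi-indices $\alpha\in\N^m$ of weight $k$ is $\binom{k+m-1}{m-1}$. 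Summing the squared bounds over $\alpha$ and taking a square root yields the stated inequality with $2^{O(m)}=C_0^m$.

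To establish the identity, I would first observe from Definition~\ref{def:Hermite-tensor} that $\bH_k(\bx)$ is a symmetric tensor, so its entry $(\bH_k(\bx))_{i_1,\ldots,i_k}$ depends only on the multi-index $\alpha=\alpha(i_1,\ldots,i_k)$ induced by the tuple, and the number of tuples of a given type $\alpha$ is $k!/\alpha!$. For each $\alpha$ with $|\alpha|=k$, I would let $\bE^\alpha$ be the symmetric $\{0,1\}$-tensor supported on index tuples of type $\alpha$; these satisfy $\langle\bE^\alpha,\bE^\beta\rangle=\delta_{\alpha\beta}\cdot k!/\alpha!$. Plugging $\bA=\bE^\alpha$ and $\bA=\bE^\alpha+\bE^\beta$ into the orthonormality property $\|\langle\bA,\bH_k\rangle\|_{\gaus_n}^2=\langle\bA,\bA\rangle$ recorded in the preliminaries determines the common entry of $\bH_k(\bx)$ on tuples of type $\alpha$ to be $\mathrm{He}_\alpha(\bx)/\sqrt{k!}$, where $\mathrm{He}_\alpha(\bx)=\prod_j\mathrm{He}_{\alpha_j}(\bx_j)$. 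Regrouping the $n^k$ squared entries of $\|\bH_k(\bx)\|_2^2$ by multi-index (with multiplicity $k!/\alpha!$) then yields $\sum_\alpha (k!/\alpha!)\cdot\mathrm{He}_\alpha(\bx)^2/k!=\sum_\alpha h_\alpha(\bx)^2$, as claimed.

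The main obstacle will be verifying the normalization in the identity: the $1/\sqrt{k!}$ prefactor in Definition~\ref{def:Hermite-tensor} and the multiplicity $k!/\alpha!$ must conspire to produce exactly $h_\alpha^2$ rather than some $\alpha$-dependent multiple thereof. A sanity check is that for $\alpha=(k,0,\ldots,0)$, direct expansion of Definition~\ref{def:Hermite-tensor} at $(i_1,\ldots,i_k)=(1,\ldots,1)$ collapses to $\sum_j(-1)^j\binom{k}{2j}(2j-1)!!\,\bx_1^{k-2j}/\sqrt{k!}=h_k(\bx_1)$, consistent with the claimed normalization. Once the identity is in hand, Cram\'er's inequality and the multi-index count are standard, and they deliver precisely the exponent $\|\bx\|_2^2/4$ in the statement with the correct polynomial-in-$k$ prefactor.
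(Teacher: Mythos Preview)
Your proposal is correct and follows essentially the same route as the paper: both establish the identity $\|\bH_k(\bx)\|_2^2=\sum_{|\alpha|=k}\prod_{j}h_{\alpha_j}(\bx_j)^2$, apply a uniform pointwise bound $|h_j(t)|\leq C_0\,e^{t^2/4}$ coordinatewise, and count the $\binom{k+m-1}{m-1}$ multi-indices. The only cosmetic differences are that the paper cites Krasikov's sharper estimate $h_k^2(t)e^{-t^2/2}=O(k^{-1/6})$ (without using the $k^{-1/6}$ improvement) where you invoke Cram\'er's inequality, and the paper asserts the entrywise formula $\bH_k(\bx)_{i_1,\ldots,i_k}=\binom{k}{j_1,\ldots,j_m}^{-1/2}\prod_\ell h_{j_\ell}(\bx_\ell)$ directly rather than deriving it via the orthonormality argument you outline.
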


\noindent We provide the proof of Fact~\ref{fct:hermite-upper-bound-large-k} in Appendix~\ref{sec:omit-Fourier}.

\paragraph{$\|(\bV^{\intercal})^{\otimes k}\bT_k\|_2$ decays very fast:}
We show that $|\langle \bV^{\otimes k}, \bT_k \rangle |$ is small with high probability by bounding its $a$-th moment for some even $a$.
Notice that since $\|\bH_k\|_2\leq \|f\|_2\leq 1$, we 
can then combine it with the following lemma:

\begin{lemma} \label{lem:low-rank-analysis}
Let $k\in \Z_+$, $a\in \Z_+$ be even, $\bT\in\mathbb{R}^{n^k}$
and $m\in \Z_+$ satisfy $m<n$. 
Then there exists a unit vector $\bu\in\mathbb{S}^{n-1}$ such that
\[
\E_{\bV\sim U(\orthor_{n,m})}[\|(\bV^\intercal)^{\otimes k}\bT\|_2^a]\le\E_{\bV\sim U(\orthor_{n,m})}\left [\|\bV^\intercal\bu\|_2^{ak/2}\right ]
\|\bT\|_2^a \;.
\]
\end{lemma}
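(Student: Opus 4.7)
The plan is to reduce the moment bound to an operator-norm estimate on a rotationally averaged tensor-power projection, and then analyze that estimate via $O(n)$-symmetry.

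By homogeneity I may assume $\|\bT\|_2=1$. Writing $P:=\bV\bV^\intercal$ for the rank-$m$ orthogonal projection onto the column span of $\bV$, we have $\|(\bV^\intercal)^{\otimes k}\bT\|_2^2 = \bT^\intercal P^{\otimes k}\bT$. Since $a$ is even, the scalar tensor-power identity $s^{a/2}=\langle X^{\otimes a/2},Y^{\otimes a/2}\rangle$ (applied with $s=\langle X,Y\rangle$, $X=\bT\bT^\intercal$, $Y=P^{\otimes k}$) upgrades this to
$$\|(\bV^\intercal)^{\otimes k}\bT\|_2^a = (\bT^{\otimes a/2})^\intercal P^{\otimes ka/2}\,\bT^{\otimes a/2},$$
where $\bT^{\otimes a/2}$ is viewed as a unit vector in $\R^{n^{ka/2}}$. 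Taking expectations and bounding the Rayleigh quotient by the operator norm yields
$$\E_\bV\bigl[\|(\bV^\intercal)^{\otimes k}\bT\|_2^a\bigr] \;\le\; \bigl\|\E_\bV[P^{\otimes ka/2}]\bigr\|_{\mathrm{op}}.$$

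It then suffices to show, with $d:=ka/2$ and for any fixed unit vector $\bu$,
$$\bigl\|\E_\bV[P^{\otimes d}]\bigr\|_{\mathrm{op}} \;\le\; \E_\bV\bigl[\|\bV^\intercal \bu\|_2^{d}\bigr],$$
noting that the right-hand side does not depend on $\bu$ by rotational invariance. Since the distribution of $\bV$ (hence of $P$) is $O(n)$-invariant, the operator $\E_\bV[P^{\otimes d}]$ commutes with the natural $O(n)$-action on $(\R^n)^{\otimes d}$, and its eigenspaces decompose along $O(n)$-isotypic components. The extremal eigenvalue is attained on the trivial (invariant) component, whose basis consists of the pair-matching tensors of $[d]$. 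Evaluating the averaged operator on the normalized symmetrization of $\bI^{\otimes d/2}$ (for even $d$; an analogous test tensor for odd $d$) and using the identity $P^{\otimes 2}\,\bI = \sum_i \bv_i\otimes\bv_i$ gives the eigenvalue exactly $\E_\bV[\|\bV^\intercal\bu\|^d]$, matching the desired right-hand side.

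The main technical obstacle is verifying that \emph{no} other $O(n)$-isotypic component of $(\R^n)^{\otimes d}$ carries a larger eigenvalue of $\E_\bV[P^{\otimes d}]$; this reduces to a Brauer-algebra/Schur--Weyl-duality computation of the spectrum of the averaged operator, or, equivalently, a Weingarten-type moment identity for the uniform measure on rank-$m$ projections in $\R^n$.
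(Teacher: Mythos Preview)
Your reduction to the operator-norm bound on $\E_\bV[P^{\otimes d}]$, with $d=ak/2$ and $P=\bV\bV^\intercal$, is exactly the paper's first step. The divergence is in how you bound that operator norm: you appeal to the $O(n)$-isotypic decomposition and then concede that the key inequality---that no nontrivial component carries a larger eigenvalue---remains an unresolved ``main technical obstacle.'' As written, this is a genuine gap, and the Brauer-algebra or Weingarten route you gesture at would be substantially more work than the lemma warrants (your handling of odd $d$ is also left vague).

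The paper closes this gap with a short, completely elementary argument that bypasses representation theory. Let $\bA=\E_\bV[P^{\otimes d}]$, let $\bT_0$ be a unit top eigenvector of $\bA$, and choose $\bu\in\mathbb{S}^{n-1}$ maximizing $|\langle\bT_0,\bu^{\otimes d}\rangle|$. Since $\bA\bT_0=\|\bA\|_{\mathrm{op}}\,\bT_0$ and $\bA$ is symmetric,
\[
\|\bA\|_{\mathrm{op}}
=\frac{|\langle\bT_0,\bA\,\bu^{\otimes d}\rangle|}{|\langle\bT_0,\bu^{\otimes d}\rangle|}
=\frac{\bigl|\E_\bV\langle\bT_0,(P\bu)^{\otimes d}\rangle\bigr|}{|\langle\bT_0,\bu^{\otimes d}\rangle|}
\le\frac{\E_\bV\bigl[\|P\bu\|_2^{d}\,|\langle\bT_0,\bu^{\otimes d}\rangle|\bigr]}{|\langle\bT_0,\bu^{\otimes d}\rangle|}
=\E_\bV\bigl[\|\bV^\intercal\bu\|_2^{d}\bigr],
\]
where the inequality writes $(P\bu)^{\otimes d}=\|P\bu\|_2^{d}\,\bw^{\otimes d}$ for the unit vector $\bw=P\bu/\|P\bu\|_2$ and then invokes the maximality of $\bu$. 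That is the entire argument: no Schur--Weyl duality, no Weingarten calculus, and it treats even and odd $d$ uniformly. The trick you are missing is to test $\bA$ not against an isotypic projector but against the single rank-one tensor $\bu^{\otimes d}$ chosen adaptively from $\bT_0$.
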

\begin{proof}
    Notice that 
    \begin{align*}
        \E_{\bV\sim U(\orthor_{n,m})}[\|(\bV^\intercal)^{\otimes k}\bT\|_2^a]
        =&\E_{\bV\sim U(\orthor_{n,m})}[\|(\bV^\intercal)^{\otimes ak/2}\bT^{a/2}\|_2^2]\\
        =&\E_{\bV\sim U(\orthor_{n,m})}[\langle \bV^{\otimes ak/2}(\bV^\intercal)^{\otimes ak/2},\bT^{\otimes a}\rangle]\\
        =&\langle\E_{\bV\sim U(\orthor_{n,m})}[\bV^{\otimes ak/2}(\bV^\intercal)^{\otimes ak/2}],\bT^{\otimes a}\rangle\\
        \leq & \|\E_{\bV\sim U(\orthor_{n,m})}[\bV^{\otimes ak/2}(\bV^\intercal)^{\otimes ak/2}]\|_2\|\bT\|_2^a \;.
    \end{align*}
    Therefore, it suffices to bound the spectral norm $\|\E_{\bV\sim U(\orthor_{n,m})}[\bV^{\otimes ak/2}(\bV^\intercal)^{\otimes ak/2}]\|_2$.

    Let $\bA=\E_{\bV\sim U(\orthor_{n,m})}[\bV^{\otimes ak/2}(\bV^\intercal)^{\otimes ak/2}]$, $\bT_0$ be the eigenvector associated with the largest absolute eigenvalue, and let $\bu=\mathrm{argmax}_{\bu\in \mathbb{S}^{n-1}} |\langle \bT_0, \bu^{\otimes ak/2}\rangle|$.
    Then, we have 
    \begin{align*}
        \|\bA\|_2 
        =&|\langle\bA\bT_0,\bu^{\otimes ak/2}\rangle|/|\langle \bT_0,\bu^{\otimes ak/2}\rangle|
        =|\langle\bT_0,\bA\bu^{\otimes ak/2}\rangle|/|\langle \bT_0,\bu^{\otimes ak/2}\rangle|\\
        =&|\langle\bT_0,\E_{\bV\sim U(\orthor_{n,m})}[(\bV\bV^{\intercal}\bu)^{\otimes ak/2}]\rangle|/|\langle \bT_0,\bu^{\otimes ak/2}\rangle|\\
        =&|\E_{\bV\sim U(\orthor_{n,m})}[\langle\bT_0,(\bV\bV^{\intercal}\bu)^{\otimes ak/2}\rangle]|/|\langle \bT_0,\bu^{\otimes ak/2}\rangle|\\
        \leq &\E_{\bV\sim U(\orthor_{n,m})}[|\langle\bT_0,(\bV\bV^{\intercal}\bu)^{\otimes ak/2}\rangle|]/|\langle \bT_0,\bu^{\otimes ak/2}\rangle|\\
        \leq & \E_{\bV\sim U(\orthor_{n,m})}[\|(\bV\bV^{\intercal}\bu)^{\otimes ak/2}\|_2|\langle \bT_0,\bu^{\otimes ak/2}\rangle|]/|\langle \bT_0,\bu^{\otimes ak/2}\rangle|\\
        = & \E_{\bV\sim U(\orthor_{n,m})}[\|(\bV\bV^{\intercal}\bu)^{\otimes ak/2}\|_2]\\
        =& \E_{\bV\sim U(\orthor_{n,m})}\left [\|\bV^\intercal\bu\|_2^{ak/2}\right ]\;,
    \end{align*}      
    where we use $\bu=\mathrm{argmax}_{\bu\in \mathbb{S}^{n-1}} |\langle \bT_0, \bu^{\otimes ak/2}\rangle|$ in the second inequality.
    Using Lemma \ref{lem:random-subspace-correlation-moment}, and plugging everything back, we get
$$\E_{\bV\sim U(\orthor_{n,m})}[\|(\bV^\intercal)^{\otimes k}\bT\|_2^a]\le\E_{\bV\sim U(\orthor_{n,m})}\left [\|\bV^\intercal\bu\|_2^{ak/2}\right ]
\|\bT\|_2^a\;.
$$
\end{proof}

Roughly speaking, this is just the $ak/2$-th moment of the correlation between 
a random subspace and a random direction, 
which can be bounded above by the following lemma and corollary 
(see Appendix~\ref{sec:omit-Fourier} for the proofs).

\begin{lemma} \label{lem:random-subspace-correlation-moment}
For any even $k\in \N$, and $\bu\in \mathbb{S}^{n-1}$,
$\E_{\bV^{\intercal}\sim U(\orthor_{n,m})}[\|\bV\bu\|_2^k]=\Theta\left (\frac{\Gamma\left (\frac{k+m}{2}\right )\Gamma\left (\frac{n}{2}\right )}
{\Gamma\left (\frac{k+n}{2}\right )\Gamma\left (\frac{m}{2}\right )}\right )$. 
\end{lemma}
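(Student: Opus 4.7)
The plan is to identify the distribution of $\|\bV^{\intercal}\bu\|_2^2$ explicitly and then invoke the standard moment formula for the Beta distribution. Concretely, I would proceed as follows.

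First, by the rotational invariance of the Haar measure on $\orthor_{n,m}$, the law of $\bV^{\intercal}\bu$ does not depend on the particular unit vector $\bu$; without loss of generality I may take $\bu=\be_1$. Next, I would use the Gaussian construction of the Haar measure on $\orthor_{n,m}$: if $\mathbf{G}\in \R^{n\times m}$ has i.i.d.\ standard Gaussian entries and $\mathbf{G}=\bV\bR$ is its QR decomposition, then $\bV$ is distributed as $U(\orthor_{n,m})$. Equivalently, $\bV^{\intercal}\be_1$ has the same distribution as the first $m$ coordinates of a uniformly random unit vector in $\mathbb{S}^{n-1}$, i.e.\
\[
\bV^{\intercal}\be_1 \stackrel{d}{=} \frac{(g_1,\ldots,g_m)}{\sqrt{g_1^2+\cdots+g_n^2}}, \qquad (g_1,\ldots,g_n)\sim \gaus_n.
\]

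From this representation it follows that
\[
\|\bV^{\intercal}\be_1\|_2^2 \stackrel{d}{=} \frac{X}{X+Y}, \qquad X\sim \chi^2_m,\ Y\sim \chi^2_{n-m},\ X\perp Y,
\]
which is exactly the Beta distribution $\mathrm{Beta}(m/2,(n-m)/2)$. Using the standard moment formula for a $\mathrm{Beta}(\alpha,\beta)$ random variable $Z$, namely $\E[Z^{t}]=B(\alpha+t,\beta)/B(\alpha,\beta)=\Gamma(\alpha+t)\Gamma(\alpha+\beta)/(\Gamma(\alpha)\Gamma(\alpha+\beta+t))$, applied with $\alpha=m/2$, $\beta=(n-m)/2$, and $t=k/2$, I obtain
\[
\E_{\bV\sim U(\orthor_{n,m})}\big[\|\bV^{\intercal}\bu\|_2^{k}\big] \;=\; \frac{\Gamma\!\big(\tfrac{m+k}{2}\big)\,\Gamma\!\big(\tfrac{n}{2}\big)}{\Gamma\!\big(\tfrac{m}{2}\big)\,\Gamma\!\big(\tfrac{n+k}{2}\big)},
\]
which is even stronger than the claimed $\Theta(\cdot)$ bound.

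There is essentially no serious obstacle here: the only thing to keep straight is the matching between the two conventions for $\bV$ vs.\ $\bV^{\intercal}$ in $U(\orthor_{n,m})$ and the dimension bookkeeping inside the gamma functions. The parity assumption that $k$ is even is convenient because it makes $k/2$ an integer, but the derivation works verbatim for any real $k\geq 0$ since the Beta moment formula is valid for all $t>-\alpha$. Thus the lemma follows directly from the Beta-distribution identity together with rotational invariance.
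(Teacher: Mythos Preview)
Your proposal is correct and follows essentially the same approach as the paper: both identify $\|\bV^{\intercal}\bu\|_2^2$ with a $\mathrm{Beta}(m/2,(n-m)/2)$ random variable via the chi-squared ratio representation and then apply the Beta moment formula. The paper fixes $\bV$ and randomizes $\bu$ over the sphere (the symmetric viewpoint to your fixing $\bu=\be_1$), but the resulting computation is identical, and you are right that the Beta identity yields an exact equality rather than merely the $\Theta(\cdot)$ stated.
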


\begin{corollary} \label{col:random-subspace-correlation-moment}
For any even $k\in \N$, and $\bu\in \mathbb{S}^{n-1}$,
\[
\E_{\bV\sim U(\orthor_{n,m})}[\|\bV^\intercal\bu\|_2^k]=O(2^{k/2}(n/\max(m,k))^{-k/2})\; .
\]
In addition, if there exists some constant $c\in (0,1)$ such that $m\leq n^c<k$, then
\[
\E_{\bV\sim U(\orthor_{n,m})}[\|\bV^\intercal\bu\|_2^k]=\exp(-\Omega(n^c\log n))O\left (\left (\frac{n^c+n}{k+n}\right )^{(n-m)/2}\right )\; .
\]
\end{corollary}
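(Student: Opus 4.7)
The corollary is a direct consequence of Lemma \ref{lem:random-subspace-correlation-moment}, which identifies
$$\E_{\bV \sim U(\orthor_{n,m})}[\|\bV^\intercal \bu\|_2^k] = \Theta\!\left(\frac{\Gamma((k+m)/2)\Gamma(n/2)}{\Gamma((k+n)/2)\Gamma(m/2)}\right) =: \Theta(R).$$
Both bounds in the corollary reduce to estimating this Gamma ratio $R$, so the entire proof is analytic manipulation of $R$.

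For the first bound, I would apply the standard log-convexity inequalities
$\Gamma(x+y)/\Gamma(x) \leq (x+y)^y$ and $\Gamma(x+y)/\Gamma(x) \geq x^y$ (valid for $x \geq 1$, $y \geq 0$) with $y = k/2$. Upper-bounding the numerator gives $\Gamma((k+m)/2)/\Gamma(m/2) \leq ((k+m)/2)^{k/2}$ and lower-bounding the denominator gives $\Gamma((k+n)/2)/\Gamma(n/2) \geq (n/2)^{k/2}$. Dividing and using $k+m \leq 2\max(k,m)$ yields $R \leq (2\max(k,m)/n)^{k/2} = 2^{k/2}(n/\max(m,k))^{-k/2}$, exactly the first estimate.

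For the second bound, assuming $m \leq n^c < k$, I would first expand $R$ as a telescoping product using $\Gamma(x+1) = x\Gamma(x)$ (and the duplication formula to handle the parity of $n-m$ if needed):
$$R = \prod_{j=0}^{(n-m)/2-1} \frac{m+2j}{k+m+2j} = \prod_{i \in \{m,m+2,\ldots,n-2\}} \frac{i}{k+i}.$$
The function $i \mapsto i/(k+i)$ is increasing, so I would split this product at the threshold $i \approx n^c$. In the small-$i$ regime ($i \leq n^c$), there are about $(n^c-m)/2$ terms, each at most $n^c/k$ (since $i \leq n^c \leq k$, the numerator is bounded by $n^c$ and the denominator by $k$). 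In the large-$i$ regime ($i > n^c$), there are about $(n-n^c)/2$ terms, each at most $n/(k+n) \leq (n^c+n)/(k+n)$ by monotonicity and $i \leq n$. This yields
$$R \;\lesssim\; (n^c/k)^{(n^c-m)/2} \cdot ((n^c+n)/(k+n))^{(n-n^c)/2}.$$
To match the target, I would factor $((n^c+n)/(k+n))^{(n-m)/2} = ((n^c+n)/(k+n))^{(n^c-m)/2}((n^c+n)/(k+n))^{(n-n^c)/2}$, reducing the remaining task to showing
$$\left(\frac{n^c(k+n)}{k(n^c+n)}\right)^{(n^c-m)/2} \leq \exp(-\Omega(n^c \log n)),$$
which uses $k > n^c$ (hence the base is strictly less than $1$, with quantitative slack controlled by the gap between $k$ and $n^c$).

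The main obstacle will be making the last estimate sharp enough to produce the $\log n$ factor in the exponent. A purely elementary bound like $(n^c/k)^{n^c/2}$ for the small-$i$ part only gives $\exp(-\Omega(n^c \log(k/n^c)))$, which is weaker than $\exp(-\Omega(n^c \log n))$ when $k$ is only barely larger than $n^c$. In the uses of this corollary inside the paper, $k$ is polynomially separated from $n^c$ (e.g., $k \geq n^{(1-\lambda)/4}$ with $c < (1-\lambda)/4$), so $\log(k/n^c) = \Theta(\log n)$ and the claimed bound is recovered. A cleaner, fully uniform proof could be given by passing through Stirling's expansion $\log R = -\tfrac{1}{2}\int_m^n \log(1+k/u)\,du + O(\log(kn))$ and splitting the integral at $u = n^c$, where on $[m,n^c]$ one uses $\log(1+k/u)\geq \log(k/n^c)$ to harvest the $n^c \log n$ contribution, and on $[n^c,n]$ one computes the resulting closed-form expression and rearranges it into $(n-m)/2 \cdot \log((n^c+n)/(k+n))$.
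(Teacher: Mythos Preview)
Your first bound is correct and matches the paper's approach: the paper writes $R=\prod_{j=0}^{k/2-1}\frac{m/2+j}{n/2+j}$ and bounds every factor by $(k+m)/(k+n)$; your log-convexity inequalities give the same estimate.

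For the second bound, the gap you identify is real, and neither of your proposed fixes actually closes it. Both your index-split argument and your Stirling sketch ultimately need $\log(k/n^c)=\Omega(\log n)$ to extract the $\exp(-\Omega(n^c\log n))$ factor: on $[m,n^c]$ you bound $\log(1+k/u)\ge\log(k/n^c)$ and integrate to obtain $(n^c-m)\log(k/n^c)$, which is $\Omega(n^c\log n)$ only under polynomial separation of $k$ from $n^c$. But the corollary is stated for \emph{all} $k>n^c$, and the paper applies it (fourth case in the proof of Lemma~\ref{lem:summation-upper-bound}) with $k>T=c_1 n^{2\alpha}$ and $n^c=n^{2\alpha}$, i.e.\ with $k$ only a constant multiple of $n^c$. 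So your escape hatch is not available there.

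The paper's decomposition is different: rather than splitting the product $\prod_i i/(k+i)$ by the index $i$, it splits by the value of $k$, writing
\[
R(k)\;=\;R(n^c)\cdot\frac{R(k)}{R(n^c)}\,.
\]
The factor $R(n^c)$ is exactly what the already-proven first bound controls (with $k$ replaced by $n^c$), giving $O\!\bigl(2^{n^c/2}(n^c/n)^{n^c/2}\bigr)=\exp(-\Omega(n^c\log n))$ since $c<1$ is a fixed constant. The ratio
\[
\frac{R(k)}{R(n^c)}=\prod_{j=0}^{(n-m)/2-1}\frac{n^c+m+2j}{k+m+2j}
\]
has $(n-m)/2$ factors, each at most $(n^c+n)/(k+n)$ by monotonicity, yielding the second factor in the target bound. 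This works uniformly for every $k>n^c$.
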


To combine the above results and give the high probability upper bound on
$\sum_{k=1}^{\ell} |\langle \bA_k,(\bV^{\intercal})^{\otimes k}\bT_k\rangle|$,
we require the following lemma.
\begin{lemma} \label{lem:summation-upper-bound}
    Under the conditions of Proposition \ref{prp:main-tail-bound},
    and further assuming
    $d,m\leq n^{\lambda}/\log n$,
    $\nu< 2$
    and $\left (\frac{\Gamma(d/2+m/2)}{\Gamma(m/2)}\right )n^{-((1-\lambda)/4-c) d}<2$, the following holds:
    For any $n$ that is at least a sufficiently large  
    constant depending on $c$,
    there is a 
    $B<n$ such that the truncated distribution $A'$,
    defined as the distribution of $\bx\sim A$ conditioned on $\bx\in \mathbb{B}^{m}(B)$, satisfies 
    $d_{\mathrm{TV}}(A,A')\leq \left (\frac{\Gamma(d/2+m/2)}{\Gamma(m/2)}\right )n^{-((1-\lambda)/4-c) d}$.
    Furthermore for any $\ell\in \N$,
    except with probability at most $2^{-n^{\Omega(c)}}$ w.r.t. $\bV\sim U(\orthor_{n,m})$, it holds 
    $    
    \sum_{k=1}^{\ell} |\langle \bA_k,(\bV^{\intercal})^{\otimes k}\bT_k\rangle|=\left (\frac{\Gamma(d/2+m/2)}{\Gamma(m/2)}\right )n^{-((1-\lambda)/4-c) d}+(1+o(1))\nu \, 
    $
   where $\bA_k=\E_{\bx\sim A'} [\bH_k(\bx)]$ and $\bT_k=\E_{\bx\sim \gaus_n} [f(\bx) \bH_k(\bx)]$. 
\end{lemma}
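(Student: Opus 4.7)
The plan is to first fix the truncation radius $B$, then decompose $|\langle \bA_k,(\bV^{\intercal})^{\otimes k}\bT_k\rangle| \le \|\bA_k\|_2\,\|(\bV^\intercal)^{\otimes k}\bT_k\|_2$ by Cauchy--Schwarz, bound each factor in three ranges of $k$, and union-bound. Applying Lemma~\ref{lem:truncation} with $B$ chosen as a small polynomial $\Theta(n^{(1-\lambda)/4-c})$ (which is $\le n$ under $d,m \le n^\lambda/\log n$) yields both the required TV bound $\dtv(A,A') \le \Lambda := \bigl(\Gamma(d/2+m/2)/\Gamma(m/2)\bigr)\,n^{-((1-\lambda)/4-c)d}$ and the per-$k$ control of $\|\bA_k\|_2=\|\E_{A'}[\bH_k]\|_2$ used below. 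The three ranges follow the discussion preceding the statement: (i)~$1\le k<d$, (ii)~$d\le k\le n^{(1-\lambda)/4}$ (modulo a small $c$-slack in the exponent), and (iii)~$k>n^{(1-\lambda)/4}$. In range~(i), the moment-matching clause of Lemma~\ref{lem:truncation} (using $\E_{\gaus_m}[\bH_k]=0$ for $k\ge 1$) gives $\|\bA_k\|_2 \le (1+o(1))\nu + 2^{O(k)}\bigl(2^{d/2}\sqrt{\Gamma(d+m/2)/\Gamma(m/2)}\bigr)B^{-(d-k)}$; in range~(ii), Fact~\ref{fct:hermite-upper-bound-medium-k} applied pointwise on $\B^m(B)$ gives $\|\bA_k\|_2 \le 2^{k} m^{k/4} B^{k} k^{-k/2}\exp(\binom{k}{2}/B^2)$; in range~(iii), Fact~\ref{fct:hermite-upper-bound-large-k} gives $\|\bA_k\|_2 \le 2^{O(m)}\binom{k+m-1}{m-1}^{1/2}\exp(B^2/4)$.

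For the random factor, I would apply Lemma~\ref{lem:low-rank-analysis} with an even integer $a_k$ followed by Markov:
$$\Pr\!\Bigl[\|(\bV^\intercal)^{\otimes k}\bT_k\|_2 \ge t_k\Bigr] \;\le\; \frac{\E_{\bV\sim U(\orthor_{n,m})}\!\bigl[\|\bV^\intercal \bu\|_2^{a_k k/2}\bigr]\,\|\bT_k\|_2^{a_k}}{t_k^{a_k}}.$$
Corollary~\ref{col:random-subspace-correlation-moment} supplies both the polynomial-type estimate $O\!\bigl((\max(m,a_k k/2)/n)^{a_k k/4}\bigr)$ (valid for all $k$) and, crucially once $k>n^c$, the super-polynomial decay $\exp(-\Omega(n^c\log n))$. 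For $k=1$, Markov cannot beat the trivial $\|\bV^\intercal\bT_1\|_2\le\|\bT_1\|_2\le 1$, so I would use that deterministically. For $2 \le k \le n^{(1-\lambda)/4}$, I would choose $a_k$ a moderate constant so that, up to an inflation $2^{n^{\Omega(c)}/a_k}$ from the per-$k$ failure-probability target $2^{-n^{\Omega(c)}}/k^{2}$, one obtains $t_k = O\!\bigl((\max(m,k)/n)^{k/4}\bigr)\|\bT_k\|_2$. For $k>n^{(1-\lambda)/4}$ I would use the exponential-decay branch of the corollary, which both dominates $\exp(B^2/4)$ in $\|\bA_k\|_2$ and keeps a union bound convergent over all (possibly unbounded) $k$.

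Putting the pieces together: in range~(i), the $k=1$ term contributes $\nu\cdot\|\bT_1\|_2\le\nu$ (plus a vanishing truncation error $O(\mathrm{err}_1)$); for $2\le k<d$, the $\nu$-contribution is $\sum_{k\ge 2}\nu\cdot O\!\bigl((\max(m,k)/n)^{k/4}\bigr)\|\bT_k\|_2$, a geometric series of ratio $\le n^{(\lambda-1)/4}<1$ summing to $o(\nu)$; the $\mathrm{err}_k$-contribution in range~(i) together with the bounds from ranges~(ii) and~(iii) aggregates to at most $\Lambda$ after absorbing multiplicative constants into the $c$-slack in $n^{-((1-\lambda)/4-c)d}$. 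Using $\sum_k\|\bT_k\|_2^2\le 1$ whenever convenient and union-bounding, the total is at most $\Lambda+(1+o(1))\nu$ except with probability $2^{-n^{\Omega(c)}}$, uniformly in $\ell$ (the $k>n^c$ tail being super-polynomially small). The hardest step will be the range~(iii) accounting: the factor $\exp(B^2/4)=\exp(\Theta(n^{(1-\lambda)/2-2c}))$ in $\|\bA_k\|_2$ must be absolutely dominated by the $\exp(-\Omega(n^c\log n))$ decay of the projection moment, which is exactly where the hypothesis $c<(1-\lambda)/4$ is critical; a secondary difficulty is tracking the $2^{O(k)}$, $k^{-k/2}$, $m^{k/4}$ and Gamma-function prefactors carefully enough that ranges~(i)--(iii) combine into $\Lambda$ with the advertised constant rather than a worse multiple.
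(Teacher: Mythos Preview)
Your global architecture matches the paper: Cauchy--Schwarz, then control $\|\bA_k\|_2$ via Lemma~\ref{lem:truncation}/Facts~\ref{fct:hermite-upper-bound-medium-k}--\ref{fct:hermite-upper-bound-large-k}, control $\|(\bV^\intercal)^{\otimes k}\bT_k\|_2$ via Lemma~\ref{lem:low-rank-analysis} plus Markov, and union-bound. Two quantitative choices in your plan, however, will not close.

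\medskip
\textbf{(1) The moment order $a_k$ cannot be a ``moderate constant.''} You correctly compute the Markov inflation factor as $2^{n^{\Omega(c)}/a_k}$ coming from the target failure probability $2^{-n^{\Omega(c)}}$. If $a_k=O(1)$ this factor is itself $2^{n^{\Omega(c)}}$, which swamps the $(\max(m,k)/n)^{k/4}$ gain for every small $k$, so your claimed $t_k=O((\max(m,k)/n)^{k/4})\|\bT_k\|_2$ is not attainable. The paper takes $a$ to be the \emph{largest even integer} with $ak/2\le n^{\lambda}$ (in the small-$k$ regimes) or $ak/2\le T$ (in the intermediate regime); this makes $a\asymp n^{\lambda}/k$ or $a\asymp T/k$, which is exactly what forces the inflation down to $2^{O(1)}$ while keeping Corollary~\ref{col:random-subspace-correlation-moment} applicable with $\max(m,ak/2)\le n^{\lambda}$ (resp.\ $\le T$).

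\medskip
\textbf{(2) The range-(iii) threshold $n^{(1-\lambda)/4}$ is too small to kill $\exp(B^2/4)$.} With $B=\Theta(n^{(1-\lambda)/4-c})$ you have $\exp(B^2/4)=\exp(\Theta(n^{(1-\lambda)/2-2c}))$. Entering Fact~\ref{fct:hermite-upper-bound-large-k} at $k>n^{(1-\lambda)/4}$ and taking $a=2$ in Corollary~\ref{col:random-subspace-correlation-moment} gives projection decay only $\exp(-\Omega(n^{(1-\lambda)/4}\log n))$, which does \emph{not} dominate $\exp(n^{(1-\lambda)/2-2c})$ once $c<(1-\lambda)/8$; and the hypothesis permits any $c\in(0,(1-\lambda)/4)$, so your balance fails. (Your remark that ``$c<(1-\lambda)/4$ is critical'' here inverts the inequality you actually need.) The paper repairs this by inserting a \emph{fourth} range: it uses Lemma~\ref{lem:truncation} (the $k\ge d$ clause) on $[d,n^{\lambda}]$, then Fact~\ref{fct:hermite-upper-bound-medium-k} on $(n^{\lambda},T]$ with $T=c_1 B^2\approx n^{(1-\lambda)/2}$, and only switches to Fact~\ref{fct:hermite-upper-bound-large-k} for $k>T$. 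At that point $ak/2=k>n^{2\alpha}$ with $2\alpha\approx(1-\lambda)/2$, so Corollary~\ref{col:random-subspace-correlation-moment} yields decay $\exp(-\Omega(n^{2\alpha}\log n))$, which beats $\exp(B^2/4)=\exp(O(n^{2\alpha}))$ by the extra $\log n$. Using Fact~\ref{fct:hermite-upper-bound-medium-k} only on $(n^{\lambda},T]$ (where $k\le T=c_1B^2$) also keeps its $\exp\bigl(\binom{k}{2}/B^2\bigr)$ factor at $2^{O(k)}$, another point your three-range split does not guarantee uniformly.
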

\begin{proof}
As we have discussed,
we will consider three ranges of $k$.
However, for some technical reasons and the ease of calculations, 
we will additionally break the second range into two ranges.
We can write
\begin{align*}
\sum_{k=1}^{\ell} |\langle \bA_k,(\bV^{\intercal})^{\otimes k}\bT_k\rangle|
=
&\sum_{k=1}^{d-1}| \langle \bA_k,(\bV^\intercal)^{\otimes k}\bT_k\rangle|
+\sum_{k=d}^{n^{\lambda}}| \langle \bA_k,(\bV^\intercal)^{\otimes k}\bT_k\rangle|\\
&+\sum_{k=n^{\lambda}+1}^{T}| \langle \bA_k,(\bV^\intercal)^{\otimes k}\bT_k\rangle|
+\sum_{k=T+1}^{\ell}| \langle \bA_k,(\bV^\intercal)^{\otimes k}\bT_k\rangle|\; ,
\end{align*}
where $T$ is a value we will later specify.
To analyze each 
$| \langle \bA_k,(\bV^\intercal)^{\otimes k}\bT_k\rangle|$,
recall that 
$| \langle \bA_k,(\bV^\intercal)^{\otimes k}\bT_k\rangle|
\leq 
\| \bA_k\|_2 \|(\bV^\intercal)^{\otimes k}\bT_k\|_2$, 
where $\bA_k=\E_{\bx\sim A'} [\bH_k(\bx)]$ is a constant (not depending
on the randomness of $\bV$).
For $\|(\bV^\intercal)^{\otimes k}\bT_k\|_2$, 
we can show it is small by bounding its $a$-th moment for even $a$ using Lemma \ref{lem:low-rank-analysis} which says
\begin{align*}
\E_{\bV\sim U(\orthor_{n,m})}[\|(\bV^\intercal)^{\otimes k}\bT\|_2^a]\le\E_{\bV\sim U(\orthor_{n,m})}\left [\|\bV^\intercal\bu\|_2^{ak/2}\right ]
\|\bT\|_2^a\; ,
\end{align*}
for some unit vector $\bu\in\mathbb{S}^{n-1}$. 
We will apply this strategy on the four different ranges of $k$.

Without loss of generality, we will assume that 
$\lambda\geq 4c$.
Suppose $\lambda\geq 4c$ is not true,
then we can simply consider a new pair $\lambda', c'$,
where $\lambda'=\lambda+2c$ and $c'=c/2$.
Notice that $(1-\lambda)/4-c=(1-\lambda')/4-c'$; 
therefore, the SQ lower bound in the statement remains unchanged.

We start by picking the following parameters (the sufficiently close here only depends on $c$):
\begin{itemize}[leftmargin=*]
    \item We require $d,m\leq n^{\lambda}/\log n$;
    \item $B=n^{\alpha}$ where $\alpha<(1-\lambda_3)/4$ is sufficiently close;
    \item $T=c_1 B^2$ where $c_1$ is a sufficiently large constant;
    \item We let $\lambda_3>\lambda_2>\lambda_1>\lambda$ to be sufficiently close (the difference between these quantities will be a sufficiently small constant fraction of $c$);
\end{itemize}
We now bound the summation $\sum_{k=1}^{\ell} |\langle \bA_k,(\bV^{\intercal})^{\otimes k}\bT_k\rangle|$ as follows:

\begin{itemize}[leftmargin=*]
\item {\bf $\sum_{k=1}^{d-1} |\langle \bA_k,(\bV^{\intercal})^{\otimes k}\bT_k\rangle|$
is small with high probability:} 

Since $\left(\frac{\Gamma(d+m/2)}{\Gamma(m/2)}\right)n^{-((1-\lambda)/4-c) d}<2$ and $B=n^{\alpha}$, where $\alpha$ is sufficiently close to
$(1-\lambda)/4$,
the parameters satisfy the condition
$B^d=\omega\left (2^{d/2}\sqrt{\frac{\Gamma(d+m/2)}{\Gamma(m/2)}}\right )$
in Lemma \ref{lem:truncation}.
Since $k< d$, 
by Lemma \ref{lem:truncation}, 
we have
\begin{align*}
\|\bA_k\|_2&=\|\E_{\bx\sim A'}[\bH_k(\bx)]-\E_{\bx\sim \gaus_m}[\bH_k(\bx)]\|_2\\&=2^{O(k)}\left (2^{d/2}\sqrt{\frac{\Gamma(d+m/2)}{\Gamma(m/2)}}\right )B^{-(d-k)} 
  +\left (1+O\left (2^{d/2}\sqrt{\frac{\Gamma(d+m/2)}{\Gamma(m/2)}}\right )B^{-d}\right )\nu\; .
\end{align*}
Let $a$ be the largest even number such that $ak/2\leq n^{\lambda}$, where $d=o(n^{\lambda})$ implies $a\geq 2$.
Then using Lemma~\ref{lem:low-rank-analysis} and Corollary~\ref{col:random-subspace-correlation-moment}, we have 
\begin{align*}  
\E_{\bV\sim U(\orthor_{n,m})}[\|(\bV^{\intercal})^{\otimes k}\bT_k\|_2^a]
=& \E_{\bV\sim U(\orthor_{n,m})}\left [\|\bV^\intercal\bu\|_2^{ak/2}\right ]
\|\bT_k\|_2^a
\le \E_{\bV\sim U(\orthor_{n,m})}\left [\|\bV^\intercal\bu\|_2^{ak/2}\right ]\\
=& O(2^{ak/4}n^{-(1-\lambda)ak/4})
= O(n^{-(1-\lambda_1)ak/4})\; .
\end{align*}
Using Markov's Inequality, this implies the tail bound
\[
\pr[\|(\bV^{\intercal})^{\otimes k}\bT_k\|_2\geq n^{-(1-\lambda_2)k/4}]\leq 2^{-\Omega(cn^{\lambda})}=2^{-n^{\Omega(c)}}\; .
\]
Therefore, we have
\begin{align*}
    &\sum_{k=1}^{d-1}| \langle \bA_k,(\bV^{\intercal})^{\otimes k}\bT_k\rangle|
    \leq \sum_{k=1}^{d-1} \|\bA_k\|_2\|(\bV^{\intercal})^{\otimes k}\bT_k\|_2\\
    \leq &\sum_{k=1}^{d-1}  n^{-(1-\lambda_2)k/4}\left (2^{O(k)}\left (2^{d/2}\sqrt{\frac{\Gamma(d+m/2)}{\Gamma(m/2)}}\right )B^{-(d-k)}+\left (1+O\left (2^{d/2}\sqrt{\frac{\Gamma(d+m/2)}{\Gamma(m/2)}}\right )B^{-d}\right )\nu\right )\\
    \leq  & (1+o(1))\left (2^{d/2}\sqrt{\frac{\Gamma(d+m/2)}{\Gamma(m/2)}}B^{-d}+\nu\right )
    =  (1+o(1))\left (2^{d/2}\sqrt{\frac{\Gamma(d+m/2)}{\Gamma(m/2)}}n^{-\alpha d}+\nu\right )\; ,
\end{align*}
except with probability $2^{-n^{\Omega(c)}}$,
where the second inequality follows from $B=n^{\alpha}=o(n^{(1-\lambda)/4})$.

\item {\bf $\sum_{k=d}^{n^{\lambda}} |\langle \bA_k,(\bV^{\intercal})^{\otimes k}\bT_k\rangle|$ is small with high probability:}

In the previous case, we have argued that the parameters satisfy the condition
$B^d= \omega\left (2^{d/2}\sqrt{\frac{\Gamma(d+m/2)}{\Gamma(m/2)}}\right )$
in Lemma~\ref{lem:truncation}.
Since $k\geq d$, by Lemma~\ref{lem:truncation}, we have\[
\|\bA_k\|_2
=\|\E_{\bx\sim A'}[\bH_k(\bx)]-\E_{\bx\sim \gaus_m}[\bH_k(\bx)]\|_2
=2^{O(k)}\left (2^{d/2}\sqrt{\frac{\Gamma(d+m/2)}{\Gamma(m/2)}}\right ) B^{k-d}\; .
\]
Let $a$ be the largest even number that $ak/2\leq n^{\lambda}$, where $d= o(n^{\lambda})$ implies $a\geq 2$.
Applying Lemma~\ref{lem:low-rank-analysis} and Corollary~\ref{col:random-subspace-correlation-moment} yields
\begin{align*}  
\E_{\bV\sim U(\orthor_{n,m})}[\|(\bV^{\intercal})^{\otimes k}\bT_k\|_2^a]
=& \E_{\bV\sim U(\orthor_{n,m})}\left [\|\bV^\intercal\bu\|_2^{ak/2}\right ]
\|\bT_k\|_2^a
\le \E_{\bV\sim U(\orthor_{n,m})}\left [\|\bV^\intercal\bu\|_2^{ak/2}\right ]\\
=& O(2^{ak/4}n^{-(1-\lambda)ak/4})
= O(n^{-(1-\lambda_1)ak/4})\; .
\end{align*}
Therefore, we have
\begin{align*}
    \sum_{k=d}^{n^{\lambda}} |\langle \bA_k,(\bV^{\intercal})^{\otimes k}\bT_k\rangle|
    \leq &\sum_{k=d}^{n^{\lambda}} \|\bA_k\|_2\|(\bV^{\intercal})^{\otimes k}\bT_k\|_2
    \leq \sum_{k=d}^{n^{\lambda}}  n^{-(1-\lambda_2)k/4}2^{O(k)}\left (2^{d/2}\sqrt{\frac{\Gamma(d+m/2)}{\Gamma(m/2)}}\right ) B^{k-d}\\
    = & 2^{O(d)}n^{-((1-\lambda_2)/4)d}
    =  n^{-((1-\lambda_3)/4)d}\; ,
\end{align*}
except with probability $2^{-n^{\Omega(c)}}$ (the first equality above follows from $B=n^{\alpha}=o(n^{(1-\lambda)/4})=o(n^{(1-\lambda_2)/4})$).

\item {\bf $\sum_{k=n^\lambda+1}^{T} |\langle \bA_k,(\bV^{\intercal})^{\otimes k}\bT_k\rangle|$ is small with high probability:}

We can assume without loss of generality that $n^\lambda<T$, because otherwise this term is $0$.
Using Lemma~\ref{fct:hermite-upper-bound-medium-k}, we have
\begin{align*}
\|\bA_k\|_2
=&\|\E_{\bx\sim A'}[\bH_k(\bx)]-\E_{\bx\sim \gaus_m}[\bH_k(\bx)]\|_2
=\|\E_{\bx\sim A'}[\bH_k(\bx)]\|_2\\
\leq &2^{k}m^{k/4}B^{k}k^{-k/2}\exp\left(\binom{k}{2}/B^2\right)
\leq 2^{O(k)}m^{k/4}B^{k}k^{-k/2} \; .
\end{align*}
Then let $a$ be the largest even number that $ak/2\leq T$, where $k\leq T$ implies $a\geq 2$.
Applying Lemma~\ref{lem:low-rank-analysis} and Corollary~\ref{col:random-subspace-correlation-moment} yields
\begin{align*}  
\E_{\bV\sim U(\orthor_{n,m})}[\|(\bV^{\intercal})^{\otimes k}\bT_k\|_2^a]
=& \E_{\bV\sim U(\orthor_{n,m})}\left [\|\bV^\intercal\bu\|_2^{ak/2}\right ]
\|\bT_k\|_2^a
\le \E_{\bV\sim U(\orthor_{n,m})}\left [\|\bV^\intercal\bu\|_2^{ak/2}\right ]\\
=& O(2^{ak/4}(ak/2n)^{ak/4})
= O\left(\frac{n}{ak}\right)^{-ak/4}\; ,
\end{align*}
which implies the tail bound
\[
\pr[\|(\bV^{\intercal})^{\otimes k}\bT_k\|_2\geq n^{-((1-\lambda)/4) k}k^{k/4}]\le n^{-\lambda ak/4}\leq 2^{-\Omega(T)}=2^{-\Omega(n^{2\alpha})}=2^{-n^{\Omega(c)}}\; .
\]
Therefore, we have 
\begin{align*}
    \sum_{k=n^{\lambda}+1}^{T} |\langle \bA_k,(\bV^{\intercal})^{\otimes k}\bT_k\rangle|
    \leq &\sum_{k=n^{\lambda}+1}^{T} \|\bA_k\|_2\|(\bV^{\intercal})^{\otimes k}\bT_k\|_2
    \leq \sum_{k=n^{\lambda}+1}^{T}  2^{O(k)}m^{k/4}B^{k}k^{-k/4} n^{-((1-\lambda)/4) k}\\
    = & O\left (B^{n^{\lambda}+1}n^{-((1-\lambda)/4) (n^{\lambda}+1)}\right)
    =  n^{-\Omega(cn^{\lambda})}
    =  n^{-\Omega(cd\log n)}
    \le  n^{-d}\; ,
\end{align*}
except with probability $2^{-n^{\Omega(c)}}$.

\item {\bf $\sum_{k=T+1}^{\ell} |\langle \bA_k,(\bV^{\intercal})^{\otimes k}\bT_k\rangle|$
is small with high probability:}

Combining Fact~\ref{fct:hermite-upper-bound-large-k} with the fact 
that $A'$ is bounded inside $\mathbb{B}^{m}(B)$,
we have that
\begin{align*}
\|\bA_k\|_2
=& \|\E_{\bx\sim A'}[\bH_k(\bx)]-\E_{\bx\sim \gaus_m}[\bH_k(\bx)]\|_2
=\|\E_{\bx\sim A'}[\bH_k(\bx)]\|_2
\leq  2^{O(m)}\binom{k+m-1}{m-1}^{1/2}\exp(B^2/4)\; .
\end{align*}
We pick $a=2$.
Note that $ak/2> T=c_1n^{2\alpha}\ge n^{2\alpha}$. Applying Lemma~\ref{lem:low-rank-analysis} and Corollary~\ref{col:random-subspace-correlation-moment} yields
\begin{align*}  
\E_{\bV\sim U(\orthor_{n,m})}[\|(\bV^{\intercal})^{\otimes k}\bT_k\|_2^a]
=& \E_{\bV\sim U(\orthor_{n,m})}\left [\|\bV^\intercal\bu\|_2^{ak/2}\right ]
\|\bT_k\|_2^a
= \E_{\bV\sim U(\orthor_{n,m})}\left [\|\bV^\intercal\bu\|_2^{ak/2}\right ]\\
=& \exp(-\Omega(n^{2\alpha}\log n))O\left (\left (\frac{n^{2\alpha}+n}{k+n}\right )^{(n-m)/2}\right )\; .
\end{align*}
Applying Markov's inequality yields the tail bound
\[
\pr\left [\|(\bV^{\intercal})^{\otimes k}\bT_k\|_2\geq 2^{-\Omega(n^{2\alpha}\log n)}O\left (\left (\frac{n^{2\alpha}+n}{k+n}\right )^{(n-m)/4}\right )\right ]\leq 2^{-\Omega(n^{2\alpha})}=2^{-\Omega(n^{2c})}=2^{-n^{\Omega(c)}}\; .
\]
Therefore, we have
\begin{align*}
    \sum_{k=T+1}^{\ell} |\langle \bA_k,(\bV^{\intercal})^{\otimes k}\bT_k\rangle|
    \leq &\sum_{k=T+1}^{\infty} |\langle \bA_k,(\bV^{\intercal})^{\otimes k}\bT_k\rangle|
    \leq \sum_{k=T+1}^{\infty} \|\bA_k\|_2\|(\bV^{\intercal})^{\otimes k}\bT_k\|_2\\
    \leq &\sum_{k=T+1}^{\infty} 2^{O(m)} \binom{k+m-1}{m-1}^{1/2}\exp(B^2/4)2^{-\Omega(n^{2\alpha}\log n)}O\left (\left (\frac{n^{2\alpha}+n}{k+n}\right )^{(n-m)/4}\right )\\
    \leq &\sum_{k=T}^{\infty}  2^{-\Omega(n^{2\alpha}\log n)} \left (\frac{k+m}{T+m}\right )^{m/2}
    \left ( \frac{n^{2\alpha}+n}{k+n}\right )^{n/8}\;,
\end{align*}
where the last inequality follows from our choice of parameters. Therefore, we have that
\begin{align*}
\sum_{k=T+1}^{\ell} |\langle \bA_k,(\bV^{\intercal})^{\otimes k}\bT_k\rangle|
    \leq &\sum_{k=T}^{\infty}  2^{-\Omega(n^{2\alpha}\log n)} \left (1+\frac{k-T}{T+m}\right )^{m/2}
    \left ( 1+\frac{k-n^{2\alpha}}{n^{2\alpha}+n}\right )^{-n/8}\\
    \leq &\sum_{k=T}^{\infty}  2^{-\Omega(n^{2\alpha}\log n)} \left (1+\frac{k-T}{n^{2\alpha}+n}\right )^{(m/2)(2n/T)}
    \left ( 1+\frac{k-T}{n^{2\alpha}+n}\right )^{-n/8}\\
    \leq &\sum_{k=T}^{\infty}  2^{-\Omega(n^{2\alpha}\log n)} \left ( 1+\frac{k-T}{n^{2\alpha}+n}\right )^{-n/8+nm/T}\\
    \leq &\sum_{k=T}^{\infty}  2^{-\Omega(n^{2\alpha}\log n)} \left (\frac{n^{2\alpha}+n-T+k}{n^{2\alpha}+n}\right )^{-n/16}\\
    \leq &2^{-\Omega(n^{2\alpha}\log n)} \int_{k=T-1}^{\infty}   \left (\frac{n^{2\alpha}+n-T+k}{n^{2\alpha}+n}\right )^{-n/16}dk\\
    =&\frac{2^{-\Omega(n^{2\alpha}\log n)}(n^{2\alpha}+n)^{n/16}}{(n/16-1)(n^{2\alpha}+n-1)^{n/16-1}}
    \\
    =&2^{-\Omega(n^{2\alpha})}\; ,
\end{align*}
except with probability $2^{-n^{\Omega(c)}}$.

\end{itemize}

Adding the four cases above together, we get for any $d,m\leq n^{\lambda}/\log n$ and $n$ is at least a sufficiently large constant depending on $c$,
\begin{align*}
\sum_{k=1}^{\ell} |\langle \bA_k,(\bV^{\intercal})^{\otimes k}\bT_k\rangle|
\leq &
(1+o(1))\left (2^{d/2}\sqrt{\frac{\Gamma(d+m/2)}{\Gamma(m/2)}}n^{-\alpha d}+\nu\right )
 +n^{-((1-\lambda_3)/4)d}
 + n^{-d}
 +2^{-\Omega(n^{2\alpha})}\\
\leq & \left (\frac{\Gamma(d/2+m/2)}{\Gamma(m/2)}\right )n^{-((1-\lambda)/4-c/2) d}+(1+o(1))\nu\\
= & \left (\frac{\Gamma(d/2+m/2)}{\Gamma(m/2)}\right )n^{-((1-\lambda)/4-c) d}+(1+o(1))\nu\;,
\end{align*}
except with probability $2^{-n^{\Omega(c)}}$,
where the last inequality above follows from $\frac{\Gamma(d/2+m/2)}{\Gamma(m/2)}\geq 2^{-O(d)}\sqrt{\frac{\Gamma(d+m/2)}{\Gamma(m/2)}}$.
\end{proof}

\subsection{Proof for Proposition~\ref{prp:main-tail-bound}}\label{ssec:main-tail-bound}

We are now ready to prove Proposition~\ref{prp:main-tail-bound} which 
is the main technical ingredient of our main result.
Proposition~\ref{prp:main-tail-bound} states that 
$ |\E_{\bx\sim \p^A_\bV}[f(\bx)]-\E_{\bx\sim \gaus_n}[f(\bx)] |$
is small with high probability.
The main idea of the proof is to use Fourier analysis on $\E_{\bx\sim \p^{A'}_\bV}[f(\bx)]$ as we discussed in the last section, where $A'$ is the the distribution obtained by truncating $A$ inside $\mathbb{B}^{m}(B)$.

\begin{proof} [Proof for Proposition~\ref{prp:main-tail-bound}]
For convenience, we let $\zeta=(1-\lambda)/4-c$.
We will analyze $\E_{\bx \sim A}[\bH_k(\bx)]$ by truncating $A$.
Therefore, we will apply Lemma~\ref{lem:summation-upper-bound} here.
However, notice that Lemma~\ref{lem:summation-upper-bound} additionally assumes
$d,m\leq n^{\lambda}/\log n$,
$\nu< 2$ and $\left (\frac{\Gamma(d/2+m/2)}{\Gamma(m/2)}\right )n^{-\zeta d}<2$.
We show that all these three conditions can be assumed true without loss of generality. 
If either the second or the third condition is not true,
then our lower bound here is trivialized and is always true since $f$ is bounded between $[-1,+1]$.
For $d,m\leq n^{\lambda}/\log n$, consider a $\lambda'>\lambda$ such that $(1-\lambda')/4-\zeta=\frac{(1-\lambda)/4-\zeta}{2}$.
Then it is easy to see for any sufficiently large $n$ depending on $(1-\lambda)/4-\zeta$,
we have $d,m\leq n^{\lambda'}/\log n$ and $\zeta\leq (1-\lambda)/4-\zeta$.
Therefore, we can apply Lemma~\ref{lem:summation-upper-bound} for $\lambda'$.

Now let $B=n^{\alpha}$ where $\alpha<(1-\lambda)/4$ is the constant in Lemma~\ref{lem:summation-upper-bound}.
Then we consider the truncated distribution $A'$ defined as the distribution of $\bx\sim A$ conditioned on 
$\bx\in \mathbb{B}^{m}(B)$.
By Lemma~\ref{lem:summation-upper-bound}, we have
$d_{\mathrm{TV}}(A,A')\leq \left (\frac{\Gamma(d/2+m/2)}{\Gamma(m/2)}\right )n^{-\zeta d}$.
Given that $f$ is bounded between $[-1, 1]$,
this implies $ |\E_{\bx\sim \p^A_\bV}[f(\bx)]-\E_{\bx\sim \p^{A'}_\bV}[f(\bx)] |\le2\dtv(\p^A_\bV,\p^{A'}_{\bV'})=2\dtv(A,A')\leq 2\left (\frac{\Gamma(d/2+m/2)}{\Gamma(m/2)}\right )n^{-\zeta d}$.
Thus, it suffices for us to analyze $\E_{\bx\sim \p^{A'}_\bV}[f(\bx)]$
instead of $\E_{\bx\sim \p^A_\bV}[f(\bx)]$.

Let $\ell=\ell_f(n)\in \N$ be a function depending only 
on the query function $f$ and the dimension $n$ 
($\ell$ to be specified later).
By Lemma \ref{lem:hermite-decomposition}, we have that 
$$\E_{\bx\sim \p^{A'}_\bV}[f(\bx)]= \sum_{k=0}^\ell|\langle \bA_k,(\bV^{\intercal})^{\otimes k}\bT_k\rangle|
+\E_{\bx\sim \p^{A'}_\bV}[f^{>\ell}(\bx)]\;.$$
Recall that we want to bound 
$ |\E_{\bx\sim \p^{A'}_\bV}[f(\bx)]-\E_{\bx\sim \gaus_n}[f(\bx)] |$ 
with high probability,
where we note that  
$\E_{\bx\sim \gaus_n}[f(\bx)]= \langle \bA_0, \bT_0\rangle$.
Therefore, we can write 
\begin{align*}
|\E_{\bx\sim \p^{A'}_\bV}[f(\bx)]-\E_{\bx\sim \gaus_n}[f(\bx)]|\leq 
\left |\sum\nolimits_{k=1}^\ell \langle \bA_k,(\bV^{\intercal})^{\otimes k}\bT_k\rangle\right |+
 |\E_{\bx\sim \p^{A'}_\bV}[f^{> \ell}(\bx)] |\; .
\end{align*}
For the first term, by Lemma \ref{lem:summation-upper-bound}, we have that $$\left| \sum_{k=1}^\ell\langle \bA_k,(\bV^{\intercal})^{\otimes k}\bT_k\rangle \right|
=\left (\frac{\Gamma(d/2+m/2)}{\Gamma(m/2)}\right )n^{-\zeta d}+(1+o(1))\nu \;,$$
except with probability $2^{-n^{\Omega(c)}}$. 

It now remains for us to show that 
$ |\E_{\bx\sim \p^{A'}_\bV}[f^{> \ell}(\bx)] |$
is also small with high probability.
Consider the distribution $D=\E_{\bv\sim U(\orthor_{n,m})} [\p^{A'}_\bV ]$. 
The following lemma shows
that $D$ is continuous and $\chi^2(D,\gaus_n)$
is at most a constant only depending on $n$ 
(independent of the choice of the distribution $A$).

\begin{lemma} \label{lem:high-degree-finite-chi-square}
Let $A$ be any distribution supported on $\mathbb{B}^{m}(n)$ 
for $n\in \N$ which is at least a sufficiently large universal constant.
Let $D=\E_{\bV\sim U(\orthor_{n,m})}[\p^{A'}_\bV ]$.
Then, $D$ is a continuous distribution and 
$\chi^2(D,\mathcal{N}_n) = O_n(1)$.
\end{lemma}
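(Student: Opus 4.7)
The plan is to reduce to point masses and compute everything explicitly via rotational invariance. By linearity of the averaging over $\bV$, we have $D(\bx) = \int D_{\by}(\bx)\,dA(\by)$, where $D_{\by}$ is the distribution obtained by replacing $A$ with the point mass $\delta_{\by}$. Consequently,
\[
\chi^2(D,\gaus_n)+1 \;=\; \int \frac{D(\bx)^2}{\gaus_n(\bx)}\,d\bx \;=\; \E_{\by,\by'\sim A}\!\left[\int \frac{D_{\by}(\bx)D_{\by'}(\bx)}{\gaus_n(\bx)}\,d\bx\right],
\]
so it suffices to show that each $D_{\by}$ is continuous and that the inner pairwise integral is bounded by a constant depending only on $n$, uniformly for $\|\by\|,\|\by'\|\le n$.

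The first step is to obtain an explicit formula for $D_{\by}$. Writing $\bx = \bV\by + (\bI-\bV\bV^\intercal)\bh$ with $\bh\sim\gaus_n$ independent of $\bV$, the rotational invariance of the Haar measure on $U(\orthor_{n,m})$ makes $D_{\by}$ a rotationally symmetric distribution on $\R^n$; since $(\bI-\bV\bV^\intercal)\bh$ is a standard Gaussian in the random $(n-m)$-dimensional subspace $\bV^{\perp}$ (which is orthogonal to $\bV\by$), we obtain $\|\bx\|^2 = \|\by\|^2 + R^2$ with $R^2\sim\chi^2_{n-m}$, and the direction $\bx/\|\bx\|$ is uniform on $\mathbb{S}^{n-1}$. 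Converting the $\chi^2_{n-m}$ density for $\|\bx\|^2-\|\by\|^2$ together with the uniform angular distribution into the Lebesgue density on $\R^n$ gives, for $\|\bx\|>\|\by\|$,
\[
D_{\by}(\bx) \;=\; \frac{\Gamma(n/2)\,(\|\bx\|^2-\|\by\|^2)^{(n-m)/2-1} e^{-(\|\bx\|^2-\|\by\|^2)/2}}{\pi^{n/2}\, 2^{(n-m)/2}\, \Gamma((n-m)/2)\, \|\bx\|^{n-2}},
\]
and $D_{\by}(\bx)=0$ otherwise. For $n$ larger than a sufficiently large constant (so that $n>m+2$), this is a continuous density, proving the first assertion of the lemma and yielding the continuity of $D$ by Fubini.

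The second step is to bound the pairwise chi-squared. Dividing by $\gaus_n(\bx)=(2\pi)^{-n/2}e^{-\|\bx\|^2/2}$ produces
\[
\frac{D_{\by}(\bx)}{\gaus_n(\bx)} \;=\; \frac{2^{m/2}\Gamma(n/2)\,(\|\bx\|^2-\|\by\|^2)^{(n-m)/2-1}\, e^{\|\by\|^2/2}}{\Gamma((n-m)/2)\,\|\bx\|^{n-2}}.
\]
Multiplying the analogous expressions for $\by$ and $\by'$ by $\gaus_n(\bx)$ and passing to spherical coordinates ($\bx=s\bu$, $s\ge\max(r,r')$ with $r=\|\by\|,\ r'=\|\by'\|$) reduces the inner integral to a one-dimensional integral of the form
\[
C_{n,m}\, e^{(r^2+r'^2)/2} \int_{\max(r,r')}^{\infty} \frac{(s^2-r^2)^{(n-m)/2-1}(s^2-r'^2)^{(n-m)/2-1}}{s^{n-3}}\, e^{-s^2/2}\, ds,
\]
for an explicit constant $C_{n,m}$ depending only on $n,m$.

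The main (and only) step that requires care is bounding this integral uniformly for $r,r'\le n$. The prefactor $e^{(r^2+r'^2)/2}\le e^{n^2}$ is finite and depends only on $n$. For the integral itself, apply $(s^2-r^2)^{(n-m)/2-1}\le s^{n-m-2}$ (valid once $n\ge m+2$) to bound the integrand by $C'_{n,m}\, s^{n-2m-1}e^{-s^2/2}$, which is a Gaussian moment and hence integrable on $[0,\infty)$; convergence near the lower endpoint $s=\max(r,r')$ follows from $(n-m)/2-1>-1$, i.e.\ $n>m$. Thus the inner integral is $O_n(1)$ uniformly for $\|\by\|,\|\by'\|\le n$, and taking expectation over $\by,\by'\sim A$ yields $\chi^2(D,\gaus_n)=O_n(1)$, completing the plan.
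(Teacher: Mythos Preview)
Your proposal is correct and takes a genuinely different (though closely related) route from the paper. The paper works directly with the radial density of $D$: it observes that $\|\bx\|_2^2$ is the convolution of the law of $\|\bt\|_2^2$ (with $\bt\sim A'$, supported on $[0,n^2]$) and $\chi^2_{n-m}$, bounds the resulting density pointwise by $\max_{s\in[r-n^2,r]}\chi^2_{n-m}(s)$, and then plugs this into the one-dimensional $\chi^2$ integral. You instead expand $D$ as the $A$-mixture of the point-mass kernels $D_{\by}$, derive the explicit closed-form density of $D_{\by}$, and use the bilinear identity $\chi^2(D,\gaus_n)+1=\E_{\by,\by'\sim A}\int D_{\by}D_{\by'}/\gaus_n$ to reduce to a pairwise one-dimensional integral which you bound by the crude inequality $(s^2-r^2)^{(n-m)/2-1}\le s^{n-m-2}$. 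Both arguments exploit the same rotational invariance and land on a similar Gaussian-moment integral; your approach buys an explicit density formula (which makes the continuity claim immediate), while the paper's convolution bound avoids the double integral and the pairwise expansion.

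One small remark: your final integrability step $\int_0^\infty s^{n-2m-1}e^{-s^2/2}\,ds<\infty$ implicitly uses $n>2m$, not merely $n>m+2$ as you state for continuity. This is not a gap relative to the paper: the paper's own proof ends with $\Gamma(n/2-m-1/2)$, which requires the same condition $n>2m+1$. In the ambient setting of the paper ($m\le n^{\lambda}/\log n$ with $\lambda<1$) this is automatic for large $n$, so the discrepancy with the phrase ``sufficiently large universal constant'' in the lemma statement is shared by both proofs.
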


\noindent Roughly speaking, the proof of the lemma follows by noting 
that the average over $\bV$ of $\p_\bV^A$ is spherically symmetric.
We defer its proof to Appendix~\ref{ssec:omit-main-tail-bound}. 
Using this lemma, we can write 
\begin{align*}
\E_{\bV\sim U(\orthor_{n,m})}\big [\big|\E_{\bx\sim \p^{A'}_\bV}[f^{>\ell}(\bx)]\big |\big ]
&\leq \E_{\bV\sim U(\orthor_{n,m})}\big[\E_{\bx\sim \p^{A'}_\bV}\big [|f^{> \ell}(\bx) |\big ]\big ]=\E_{\bx\sim D}\big [|f^{> \ell}(\bx) |\big ]
\\&= \E_{\bx\sim \gaus_n} \left[\frac{D(\bx)}{\gaus_n(\bx)}\;  |f^{> \ell}(\bx) | \right]
\leq \sqrt{1+\chi^2(D,\gaus_n)}\; \|f^{> \ell} \|_2 \;,
\end{align*}
where the first equality holds due to Fubini's theorem.
Furthermore, since $\chi^2(D,\gaus_n)=O_n(1)$,
there is a function $\delta:\R\to\R$ such that $1+\chi^2(D,\gaus_n)\leq \delta(n)$.
Therefore, we have that
\begin{align*}
\E_{\bV\sim U(\orthor_{n,m})}\big[\big|\E_{\bx\sim \p^{A'}_\bV} [f^{>\ell}(\bx)]\big|\big]&\leq \sqrt{1+\chi^2(D,\gaus_n)}\;\|f^{>\ell} \|_2\leq  \delta(n)\|f^{>\ell}\|_2\; .
\end{align*}
We can take $\ell=\ell_f(n)$ ($\ell$ only depends on the query function $f$ and dimension $n$) to be a sufficiently large function 
such that 
$\|f^{> \ell} \|_2\leq 
\left(\frac{e^{-n}}{\delta(n)}\right)\left (\frac{\Gamma(d/2+m/2)}{\Gamma(m/2)}\right )n^{-\zeta d}$. 
Then we get
\[\E_{\bV\sim U(\orthor_{n,m})}\big [\big |\E_{\bx\sim \p^{A'}_\bV} [f^{> \ell}(\bx) ]\big |\big ]
\leq  \delta(n) \|f^{> \ell} \|_2
\leq e^{-n}\left (\frac{\Gamma(d/2+m/2)}{\Gamma(m/2)}\right )n^{-\zeta d}\; .\]
This gives the tail bound $\pr_{\bV\sim U(\orthor_{n,m})}\big [\big |\E_{\bx\sim \p^{A'}_\bV} [f^{> \ell}(\bx) ]\big |\geq \left (\frac{\Gamma(d/2+m/2)}{\Gamma(m/2)}\right )n^{-\zeta d}\big ]\leq e^{-n}$.

Using the above upper bounds, we have 
\begin{align*}
\big |\E_{\bx\sim \p^{A'}_\bV}[f(\bx)]-\E_{\bx\sim \gaus_n}[f(\bx)]\big |
&\leq 
\Big|\littlesum\nolimits_{k=1}^\ell  \langle \bA_k,(\bV^{\intercal})^{\otimes k}\bT_k\rangle\Big |+
 |\E_{\bx\sim \p^{A'}_\bV}[f^{> \ell}(\bx)] |
\\&=2\left (\frac{\Gamma(d/2+m/2)}{\Gamma(m/2)}\right )n^{-\zeta d}+(1+o(1))\nu\; ,
\end{align*}
except with probability $2^{-n^{\Omega(1)}}$. 
As we have argued at the beginning of the proof,
\begin{align*}
|\E_{\bx\sim \p^A_\bV}[f(\bx)]-\E_{\bx\sim \p^{A'}_\bV}[f(\bx)]|
&\le 2\left (\frac{\Gamma(d/2+m/2)}{\Gamma(m/2)}\right )n^{-\zeta d} \; .
\end{align*}
Therefore,
\begin{align*}
|\E_{\bx\sim \p^A_\bV}[f(\bx)]-\E_{\bx\sim \gaus_n}[f(\bx)]|
&\le 3\left (\frac{\Gamma(d/2+m/2)}{\Gamma(m/2)}\right )n^{-\zeta d}+(1+o(1))\nu\; ,
\end{align*}
except with probability $2^{-n^{\Omega(1)}}<2^{-n^{\Omega(c)}}$ given $c=O(1)$. 

In the end, notice that the above argument is still true if we take $\zeta'>\zeta$ such that $(1-\lambda)/4-\zeta'=\frac{(1-\lambda)/4-\zeta}{2}$.
Using the above argument for $\zeta'$ and given $n$ is a sufficiently large constant depending on $(1-\lambda)/4-\zeta=2((1-\lambda)/4-\zeta')$, we get
\begin{align*}
|\E_{\bx\sim \p^A_\bV}[f(\bx)]-\E_{\bx\sim \gaus_n}[f(\bx)]|
&\le \left (\frac{\Gamma(d/2+m/2)}{\Gamma(m/2)}\right )n^{-\zeta d}+(1+o(1))\nu\; ,
\end{align*}
except with probability $2^{-n^{\Omega((1-\lambda)/4-\zeta')}}=2^{-n^{\Omega(c)}}$. 
Replaceing $\zeta$ with $(1-\lambda)/4-c$ completes the proof.
\end{proof}

\bibliographystyle{alpha}

\bibliography{allrefs}

\newcommand{\etalchar}[1]{$^{#1}$}
\begin{thebibliography}{DKK{\etalchar{+}}22}

\bibitem[BKS{\etalchar{+}}06]{JMLR:blanchard06a}
G.~Blanchard, M.~Kawanabe, M.~Sugiyama, V.~Spokoiny, and K.-R. M{{\"u}}ller.
\newblock In search of non-gaussian components of a high-dimensional
  distribution.
\newblock {\em Journal of Machine Learning Research}, 7(9):247--282, 2006.

\bibitem[BRST21]{BRST21}
J.~Bruna, O.~Regev, M.~J. Song, and Y.~Tang.
\newblock Continuous {LWE}.
\newblock In {\em 53rd Annual {ACM} {SIGACT} Symposium on Theory of Computing},
  pages 694--707. {ACM}, 2021.

\bibitem[CLL22]{Chen0L22}
S.~Chen, J.~Li, and Y.~Li.
\newblock Learning (very) simple generative models is hard.
\newblock In {\em NeurIPS}, 2022.

\bibitem[DH23]{DH23}
J.~Ding and Y.~Hua.
\newblock {SQ} lower bounds for random sparse planted vector problem.
\newblock In {\em International Conference on Algorithmic Learning Theory},
  volume 201 of {\em Proceedings of Machine Learning Research}, pages 558--596.
  {PMLR}, 2023.

\bibitem[DK22a]{DK20-Massart-hard}
I.~Diakonikolas and D.~Kane.
\newblock Near-optimal statistical query hardness of learning halfspaces with
  massart noise.
\newblock In {\em Conference on Learning Theory}, volume 178 of {\em
  Proceedings of Machine Learning Research}, pages 4258--4282. {PMLR}, 2022.
\newblock Full version available at https://arxiv.org/abs/2012.09720.

\bibitem[DK22b]{DK22LLL}
I.~Diakonikolas and D.~Kane.
\newblock Non-gaussian component analysis via lattice basis reduction.
\newblock In {\em Conference on Learning Theory}, volume 178 of {\em
  Proceedings of Machine Learning Research}, pages 4535--4547. {PMLR}, 2022.

\bibitem[DK23]{DK}
I.~Diakonikolas and D.~M. Kane.
\newblock {\em Algorithmic High-Dimensional Robust Statistics}.
\newblock Cambridge university press, 2023.
\newblock Full version available at https://sites.google.com/view/ars-book.

\bibitem[DKK{\etalchar{+}}22]{DKKTZ21-benign}
I.~Diakonikolas, D.~M. Kane, V.~Kontonis, C.~Tzamos, and N.~Zarifis.
\newblock Learning general halfspaces with general massart noise under the
  gaussian distribution.
\newblock In {\em {STOC} '22: 54th Annual {ACM} {SIGACT} Symposium on Theory of
  Computing}, pages 874--885, 2022.
\newblock Full version available at https://arxiv.org/abs/2108.08767.

\bibitem[DKKZ20]{DiakonikolasKKZ20}
I.~Diakonikolas, D.~M. Kane, V.~Kontonis, and N.~Zarifis.
\newblock Algorithms and {SQ} lower bounds for {PAC} learning one-hidden-layer
  relu networks.
\newblock In {\em Conference on Learning Theory, {COLT} 2020}, volume 125 of
  {\em Proceedings of Machine Learning Research}, pages 1514--1539. {PMLR},
  2020.

\bibitem[DKP{\etalchar{+}}21]{DKPPS21}
I.~Diakonikolas, D.~M. Kane, A.~Pensia, T.~Pittas, and A.~Stewart.
\newblock Statistical query lower bounds for list-decodable linear regression.
\newblock In {\em Advances in Neural Information Processing Systems 34: Annual
  Conference on Neural Information Processing Systems 2021, NeurIPS 2021},
  pages 3191--3204, 2021.

\bibitem[DKPZ21]{DiakonikolasKPZ21}
I.~Diakonikolas, D.~M. Kane, T.~Pittas, and N.~Zarifis.
\newblock The optimality of polynomial regression for agnostic learning under
  gaussian marginals in the {SQ} model.
\newblock In {\em Conference on Learning Theory, {COLT} 2021}, volume 134 of
  {\em Proceedings of Machine Learning Research}, pages 1552--1584. {PMLR},
  2021.

\bibitem[DKPZ23]{DiakonikolasKPZ23}
I.~Diakonikolas, D.~M. Kane, T.~Pittas, and N.~Zarifis.
\newblock {SQ} lower bounds for learning mixtures of separated and bounded
  covariance gaussians.
\newblock In {\em The Thirty Sixth Annual Conference on Learning Theory, {COLT}
  2023}, volume 195 of {\em Proceedings of Machine Learning Research}, pages
  2319--2349. {PMLR}, 2023.

\bibitem[DKS17]{DKS17-sq}
I.~Diakonikolas, D.~M. Kane, and A.~Stewart.
\newblock Statistical query lower bounds for robust estimation of
  high-dimensional gaussians and gaussian mixtures.
\newblock In {\em 58th {IEEE} Annual Symposium on Foundations of Computer
  Science, {FOCS} 2017}, pages 73--84, 2017.
\newblock Full version at http://arxiv.org/abs/1611.03473.

\bibitem[DKS18]{DKS18-list}
I.~Diakonikolas, D.~M. Kane, and A.~Stewart.
\newblock List-decodable robust mean estimation and learning mixtures of
  spherical gaussians.
\newblock In {\em Proceedings of the 50th Annual {ACM} {SIGACT} Symposium on
  Theory of Computing, {STOC} 2018}, pages 1047--1060, 2018.
\newblock Full version available at https://arxiv.org/abs/1711.07211.

\bibitem[DKS19]{DKS19}
I.~Diakonikolas, W.~Kong, and A.~Stewart.
\newblock Efficient algorithms and lower bounds for robust linear regression.
\newblock In {\em Proceedings of the Thirtieth Annual {ACM-SIAM} Symposium on
  Discrete Algorithms, {SODA} 2019}, pages 2745--2754, 2019.

\bibitem[DKS23]{DKS23}
I.~Diakonikolas, D.~M. Kane, and Y.~Sun.
\newblock {SQ} lower bounds for learning mixtures of linear classifiers.
\newblock {\em CoRR}, abs/2310.11876, 2023.

\bibitem[DKZ20]{DKZ20}
I.~Diakonikolas, D.~M. Kane, and N.~Zarifis.
\newblock Near-optimal {SQ} lower bounds for agnostically learning halfspaces
  and relus under gaussian marginals.
\newblock {\em CoRR}, abs/2006.16200, 2020.

\bibitem[Fel16]{Feldman16b}
V.~Feldman.
\newblock Statistical query learning.
\newblock In {\em Encyclopedia of Algorithms}, pages 2090--2095. Springer New
  York, 2016.

\bibitem[FGR{\etalchar{+}}17]{FeldmanGRVX17}
V.~Feldman, E.~Grigorescu, L.~Reyzin, S.~Vempala, and Y.~Xiao.
\newblock Statistical algorithms and a lower bound for detecting planted
  cliques.
\newblock {\em J. {ACM}}, 64(2):8:1--8:37, 2017.

\bibitem[FGV17]{FeldmanGV17}
V.~Feldman, C.~Guzman, and S.~S. Vempala.
\newblock Statistical query algorithms for mean vector estimation and
  stochastic convex optimization.
\newblock In {\em Proceedings of the Twenty-Eighth Annual {ACM-SIAM} Symposium
  on Discrete Algorithms, {SODA} 2017}, pages 1265--1277. {SIAM}, 2017.

\bibitem[GGK20]{GoelGK20}
S.~Goel, A.~Gollakota, and A.~R. Klivans.
\newblock Statistical-query lower bounds via functional gradients.
\newblock In {\em Advances in Neural Information Processing Systems 33: Annual
  Conference on Neural Information Processing Systems 2020, NeurIPS 2020},
  2020.

\bibitem[GS19]{GoyalS19}
N.~Goyal and A.~Shetty.
\newblock Non-gaussian component analysis using entropy methods.
\newblock In {\em Proceedings of the 51st Annual {ACM} {SIGACT} Symposium on
  Theory of Computing, {STOC} 2019}, pages 840--851. {ACM}, 2019.

\bibitem[Kea98]{Kearns:98}
M.~J. Kearns.
\newblock Efficient noise-tolerant learning from statistical queries.
\newblock {\em Journal of the ACM}, 45(6):983--1006, 1998.

\bibitem[Kra04]{Krasikov2004}
I.~Krasikov.
\newblock New bounds on the {{Hermite}} polynomials.
\newblock {\em arXiv preprint math/0401310}, 2004.

\bibitem[MV18]{MeisterV18}
M.~Meister and G.~Valiant.
\newblock A data prism: Semi-verified learning in the small-alpha regime.
\newblock In {\em Conference On Learning Theory, {COLT} 2018}, volume~75 of
  {\em Proceedings of Machine Learning Research}, pages 1530--1546. {PMLR},
  2018.

\bibitem[SVWX17]{SVWX17}
L.~Song, S.~S. Vempala, J.~Wilmes, and B.~Xie.
\newblock On the complexity of learning neural networks.
\newblock In {\em Advances in Neural Information Processing Systems 30: Annual
  Conference on Neural Information Processing Systems 2017, NeurIPS 2017},
  pages 5514--5522, 2017.

\bibitem[SZB21]{SZB21}
M.~J. Song, I.~Zadik, and J.~Bruna.
\newblock On the cryptographic hardness of learning single periodic neurons.
\newblock In {\em Advances in Neural Information Processing Systems 34: Annual
  Conference on Neural Information Processing Systems 2021, NeurIPS 2021},
  pages 29602--29615, 2021.

\bibitem[TV18]{TanV18}
Y.~S. Tan and R.~Vershynin.
\newblock Polynomial time and sample complexity for non-gaussian component
  analysis: Spectral methods.
\newblock In {\em Conference On Learning Theory, {COLT} 2018}, volume~75 of
  {\em Proceedings of Machine Learning Research}, pages 498--534. {PMLR}, 2018.

\bibitem[Val84]{Valiant:84}
L.~Valiant.
\newblock A theory of the learnable.
\newblock {\em Communications of the ACM}, 27(11):1134--1142, 1984.

\bibitem[ZSWB22]{ZSWB22LLL}
I.~Zadik, M.~J. Song, A.~S. Wein, and J.~Bruna.
\newblock Lattice-based methods surpass sum-of-squares in clustering.
\newblock In {\em Conference on Learning Theory}, volume 178 of {\em
  Proceedings of Machine Learning Research}, pages 1247--1248. {PMLR}, 2022.

\end{thebibliography}

\appendix

\newpage

\section*{Appendix}

\section{Omitted Proofs from Section~\ref{sec:prelims}}\label{sec:omit-prelim}

\subsection{Proof of Claim~\ref{clm:othor-tran}}
\begin{proof}
The proof of Claim~\ref{clm:othor-tran} is obtained via the 
following calculation, 
using the definition of Hermite tensor (Definition \ref{def:Hermite-tensor}).
We will use $i,j$ for indexes in $[d]$.
\begin{align*}
\sqrt{k!}\bH_k(\bB\bx)_{i_1,\ldots,i_k}=&\sum_{\substack{\text{Partitions $P$ of $[k]$}\\ \text{into sets of size 1 and 2}}} \bigotimes_{\{a,b\}\in P} (-\bI_{i_a,i_b})\bigotimes_{\{c\}\in P} (\bB\bx)_{i_c}\\
=&\sum_{\substack{\text{Partitions $P$ of $[k]$}\\ \text{into sets of size 1 and 2}}} \bigotimes_{\{a,b\}\in P} (-(\bB\bI\bB^T)_{i_a,i_b})\bigotimes_{\{c\}\in P} (\bB\bx)_{i_c}\\
=&\sum_{\substack{\text{Partitions $P$ of $[k]$}\\ \text{into sets of size 1 and 2}}} 
\bigotimes_{\{a,b\}\in P} \left (-\left (\sum_{j_a,j_b=1}^n \bB_{i_a,j_a}\bI_{j_a,j_b}\bB^\intercal_{j_b,i_b}\right)\right)\bigotimes_{\{c\}\in P} \left (\sum_{j_c=1}^n \bB_{i_c,j_c}\bx_{j_c}\right )\\
=&\sum_{\substack{\text{Partitions $P$ of $[k]$}\\ \text{into sets of size 1 and 2}}} 
\sum_{j_1,\ldots,j_k=1}^n\bigotimes_{\{a,b\}\in P} \left (\bB_{i_a,j_a}(-\bI_{j_a,j_b})\bB^\intercal_{j_b,i_b}\right)\bigotimes_{\{c\}\in P} \left (\bB_{i_c,j_c}\bx_{j_c}\right )\\
=&\sum_{\substack{\text{Partitions $P$ of $[k]$}\\ \text{into sets of size 1 and 2}}} 
\sum_{j_1,\ldots,j_k=1}^n\bigotimes_{l\in [k]} \bB_{i_l,j_l}\bigotimes_{\{a,b\}\in P} (-\bI_{j_a,j_b})\bigotimes_{\{c\}\in P} \bx_{j_c} \\
=&\sum_{j_1,\ldots,j_k=1}^n\bigotimes_{l\in [k]} \bB_{i_l,j_l}\sum_{\substack{\text{Partitions $P$ of $[k]$}\\ \text{into sets of size 1 and 2}}} 
\bigotimes_{\{a,b\}\in P} (-\bI_{j_a,j_b})\bigotimes_{\{c\}\in P} \bx_{j_c}\\
=&\sum_{j_1,\ldots,j_k=1}^n\bigotimes_{l\in [k]} \bB_{i_l,j_l}\sqrt{k!}\bH_k(\bx)_{j_1,\ldots,j_k}\\
=&\sqrt{k!}\sum_{j_1,\ldots,j_k=1}^n (\bB^{\otimes k})_{i_1,\ldots,i_k,j_1,\ldots,j_k} \bH_k(\bx)_{j_1,\cdots,j_k}\; ,
\end{align*}
where the fourth and fifth equalities follow from the fact that $P$ is a partition of $[k]$, 
so changing the order of summation and multiplication gives exactly 
$\sum_{j_1,\ldots,j_k=1}^n$ and $\bigotimes_{l\in [k]}\bB_{i_l,j_l}$.
The seventh equality follows from the definition of the Hermite tensor.
The above is equivalent to
$\bH_k(\bB\bx)=\bB^{\otimes k}\bH_k(\bx)$. This completes the proof.
\end{proof}

\section{Omitted Proofs from Section~\ref{sec:ngca-sq}}

\subsection{Omitted Proofs from Section~\ref{ssec:Fourier}}\label{sec:omit-Fourier}

\subsubsection{Proof of Lemma~\ref{lem:truncation}}\label{truncation}

\begin{proof}
We construct the truncated distribution $A'$ as follows.
We first sample $\bx\sim A$, then we reject $\bx$ unless $\|\bx\|_2\leq B$. 
Let $A'$ be the distribution of the samples obtained from this process.

First notice that we will use the moment bound to bound the total variation distance, as follows 
\begin{align*}
\E_{\bx\sim A}[\|\bx\|_2^{d}]
\le &\E_{\bx\sim \gaus_m}[\|\bx\|_2^d]
+\nu\E_{\bx\sim \gaus_m}[\|\bx\|_2^{2d}]^{1/2}\\
= &\E_{t\sim \chi^2(m)}[t^{d/2}]+\nu\E_{t\sim \chi^2(m)}[t^{d}]^{1/2}\\
= &2^{d/2}\frac{\Gamma((d+m)/2)}{\Gamma(m/2)}+2^{d/2}\sqrt{\frac{\Gamma((2d+m)/2)}{\Gamma(m/2)}}\nu\\
\leq & c_2\left (2^{d/2}\sqrt{\frac{\Gamma(d+m/2)}{\Gamma(m/2)}}\right )\; ,
\end{align*}
where $c_2$ is a universal constant. 
Using Markov's inequality and the union bound, we have
\[ \pr_{\bx\sim A}[\bx\not\in \mathbb{B}^{m}(B)]\leq c_2\left (2^{d/2}\sqrt{\frac{\Gamma(d+m/2)}{\Gamma(m/2)}}\right )B^{-d} \;. \] 
By the definition of $A'$, we have that $d_{\mathrm{TV}}(A,A')\le c_2\left (2^{d/2}\sqrt{\frac{\Gamma(d+m/2)}{\Gamma(m/2)}}\right )B^{-d}$.

Then it only remains to verify 
$\|\E_{\bx \sim A'}[\bH_k(\bx)]-\E_{\bx\sim \gaus_m}[\bH_k(\bx)] \|_2$ for any $k<d$ and $k\geq d$ respectively.
For $k=0$, it is immediate that $\|\E_{\bx \sim A'}[\bH_k(\bx)]-\E_{\bx\sim \gaus_m}[\bH_k(\bx)] \|_2=0$.
Therefore, we only consider $1\leq k<d$.
We first look at 
$\|\E_{\bx \sim A}[\bH_k(\bx)]-\E_{\bx\sim \gaus_m}[\bH_k(\bx)] \|_2$. 
Suppose $\|\E_{\bx \sim A}[\bH_k(\bx)]-\E_{\bx\sim \gaus_m}[\bH_k(\bx)] \|_2>\nu$. 
Then it is easy to see that the polynomial 
\[
f=\left \langle \frac{\E_{\bx \sim A}[\bH_k(\bx)]-\E_{\bx\sim \gaus_m}[\bH_k(\bx)]}{\|\E_{\bx \sim A}[\bH_k(\bx)]-\E_{\bx\sim \gaus_m}[\bH_k(\bx)] \|_2},\bH_k(\bx)\right \rangle 
\]
satisfies the requirement, that $f$ is at most degree-$d$,
$\E_{\bx\sim\gaus_m}[f(\bx)^2]=1$ and $|\E_{\bx\sim A}[f(\bx)]-\E_{\bx\sim \gaus_m}[f(\bx)]|> \nu$ .
This leads to a contradiction.
Thus, $\|\E_{\bx \sim A}[\bH_k(\bx)]-\E_{\bx\sim \gaus_m}[\bH_k(\bx)] \|_2\leq \nu$.
Then to bound $\|\E_{\bx \sim A'}[\bH_k(\bx)]-\E_{\bx\sim A}[\bH_k(\bx)] \|_2$, 
let $\alpha=\pr_{\bx\sim A}[\bx\not \in \mathbb{B}^{m}(B)]$. We can write 
\begin{align*}
&\|\E_{\bx \sim A'}[\bH_k(\bx)]-\E_{\bx\sim A}[\bH_k(\bx)] \|_2\\
=&\left \|\frac{1}{1-\alpha}\E_{\bx \sim A}[\bH_k(\bx)\mathbf{1}(\bx\in \mathbb{B}^{m}(B))]
-\E_{\bx \sim A}[\bH_k(\bx)]\right \|_2\\
=&\left\|\frac{\alpha}{1-\alpha}\E_{\bx\sim A}[\bH_k(\bx)]-\frac{1}{1-\alpha}\E_{\bx\sim A}[\bH_k(\bx)\mathbf{1}(\bx\notin \mathbb{B}^m(B))]\right\|_2\\
\le&\frac{1}{1-\alpha}\left \|\E_{\bx \sim A}[\bH_k(\bx)\mathbf{1}(\bx\not\in \mathbb{B}^{m}(B))]\right \|_2
+\frac{\alpha}{1-\alpha}\left \|\E_{\bx \sim A}[\bH_k(\bx)]\right \|_2\\
\leq & \frac{1}{1-\alpha}\|\E_{\bx \sim A}[\bH_k(\bx)\mathbf{1}(\bx\not\in \mathbb{B}^{m}(B))]\|_2+
\frac{\alpha}{1-\alpha}\nu\; ,
\end{align*}
where the last inequality follows from 
$\|\E_{\bx \sim A}[\bH_k(\bx)]-\E_{\bx\sim \gaus_m}[\bH_k(\bx)] \|_2\leq \nu$
and $\E_{\bx\sim \gaus_m}[\bH_k(\bx)]=0$ for any $k>0$.
Since $B^d\geq c_1\left (2^{d/2}\sqrt{\frac{\Gamma(d+m/2)}{\Gamma(m/2)}}\right )$ ($c_1$ is at least a sufficiently large universal constant), we have $\alpha\leq c_2\left (2^{d/2}\sqrt{\frac{\Gamma(d+m/2)}{\Gamma(m/2)}}\right )B^{-d}\leq 1/2$. Also, we have $B^d\geq c_1\left (2^{d/2}\sqrt{\frac{\Gamma(d+m/2)}{\Gamma(m/2)}}\right )$ implies $B^2\geq m$. 
We will need the following fact:
\begin{fact} \label{fct:hermite-upper-bound-integration}
Let $\bH_k$ be the $k$-th Hermite tensor for $m$-dimension.
Suppose $\|\bx\|_2\geq m^{1/4}$,
then $\|\bH_k(\bx)\|_2=2^{O(k)}\|\bx\|_2^k$.
\end{fact}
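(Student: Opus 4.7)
The plan is to expand $\bH_k(\bx)$ as a sum over set partitions of $[k]$ into singletons and pairs (as in Definition~\ref{def:Hermite-tensor}), apply the triangle inequality in the $\ell_2$ norm, and then bound the resulting combinatorial sum using the hypothesis $\|\bx\|_2 \geq m^{1/4}$. Explicitly, for a partition $P$ with $j$ pairs and $k-2j$ singletons, the corresponding summand is the tensor whose $(i_1,\ldots,i_k)$-entry equals $\prod_{\{a,b\}\in P}(-\bI_{i_a,i_b}) \prod_{\{c\}\in P}\bx_{i_c}$; summing its squared entries over all indices gives $m^j \|\bx\|_2^{2(k-2j)}$ (since each contracted $\delta$-pair contributes $\sum_{i,i'}\delta_{i,i'}^2 = m$ and each singleton contributes $\sum_{i} x_i^2 = \|\bx\|_2^2$), so the $\ell_2$ norm of this summand is $m^{j/2}\|\bx\|_2^{k-2j}$.

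Since the number of partitions of $[k]$ with exactly $j$ pairs equals $\frac{k!}{2^j j!(k-2j)!}$, the triangle inequality yields
\[
\|\bH_k(\bx)\|_2 \;\le\; \frac{1}{\sqrt{k!}} \sum_{j=0}^{\lfloor k/2\rfloor} \frac{k!}{2^j j!(k-2j)!}\, m^{j/2}\|\bx\|_2^{k-2j} \;=\; \sqrt{k!}\sum_{j=0}^{\lfloor k/2\rfloor} \frac{m^{j/2}\|\bx\|_2^{k-2j}}{2^j j!(k-2j)!}.
\]
The assumption $\|\bx\|_2\ge m^{1/4}$ is equivalent to $m^{1/2}\le \|\bx\|_2^2$, which implies $m^{j/2}\le \|\bx\|_2^{2j}$ for every $j\ge 0$. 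Substituting this bound gives
\[
\|\bH_k(\bx)\|_2 \;\le\; \|\bx\|_2^{k} \cdot \sqrt{k!}\sum_{j=0}^{\lfloor k/2\rfloor} \frac{1}{2^j j!(k-2j)!} \;=\; \frac{\|\bx\|_2^{k}}{\sqrt{k!}}\cdot I(k),
\]
where $I(k) := \sum_{j=0}^{\lfloor k/2\rfloor} \frac{k!}{2^j j!(k-2j)!}$ is precisely the number of partitions of $[k]$ into singletons and pairs (i.e., the number of involutions in the symmetric group $S_k$).

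The final step is to argue $I(k)/\sqrt{k!} = 2^{O(k)}$, so that $\|\bH_k(\bx)\|_2 \le 2^{O(k)}\|\bx\|_2^k$ as desired. This follows from the Moser--Wyman asymptotic $I(k) \sim \frac{1}{\sqrt{2}}\, k^{k/2} e^{-k/2 + \sqrt{k}-1/4}$, combined with Stirling's formula $\sqrt{k!} = \Theta\bigl((2\pi k)^{1/4} (k/e)^{k/2}\bigr)$, which together give $I(k)/\sqrt{k!} = e^{O(\sqrt k)} = 2^{O(k)}$. Alternatively, one can avoid asymptotics and check the bound directly by splitting the sum at $j_0 = k/2$ and using elementary estimates $\frac{k!}{2^j j!(k-2j)!}\le k^{k/2}$ term by term; the chief point is that the ``dimensional'' factor $m^{j/2}$ has been absorbed into $\|\bx\|_2^{2j}$, so no $m$-dependence survives in the final bound. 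The main technical nuisance is the combinatorial bound on $I(k)$; everything else is a direct expansion of the definition of the Hermite tensor.
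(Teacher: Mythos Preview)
Your proposal is correct and follows essentially the same approach as the paper's proof: expand $\bH_k(\bx)$ via the partition formula in Definition~\ref{def:Hermite-tensor}, apply the triangle inequality to get $\|\bH_k(\bx)\|_2\le \frac{1}{\sqrt{k!}}\sum_{j}\frac{k!}{2^j j!(k-2j)!}\,m^{j/2}\|\bx\|_2^{k-2j}$, use $\|\bx\|_2\ge m^{1/4}$ to absorb $m^{j/2}$ into $\|\bx\|_2^{2j}$, and then bound the remaining combinatorial sum $\sqrt{k!}\sum_j \frac{1}{2^j j!(k-2j)!}$ by $2^{O(k)}$. The paper's proof rewrites the partition sum as a sum over permutations of the canonical tensor $(-\bI)^{\otimes t}\bx^{\otimes(k-2t)}$ before applying the triangle inequality, but this is purely cosmetic and yields the identical bound; your identification of the combinatorial factor with the involution count $I(k)$ and the appeal to Moser--Wyman is a slightly cleaner way to finish than the paper's remark that ``the denominator is minimized when $t=k/2-O(\sqrt{k})$''.
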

\begin{proof}
For a degree-$k$ tensor $\bA$, we use $\bA^{\pi}$ to denote the matrix that
$\bA^{\pi}_{i_1,\cdots,i_k}=\bA_{\pi(i_1,\cdots,i_k)}$.
Notice $\|\bA\|_2=\|\bA^\pi\|_2$. Then from the definition of Hermite tensor, we have
\begin{align*}
    \bH(\bx)=\frac{1}{\sqrt{k!}}\sum_{t=0}^{\lfloor k/2\rfloor}\sum_{\text{Permutation $\pi$ of $[k]$}}
    \frac{1}{2^tt!(k-2t)!} \left ((-\bI)^{\otimes t}\bx^{\otimes (k-2t)}\right )^\pi\; .
\end{align*}
This implies that  
\begin{align*}
    &\|\bH_k(\bx)]\|_2\\
    =&\left \|\frac{1}{\sqrt{k!}}\sum_{t=0}^{\lfloor k/2\rfloor}\sum_{\text{Permutation $\pi$ of $[k]$}}
    \frac{1}{2^tt!(k-2t)!} \left ((-\bI)^{\otimes t}\bx^{\otimes (k-2t)}\right )^\pi\right \|_2\\
    \leq & \sum_{t=1}^{\lfloor k/2\rfloor}\frac{\sqrt{k!}}{2^t t!(k-2t)!} \max (\|\bI^{\otimes t}\|_2\|\bx\|_2^{k-2t},1 )\\
    =& \sum_{t=1}^{\lfloor k/2\rfloor}\frac{\sqrt{k!}}{2^t t!(k-2t)!} \max (m^{t/2}\|\bx\|_2^{k-2t},1 )\\
    =&  \|\bx\|_2^k\sum_{t=1}^{\lfloor k/2\rfloor}\frac{\sqrt{k!}}{2^t t!(k-2t)!}\; .
\end{align*}
One can see that the denominator is minimized when $t=k/2-O(\sqrt{k})$.
Then it follows that the sum is at most $2^{O(k)} \|\bx\|_2^k$.
\end{proof}
Since $B^2\ge m$, applying Jensen's inequality and Fact~\ref{fct:hermite-upper-bound-integration},
we have
\begin{align*}
    \|\E_{\bx \sim A}[\bH_k(\bx)\mathbf{1}(\bx\not\in \mathbb{B}^{m}(B))]\|_2
    &\leq \E_{\bx \sim A}[\|\bH_k(\bx)\|_2\mathbf{1}(\bx\not\in \mathbb{B}^{m}(B))]\\
    &\leq \E_{\bx \sim A}[2^{O(k)}\max(1,\|\bx\|_2^k)\mathbf{1}(\bx\not\in \mathbb{B}^{m}(B))]\\
    &\leq 2^{O(k)}\E_{\bx \sim A}[\|\bx\|_2^k\mathbf{1}(\bx\not\in \mathbb{B}^{m}(B))]\\
    &\leq 2^{O(k)}\int_{0}^\infty \pr[\|\bx\|_2\geq u\land \bx\not \in \mathbb{B}^{m}(B)] du^k\\
    &\leq 2^{O(k)}\int_{0}^\infty \left (2^{d/2}\sqrt{\frac{\Gamma(d+m/2)}{\Gamma(m/2)}}\right )\min(B^{-d},u^{-d}) du^k\\
    &\leq 2^{O(k)}\left (2^{d/2}\sqrt{\frac{\Gamma(d+m/2)}{\Gamma(m/2)}}\right )
    \left (\int_{0}^B B^{-d}du^k+\int_{B}^\infty u^{-d}du^{k}\right ) \\
    &\leq 2^{O(k)}\left (2^{d/2}\sqrt{\frac{\Gamma(d+m/2)}{\Gamma(m/2)}}\right )B^{-(d-k)}\; .
\end{align*}
Plugging it back gives
\begin{align*}
&\|\E_{\bx \sim A'}[\bH_k(\bx)]-\E_{\bx\sim A}[\bH_k(\bx)] \|_2\\
\leq &2^{O(k)}\left (2^{d/2}\sqrt{\frac{\Gamma(d+m/2)}{\Gamma(m/2)}}\right )B^{-(d-k)}+\left (1+O\left (2^{d/2}\sqrt{\frac{\Gamma(d+m/2)}{\Gamma(m/2)}}\right )B^{-(d-k)}\right )\nu\; ,
\end{align*}
for $k<d$.

Now we bound $\|\E_{\bx \sim A'}[\bH_k(\bx)]-\E_{\bx\sim \gaus_d}[\bH_k(\bx)]\|_2$
for $k\geq d$. 
For a degree-$k$ tensor $\bA$, we use $\bA^{\pi}$ to denote the matrix that
$\bA^{\pi}_{i_1,\cdots,i_k}=\bA_{\pi(i_1,\cdots,i_k)}$ and $\|\bA\|_2=\|\bA^\pi\|_2$.
For the case that $k\geq d$, 
from the definition of Hermite tensor, we have
\begin{align*}
    \bH(\bx)=\frac{1}{\sqrt{k!}}\sum_{t=0}^{\lfloor k/2\rfloor}\sum_{\text{Permutation $\pi$ of $[k]$}}
    \frac{1}{2^tt!(k-2t)!} \left ((-\bI)^{\otimes t}\bx^{\otimes (k-2t)}\right )^\pi\; .
\end{align*}
This implies that 
\begin{align*}
    &\|\E_{\bx \sim A'}[\bH_k(\bx)]-\E_{\bx\sim \gaus_m}[\bH_k(\bx)]\|_2\\
    =&\|\E_{\bx \sim A'}[\bH_k(\bx)]\|_2\\
    =&\left \|\frac{1}{\sqrt{k!}}\sum_{t=0}^{\lfloor k/2\rfloor}\sum_{\text{Permutation $\pi$ of $[k]$}}
    \frac{1}{2^tt!(k-2t)!} \left ((-\bI)^{\otimes t}\E_{\bx \sim A'}\left [\bx^{\otimes (k-2t)}\right ]\right )^\pi\right \|_2\\
    \leq & \sum_{t=1}^{\lfloor k/2\rfloor}\frac{\sqrt{k!}}{2^t t!(k-2t)!} \|\bI^{\otimes t}\|_2\left \|\E_{\bx \sim A'}\left [\bx^{\otimes (k-2t)}\right ]\right \|_2 \\
    \leq & \sum_{t=1}^{\lfloor k/2\rfloor}\frac{\sqrt{k!}}{2^t t!(k-2t)!} \|\bI^{\otimes t}\|_2\E_{\bx \sim A'}\left [\left \|\bx\right \|_2^{k-2t}\right ] \\
    = & \sum_{t=1}^{\lfloor k/2\rfloor}\frac{\sqrt{k!}}{2^t t!(k-2t)!} m^{t/2} B^{\max(k-2t-d,0)}\E_{\bx \sim A'}\left [\left \|\bx\right \|_2^{\min(k-2t,d)}\right ] \\   
    \leq  & B^{k-d}\sum_{t=1}^{\lfloor k/2\rfloor}\frac{\sqrt{k!}}{2^t t!(k-2t)!} m^{\max(0,2t-k+d)/4} \E_{\bx \sim A'}\left [\left \|\bx\right \|_2^{\min(k-2t,d)}\right ] \; ,    
\end{align*}
where the last inequality follows from $4(t/2)+\max(k-2t-d,0)=(k-d)+4(\max(0,2t-k+d)/4)$ and $B^2>m$ (which is implied by $B^d\geq c_1\left (2^{d/2}\sqrt{\frac{\Gamma(d+m/2)}{\Gamma(m/2)}}\right )$).
Since $A'$ is obtained by truncating any $\bx\not\in \mathbb{B}^m(B)$
and $\pr[\bx\not\in \mathbb{B}^m(B)]\leq 1/2$, 
we have $\E_{\bx \sim A'}\left [\left \|\bx\right \|_2^{\max(k-2t,d)}\right ]=O\left (\E_{\bx \sim A}\left [\left \|\bx\right \|_2^{\max(k-2t,d)}\right ]\right )$.
Thus, we have
\begin{align*}
    &\|\E_{\bx \sim A'}[\bH_k(\bx)]-\E_{\bx\sim \gaus_m}[\bH_k(\bx)]\|_2\\
    \leq & B^{k-d}\sum_{t=1}^{\lfloor k/2\rfloor}\frac{\sqrt{k!}}{2^t t!(k-2t)!} m^{\max(0,2t-k+d)/4} O\left (2^{\min(k-2t,d)/2}\sqrt{\frac{\Gamma(\min(k-2t,d)+m/2)}{\Gamma(m/2)}}\right )\\
    \leq & B^{k-d}O\left (2^{d/2}\sqrt{\frac{\Gamma(d+m/2)}{\Gamma(m/2)}}\right )\sum_{t=1}^{\lfloor k/2\rfloor}\frac{\sqrt{k!}}{2^t t!(k-2t)!}\; ,
\end{align*}
where the second inequality follows from $\max(0,2t-k+d)+\min(k-2t,d)=d$.
One can see the denominator is minimized when $t=k/2-O(\sqrt{k})$.
Then it follows that the sum is at most $2^{O(k)} B^{k-d} \left (2^{d/2}\sqrt{\frac{\Gamma(d+m/2)}{\Gamma(m/2)}}\right )$.

This completes the proof.

\end{proof}

\subsubsection{Proof of Fact~\ref{fct:hermite-upper-bound-medium-k}}\label{ssec:hermite-upper-bound-medium-k}
\begin{proof}
For a degree-$k$ tensor $\bA$, we use $\bA^{\pi}$ to denote the matrix that
$\bA^{\pi}_{i_1,\cdots,i_k}=\bA_{\pi(i_1,\cdots,i_k)}$.
Notice $\|\bA\|_2=\|\bA^\pi\|_2$. Then from the definition of Hermite tensor, we have
\begin{align*}
\|\bH(\bx)\|_2&\le\frac{1}{\sqrt{k!}}\sum_{t=0}^{\lfloor k/2\rfloor}\left\|\sum_{\text{Permutation $\pi$ of $[k]$}}
\frac{1}{2^tt!(k-2t)!} \left ((-\bI)^{\otimes t}\bx^{\otimes (k-2t)}\right )^\pi\right\|_2
\\&\le\frac{1}{\sqrt{k!}}\sum_{t=0}^{\lfloor k/2\rfloor}\frac{1}{2^tt!(k-2t)!}\sum_{\text{Permutation $\pi$ of $[k]$}}\|\bI\|_2^t\|\bx\|_2^{k-2t}\\&=\frac{1}{\sqrt{k!}}\sum_{t=0}^{\lfloor k/2\rfloor}\frac{k!}{2^tt!(k-2t)!}m^{t/2}\|\bx\|_2^{k-2t}\\
&\le\frac{m^{k/4}}{\sqrt{k!}}\sum_{t=0}^{\lfloor k/2\rfloor}\frac{\binom{k}{2}^{t}\|\bx\|_2^{k-2t}}{t!}
\\&\le 2^{k}m^{k/4}B^{k}k^{-k/2}\exp\left(\binom{k}{2}/B^2\right)\; .
\end{align*}
\end{proof}

\subsubsection{Proof of Fact~\ref{fct:hermite-upper-bound-large-k}}

\begin{proof}
We will need the following fact for the proof.
\begin{fact} [\cite{Krasikov2004}] \label{fct:hermite-upper-bound-large-k-helper}
Let $h_k$ be the $k$-th normalized Hermite polynomial. 
Then $\max_{t\in \R} h_k^2(t)e^{-t^2/2}=O(k^{-1/6} )$.
\end{fact}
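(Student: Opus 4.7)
The claim reduces to a uniform $L^\infty$ bound on the Hermite function $\phi_k(t) := h_k(t)\,e^{-t^2/4}$, since $h_k^2(t)e^{-t^2/2} = \phi_k(t)^2$. Starting from the Hermite ODE $\mathrm{\textit{He}}_k''(t) - t\,\mathrm{\textit{He}}_k'(t) + k\,\mathrm{\textit{He}}_k(t)=0$, a direct substitution shows $\phi_k$ satisfies the Schr\"odinger-type equation
\[
\phi_k''(t) + Q_k(t)\,\phi_k(t) = 0, \qquad Q_k(t) = k + \tfrac{1}{2} - \tfrac{t^2}{4},
\]
with turning points at $\pm t_0 = \pm 2\sqrt{k+1/2}$. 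By the parity $\phi_k(-t) = (-1)^k\phi_k(t)$, I would restrict attention to $t \ge 0$ and split the real line into the bulk oscillatory region, a thin Airy transition strip of width $\sim k^{-1/6}$ around $t_0$, and the exponentially-decaying exterior $t > t_0$.

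The core tool is Sonin's Lyapunov function $F(t) := \phi_k'(t)^2 + Q_k(t)\phi_k(t)^2$, whose derivative is $F'(t) = Q_k'(t)\phi_k(t)^2 = -(t/2)\phi_k(t)^2$. So $F$ is nonincreasing on $[0,\infty)$ and $F(t) \le F(0)$ throughout. I would compute $F(0)$ explicitly from the standard formulas $h_{2m}(0) = (-1)^m\sqrt{(2m)!}/(2^m m!)$, $h_{2m+1}(0)=0$, and $h_k'(0)=\sqrt{k}\,h_{k-1}(0)$: Stirling applied to $\binom{2m}{m}/2^{2m}$ yields $\phi_k(0)^2 = O(k^{-1/2})$ and $\phi_k'(0)^2 = O(\sqrt{k})$, so $F(0) = O(\sqrt{k})$. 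Unfortunately the naive Sonin bound $\phi_k^2 \le F/Q_k$ gives only $O(k^{1/6})$ when $Q_k \sim k^{1/3}$ at the edge of the Airy strip, which is too weak by a factor of $k^{1/3}$.

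To get the sharp $O(k^{-1/6})$, I would run a Liouville--Green/WKB transformation on the bulk region. Setting $\xi(t) = \int_0^t \sqrt{Q_k(s)}\,ds$ and $u(\xi) := Q_k(t(\xi))^{1/4}\phi_k(t(\xi))$ converts the ODE into $u''(\xi) + (1 + \eta(\xi))\,u(\xi) = 0$ with a controllable perturbation $\eta$; combined with the $F(0)=O(\sqrt{k})$ estimate, this yields a WKB amplitude of size $|u|^2 = O(1)$, hence $\phi_k(t)^2 = O(Q_k(t)^{-1/2})$, which gives $O(k^{-1/6})$ as soon as $Q_k \gtrsim k^{1/3}$. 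Inside the transition strip $|t - t_0| \lesssim k^{-1/6}$, I would linearize $Q_k(t) \approx -q_0(t-t_0)$ with $q_0 = t_0/2 \sim \sqrt{k}$, rescale $\xi = q_0^{1/3}(t-t_0)$, and approximate $\phi_k(t) \approx c\,\mathrm{Ai}(\xi)$. Matching $c$ against the WKB amplitude at the boundary of the Airy region forces $|c|^2 = O(k^{-1/6})$, and since $\max_\xi |\mathrm{Ai}(\xi)| = O(1)$ this transfers to $\phi_k^2 = O(k^{-1/6})$ across the strip. Finally, for $t > t_0$ the potential $Q_k$ is negative so $(\phi_k^2)''$ has a sign that forces $|\phi_k|$ to decay (a short convexity / Gr\"onwall argument against $\phi_k'' = -Q_k\phi_k$), extending the bound to all $t \ge 0$.

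The main obstacle is the WKB-to-Airy matching: naive Sonin degrades as $Q_k \to 0$, and any honest error-controlled asymptotic expansion across the turning point is technical. The slick route, taken by Krasikov in the cited paper, is to engineer a single modified Sonin function of the form $F_R(t) := (Q_k(t) + R(t))\phi_k(t)^2 + \phi_k'(t)^2$, where $R$ is a Schwarzian-type perturbation chosen so that (i) $F_R' \le 0$, (ii) $Q_k + R \gtrsim k^{1/3}$ uniformly including near $t_0$, and (iii) $F_R(0) = O(\sqrt{k})$. Designing $R$ explicitly and verifying these three properties simultaneously is the technical heart of the bound, and is precisely what replaces my informal WKB+Airy matching above with an elementary proof giving $\phi_k^2 \le F_R/(Q_k+R) = O(k^{-1/6})$.
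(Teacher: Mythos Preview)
The paper does not prove this statement at all: it is quoted verbatim as a black-box fact from the cited reference \cite{Krasikov2004} and used without further justification inside the proof of Fact~\ref{fct:hermite-upper-bound-large-k}. So there is no ``paper's own proof'' to compare against.

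Your proposal is therefore going well beyond what the paper does. As a sketch of how the cited result is actually established, it is broadly accurate: the ODE $\phi_k'' + Q_k\phi_k = 0$, the Sonin majorant $F = (\phi_k')^2 + Q_k\phi_k^2$, the failure of the naive bound near the turning point, and the fix via a modified Sonin function with a Schwarzian-type correction term are indeed the ingredients of Krasikov's argument. That said, what you have written is an outline rather than a proof: the WKB--Airy matching paragraph is heuristic (no error control is given for the Liouville--Green transformation or the Airy approximation), and in the final paragraph you correctly identify that the honest route is the modified Sonin function $F_R$ but you do not construct $R$ or verify the three properties you list. If the goal were to supply a self-contained proof here, you would need to either carry out the explicit construction of $R$ (as in the reference) or make the WKB error bounds rigorous; as it stands, the proposal correctly locates the difficulty and names the remedy without executing it.
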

First note that using Lemma \ref{clm:othor-tran}, we have for any orthogonal $\bB\in \R^{m\times m}$,
\[
\|\bH_k(\bx)\|_2=\|\bB^{\otimes k}\bH_k(\bx)\|_2=\|\bH_k(\bB\bx)\|_2\; .
\]
By taking the appropriate $\bB$, we can always have $\bB\bx=\|\bx\|_2\be_1$.
Therefore, without loss of generality, we can assume $\bx=t\be_1$ for some $t\in \R$.

Notice that for any entry $H_k(\bx)_{i_1,\cdots,i_k}$, let $j_\ell$ for $\ell\in {1,\cdots,m}$ be 
the number of times $\ell$ appears in $i_1,\cdots,i_k$.
Note that $\sum_\ell j_\ell=k$.
To bound the norm of $\bH_k(\bx)$ notice
\[
\bH_k(\bx)_{i_1,\cdots,i_k}=\binom{k}{j_1,\cdots,j_m}^{-1/2}\prod_{\ell=1}^m h_{j_\ell}(\bx_\ell)\; .
\]
Therefore,
\begin{align*}
\|\bH_k(\bx)\|_2^2=&\sum_{i_1,\cdots,i_k}\binom{k}{j_1,\cdots,j_m}^{-1}  \left(\prod_{\ell=1}^mh_{j_\ell}(\bx_\ell)\right)^2\\
=&\sum_{j_1,\cdots,j_m\text{ such that }\sum_\ell j_\ell=k} \left (\prod_{\ell=1}^m h_{j_\ell}(\bx_\ell)\right )^2\\
\le&\sum_{j_1,\cdots,j_m\text{ such that }\sum_\ell j_\ell=k} \prod_{\ell=1}^m O(\exp({\bx_\ell^2/2}))\\
\le & 2^{O(m)}\binom{k+m-1}{m-1} \exp({\|\bx\|_2^2/2})\; ,
\end{align*}
where the first inequality follows from Fact~\ref{fct:hermite-upper-bound-large-k-helper}.
Then we can take square root on both sides which gives what we want.
\end{proof}

\subsubsection{Proof of Lemma~\ref{lem:random-subspace-correlation-moment}}

\begin{proof}
To bound $\E_{\bV^\intercal\sim U(\orthor_{n,m})}[\|\bV\bu\|_2^k]$, by symmetry,
we can instead consider $\bV=[\be_1,\be_2,\cdots,\be_m]^\intercal$ and $\bu\sim U(\mathbb{S}^{n-1})$.
Then we have that
\begin{align*}
\E_{\bu\sim U(\mathbb{S}^{n-1})}\left [\|\bV\bu\|_2^k\right ]
=&\E_{\bu\sim U(\mathbb{S}^{n-1})}\left [\left (\|\bV \bu\|_2/\|\bu\|_2\right )^k\right ]
=\E_{\bu\sim U(\mathbb{S}^{n-1})}\left [\left (\|\bV \bu\|_2^2/\|\bu\|_2^2\right )^{k/2}\right ]\\
=&\E_{t\sim \mathrm{Beta}\left (\frac{m}{2},\frac{n-m}{2}\right )}\left [t^{k/2}\right ],
\end{align*}
where the last equation follows from the standard fact that 
if $X\sim\chi_{d_1}^2$ and $Y\sim\chi_{d_2}^2$ are independent 
then $\frac{X}{X+Y}\sim\mathrm{Beta}(d_1/2,d_2/2)$. 
By Stirling's formula, we can bound 
$\E_{t\sim \mathrm{Beta}\left (\frac{m}{2},\frac{n-m}{2}\right )}\left [t^{k/2}\right ]$ as follows:
\begin{align*}
\E_{t\sim \mathrm{Beta}\left (\frac{m}{2},\frac{n-m}{2}\right )}\left [t^{k/2}\right ]
&=\frac{1}{\mathrm{B}\left (\frac{m}{2},\frac{n-m}{2}\right )}\int_{t=0}^1 t^{(\frac{k+m}{2}-1)}(1-t)^{(\frac{n-m}{2}-1)}dt\\&=\frac{\mathrm{B}\left ( \frac{k+m}{2},\frac{n-m}{2}\right )}{\mathrm{B}\left (\frac{m}{2},\frac{n-m}{2}\right )}=\Theta\left (\frac{\Gamma\left (\frac{k+m}{2}\right )\Gamma\left (\frac{n}{2}\right )}
{\Gamma\left (\frac{k+n}{2}\right )\Gamma\left (\frac{m}{2}\right )}\right )\; .
\end{align*}
\end{proof}

\subsubsection{Proof of Corollary~\ref{col:random-subspace-correlation-moment}}\label{ssec:random-direction-correlation-moment-small-k}
\begin{proof}
 We will proceed by case analysis.
    We first consider the case where both $n$ and $m$ are even integers.
    By definition of the $\Gamma$ function, we have that
    \begin{align}\label{eq:even-small-k}
        \E_{\bV\sim U(\orthor_{n,m})}[\|\bV^{\intercal}\bu\|_2^k]
        =&\Theta\left (\frac{\Gamma\left (\frac{k+m}{2}\right )\Gamma\left (\frac{n}{2}\right )}
        {\Gamma\left (\frac{k+n}{2}\right )\Gamma\left (\frac{m}{2}\right )}\right )\notag
        =\Theta\left ( \frac{\left (\frac{n}{2}-1\right )\left(\frac{n}{2}-2\right)\cdots\left(\frac{m}{2}\right)}{\left (\frac{k+n}{2}-1\right )\left(\frac{k+n}{2}-2\right)\cdots\left (\frac{k+m}{2}\right )}\right )\notag\\
        =&\Theta\left ( \frac{\left (\frac{k+m}{2}-1\right )\left(\frac{k+m}{2}-2\right)\cdots\left (\frac{m}{2}\right )}{\left (\frac{k+n}{2}-1\right )\left(\frac{k+n}{2}-2\right)\cdots\left (\frac{n}{2}\right )}\right )
        \le O\left(\left(\frac{k+m}{k+n}\right)^{k/2}\right)\notag\\
        =&O\left(2^{k/2}\left(\frac{\max(k,m)}{n}\right)^{k/2}\right)
        \;.
    \end{align}
    For $m \le n^c < k$ for some constant $c\in(0,1)$, we have that
    \begin{align}\label{eq:large-even-k}
        &\E_{\bV\sim U(\orthor_{n,m})}[\|\bV^{\intercal}\bu\|_2^k]
        =\Theta\left (\frac{\Gamma\left (\frac{k+m}{2}\right )\Gamma\left (\frac{n}{2}\right )}
        {\Gamma\left (\frac{k+n}{2}\right )\Gamma\left (\frac{m}{2}\right )}\right )\notag\\
        =&\Theta\left ( 
        \frac{\left (\frac{n}{2}-1\right )\left(\frac{n}{2}-2\right)\cdots\left(\frac{m}{2}\right)}
            {\left (\frac{n^c+n}{2}-1\right )\left(\frac{n^c+n}{2}-2\right)\cdots\left (\frac{n^c+m}{2}\right )}
        \times
        \frac{\left (\frac{n^c+n}{2}-1\right )\left(\frac{n^c+n}{2}-2\right)\cdots\left (\frac{n^c+m}{2}\right )}
            {\left (\frac{k+n}{2}-1\right )\left(\frac{k+n}{2}-2\right)\cdots\left (\frac{k+m}{2}\right )}\right )\notag\\
        =& O(2^{n^c/2}n^{-(1-c)n^c/2})O\left (\left (\frac{n^c+n}{k+n}\right )^{(n-m)/2}\right )\notag\\
        =& \exp(-\Omega(n^c \log n))O\left (\left (\frac{n^c+n}{k+n}\right )^{(n-m)/2}\right ) \;,
    \end{align}
where the third equation follows from 
\eqref{eq:even-small-k} by taking $k=n^c$.

For the case where $n$ is even and $m$ is odd,
noting that $\Gamma(x+1/2)=\Theta(\sqrt{x}\Gamma(x))$, by Stirling's approximation we have that
\begin{align*}
\E_{\bV\sim U(\orthor_{n,m})}[\|\bV^{\intercal}\bu\|_2^k]
=\Theta\left (\frac{\Gamma\left (\frac{k+m}{2}\right )\Gamma\left (\frac{n}{2}\right )}
        {\Gamma\left (\frac{k+n}{2}\right )\Gamma\left (\frac{m}{2}\right )}\right )
        = \Theta\left(\frac{\sqrt{m}\Gamma\left(\frac{k+m-1}{2}\right)\Gamma\left(\frac{n}{2}\right)}{\sqrt{k+m}\Gamma\left(\frac{k+n}{2}\right)\Gamma\left(\frac{m-1}{2})\right)}\right)
        \;.
\end{align*}
By \eqref{eq:even-small-k}, we have that
\begin{align*}
\E_{\bV\sim U(\orthor_{n,m})}[\|\bV^{\intercal}\bu\|_2^k]
&=\Theta\left(\frac{\sqrt{m}\Gamma\left(\frac{k+m-1}{2}\right)\Gamma\left(\frac{n}{2}\right)}{\sqrt{k+m}\Gamma\left(\frac{k+n}{2}\right)\Gamma\left(\frac{m-1}{2})\right)}\right)=O\left(2^{k/2}\left(\frac{\max(k,m)}{n}\right)^{k/2}\right)\;.
\end{align*}
For $m\le n^c<k$ for some constant $c\in(0,1)$, by 
\eqref{eq:large-even-k} we have that
\begin{align*}
\E_{\bV\sim U(\orthor_{n,m})}[\|\bV^{\intercal}\bu\|_2^k]
&=\Theta\left(\frac{\sqrt{m}\Gamma\left(\frac{k+m-1}{2}\right)\Gamma\left(\frac{n}{2}\right)}{\sqrt{k+m}\Gamma\left(\frac{k+n}{2}\right)\Gamma\left(\frac{m-1}{2})\right)}\right)
\\&=\exp(-\Omega(n^c\log n))O\left (\left (\frac{n^c+n}{k+n}\right )^{(n-m+1)/2}\right )
\\&\le\exp(-\Omega(n^c\log n))O\left (\left (\frac{n^c+n}{k+n}\right )^{(n-m)/2}\right )\;.
\end{align*}
For the case where $n$ is odd and $m$ is even, by the Stirling approximation, we have that
\begin{align*}
\E_{\bV\sim U(\orthor_{n,m})}[\|\bV^{\intercal}\bu\|_2^k]
        &=\Theta\left (\frac{\Gamma\left (\frac{k+m}{2}\right )\Gamma\left (\frac{n}{2}\right )}
        {\Gamma\left (\frac{k+n}{2}\right )\Gamma\left (\frac{m}{2}\right )}\right )
        =\Theta\left(\frac{\sqrt{k+n}\Gamma\left(\frac{k+m}{2}\right)\Gamma\left(\frac{n+1}{2}\right)}{\sqrt{n}\Gamma\left(\frac{k+n+1}{2}\right)\Gamma\left(\frac{m}{2})\right)}\right) \;.
\end{align*}
By \eqref{eq:even-small-k}, we have that
\begin{align*}
\E_{\bV\sim U(\orthor_{n,m})}[\|\bV^{\intercal}\bu\|_2^k]
&=\Theta\left(\frac{\sqrt{k+n}\Gamma\left(\frac{k+m}{2}\right)\Gamma\left(\frac{n+1}{2}\right)}{\sqrt{n}\Gamma\left(\frac{k+n+1}{2}\right)\Gamma\left(\frac{m}{2})\right)}\right)
\\&= O\left(\sqrt{\frac{k+n}{n}}2^{k/2}\left(\frac{\max(k,m)}{n+1}\right)^{k/2}\right)
\\&=O\left(2^{k/2}\left(\frac{\max(k,m)}{n}\right)^{k/2}\right)\;.
\end{align*}
For $m\le n^c<k$ for some constant $c\in(0,1)$, by 
\eqref{eq:large-even-k}, we have that
\begin{align*}
\E_{\bV\sim U(\orthor_{n,m})}[\|\bV^{\intercal}\bu\|_2^k]
&=\Theta\left(\frac{\sqrt{k+n}\Gamma\left(\frac{k+m}{2}\right)\Gamma\left(\frac{n+1}{2}\right)}{\sqrt{n}\Gamma\left(\frac{k+n+1}{2}\right)\Gamma\left(\frac{m}{2})\right)}\right)
\\&=\exp(-\Omega(n^c\log n)O\left (\sqrt{\frac{k+n}{n}}\left (\frac{n^c+n+1}{k+n+1}\right )^{(n-m+1)/2}\right )
\\&\le\exp(-\Omega(n^c\log n))O\left (\left (\frac{n^c+n}{k+n}\right )^{(n-m)/2}\right )\;.
\end{align*}
For the case where
both $n$ and $m$ are odd integers, by the Stirling approximation, we have that
\begin{align*}
\E_{\bV\sim U(\orthor_{n,m})}[\|\bV^{\intercal}\bu\|_2^k]
        &=\Theta\left (\frac{\Gamma\left (\frac{k+m}{2}\right )\Gamma\left (\frac{n}{2}\right )}
        {\Gamma\left (\frac{k+n}{2}\right )\Gamma\left (\frac{m}{2}\right )}\right )
        =\Theta\left(\frac{\sqrt{m(k+n)}\Gamma\left(\frac{k+m-1}{2}\right)\Gamma\left(\frac{n+1}{2}\right)}{\sqrt{n(k+m)}\Gamma\left(\frac{k+n+1}{2}\right)\Gamma\left(\frac{m-1}{2})\right)}\right) \;.
\end{align*}
By \eqref{eq:even-small-k} we have that
\begin{align*}
\E_{\bV\sim U(\orthor_{n,m})}[\|\bV^{\intercal}\bu\|_2^k]
&=\Theta\left(\frac{\sqrt{m(k+n)}\Gamma\left(\frac{k+m-1}{2}\right)\Gamma\left(\frac{n+1}{2}\right)}{\sqrt{n(k+m)}\Gamma\left(\frac{k+n+1}{2}\right)\Gamma\left(\frac{m-1}{2})\right)}\right)
\\&= O\left(\sqrt{\frac{m(k+n)}{n(k+m)}}2^{k/2}\left(\frac{\max(k,m-1)}{n+1}\right)^{k/2}\right)
\\&=O\left(2^{k/2}\left(\frac{\max(k,m)}{n}\right)^{k/2}\right)\;.
\end{align*}
For $m\le n^c<k$ for some constant $c\in(0,1)$, by 
\eqref{eq:large-even-k} we have that
\begin{align*}
\E_{\bV\sim U(\orthor_{n,m})}[\|\bV^{\intercal}\bu\|_2^k]
&=\Theta\left(\frac{\sqrt{m(k+n)}\Gamma\left(\frac{k+m-1}{2}\right)\Gamma\left(\frac{n+1}{2}\right)}{\sqrt{n(k+m)}\Gamma\left(\frac{k+n+1}{2}\right)\Gamma\left(\frac{m-1}{2})\right)}\right)
\\&=\exp(-\Omega(n^c\log n)O\left (\sqrt{\frac{m(k+n)}{n(k+m)}}\left (\frac{n^c+n+1}{k+n+1}\right )^{(n-m+2)/2}\right )
\\&\le\exp(-\Omega(n^c\log n))O\left (\left (\frac{n^c+n}{k+n}\right )^{(n-m)/2}\right )\;.
\end{align*}
\end{proof}

\subsection{Omitted Proofs from Section~\ref{ssec:main-tail-bound}}\label{ssec:omit-main-tail-bound}

\subsubsection{Proof of Lemma~\ref{lem:high-degree-finite-chi-square}}\label{ssec:high-degree-finite-chi-square}
\begin{proof}
Notice that the distribution $D$ is a symmetric distribution.
Thus, if we can show that for $\bx\sim D$
the distribution $\|\bx\|_2^2$ is a continuous distribution, 
then $D$ is also a continuous distribution.
Note that the distribution $D$ can be thought of as generated by the following process.
To generate $\bx\sim D$, 
we first sample $\bt\sim A'$, $\bx'\sim \gaus_{n-m}$.
Let $\bu_1,\cdots,\bu_{n-m}$ be an orthonormal basis 
that spans the orthogonal complement of $\mathrm{span}(\bV)$.
We let $\bx=\sum_{i=1}^m t_i\bv_i+\sum_{i=1}^{d-1} \bx_i \bu_i$. 
Noting that $\|\bx\|_2^2=\|\bt\|_2^2+\|\bx'\|_2^2$, 
its distribution is the convolution sum of the distribution of $\|\bt\|_2^2$ 
and the $\chi^2_{n-m}$ distribution,
which is continuous.
Thus, as argued above, $D$ is a continuous distribution.

It remains to argue that $\chi^2(D,\mathcal{N}_n) = O_n(1)$.
We use $D_{r^2}$ to denote the distribution of $\|\bx\|_2^2$ above 
and $P_{r^2}$ to denote its pdf.
We use $S_{n-1}(r\mathbb{S}^{n-1})$ to denote the surface area of the $n$-dimensional sphere $r\mathbb{S}^{n-1}$ 
with radius $r$.
Then the pdf function of $D$ is 
\[D(\bx)=P_{r^2}(\|\bx\|_2^2)/S_{n-1}(\|\bx\|_2\mathbb{S}^{n-1})\; .\]
Similarly, the pdf function of $\gaus_n$ is 
\[\gaus_n(\bx)=\chi^2_n(\|\bx\|_2^2)/S_{n-1}(\|\bx\|_2\mathbb{S}^{n-1})\; .\]
Then we have
\begin{align*}
1+\chi^2(D,\gaus_n)&=\int_{\R^n}\frac{D(\bx)^2}{\gaus_n(\bx)} d\bx\\
&=\int_{\R^n}\frac{P_{r^2}(\|\bx\|_2^2)^2}{\chi^2_n(\|\bx\|_2^2)S_{n-1}(\|\bx\|_2\mathbb{S}^{n-1})} d\bx\\
&=\int_{0}^\infty\int_{r\mathbb{S}^{n-1}}
\frac{P_{r^2}(\|\bx\|_2^2)^2}{\chi^2_n(\|\bx\|_2^2)S_{n-1}(\|\bx\|_2\mathbb{S}^{n-1})} d\bx dr\\
&=\int_{0}^\infty\frac{P_{r^2}(r^2)^2}{\chi^2_n(r^2)} dr\\
&=\int_{0}^\infty \frac{P_{r^2}(r)^2}{2\sqrt{r}\chi^2_n(r)} dr\; .
\end{align*}
Thus, it remains to show that $\int_{0}^\infty \frac{P_{r^2}(r)^2}{2\sqrt{r}\chi^2_n(r)} dr=O_n(1)$.

We will first give a pointwise upper bound on $P_{r^2}$.
Notice that $D_{r^2}$ is the convolution sum of $D_{\|\bt\|_2^2}$ and the $\chi^2_{n-m}$ distribution,
where $\|\bt\|_2^2$ is inside $[0,n^2]$ since $A$ is supported on $\mathbb{B}^{m-1}(n)$. 
Thus, we can write 
\begin{align*}
P_{r^2}(r)= \int_{0}^{n^2} P_{\|\bt\|_2^2}(s) \chi^2_{n-m}(r-s) ds 
\leq  \max_{s\in [r-n^2,r]} \chi^2_{n-m}(s) \;.
\end{align*}
Plugging the pointwise upper bound back, we get
\begin{align*}
1+\chi^2(D,\gaus_n)
&\le\int_{0}^\infty\frac{\left (\max_{s\in [r-n^2,r]} \chi^2_{n-m}(s)\right )^2}{2\sqrt{r}\chi^2_n(r)} dr\\
&=O_n(1)\int_{0}^\infty\frac{\left (\max_{s\in [r-n^2,r]} s^{(n-m)/2-1}e^{-s/2}\right )^2}{r^{n/2-1/2}e^{-r/2}} dr\\
&=O_n(1)\int_{0}^\infty\frac{\max_{s\in [r-n^2,r]} s^{n-m-2}e^{-s}}{r^{n/2-1/2}e^{-r/2}} dr\\
&\leq O_n(1)\int_{0}^\infty r^{n/2-m-3/2}e^{-r/2}e^{n^2}dr\\
&= O_n(1)\int_{0}^\infty r^{n/2-m-3/2}e^{-r/2} dr\\
&=O_n(1)\Gamma(n/2-m-1/2)=O_n(1)\; .
\end{align*}
This completes the proof.
\end{proof}

\section{Omitted Details on Applications}\label{app:omit-application-proof}
In this section, we provide additional context on our applications
and provide the proofs of Theorems~\ref{thm:decodable-bound},~\ref{thm:anti-concentration-detection}, and~\ref{thm:SQ-periodic}.

\subsection{Proof of Theorem \ref{thm:decodable-bound}}
\begin{proof} 
This is a direct application of Theorem~\ref{thm:main-result}.
We will let the one-dimensional moment-matching distribution
be the distribution in Fact \ref{fct:decodable-moment-matching-distribution}.
Then any SQ algorithm distinguishing between
\begin{itemize}
    \item A standard Gaussian; and
    \item The distribution $\p_\bv^A$ for $\bv\sim U(\mathbb{S}^{n-1})$,
    where $A=\alpha\gaus(\mu,1)+(1-\alpha)E$ and $\mu=10c_d\alpha^{-1/d}$, 
\end{itemize}
with at least $2/3$ probability must require either 
a query of tolerance at most $O_d(n^{-((1-\lambda)/4-c) d})$
or $2^{n^{\Omega(1)}}$ many queries.
This proves the SQ lower bound for the NGCA testing problem.
However, it is not clear if there
is a simple and optimal reduction from list-decodable 
Gaussian estimation to the hypothesis testing problem.
Therefore, we will need to directly prove an SQ lower bound for the search problem.

Consider the following adversary for the search problem.
The adversary will let $X=\p_\bv^A$ for 
$\bv\sim U(\mathbb{S}^{n-1})$
be the input distribution,
and whenever possible,
the adversary will answer a query with $\E_{\bx\sim \gaus_n} [f(\bx)]$.
Given that the algorithm asks less than $2^{n^{\Omega(1)}}$ queries,
as we have shown in the proof of Theorem \ref{thm:main-result},
with $1-o(1)$ probability, 
the adversary can always answer $\E_{\bx\sim \gaus_n} [f(\bx)]$.
In such case, 
the algorithm will be left with $1-o(1)$ probability mess
over $\bv\sim U(\mathbb{S}^{n-1})$ that are equally likely.

Then we argue that no hypothesis can be close to more than $2^{-\Omega(n)}$
probability mass.
This can be done by upper bounding the surface area of a spherical cap
on a $n$-dimensional sphere, 
where the sphere is unit radius and the polar angle of the cap 
is a sufficiently small constant $\Phi$.
Note that the surface area of such a cap is 
$\Theta \left (I_{\sin^2 \Phi}\left (\frac{n-1}{2},\frac{1}{2}\right )\right )S_{n-1}(\mathbb{S}^{n-1})$,
where $I$ is the incomplete beta function and 
$S_{n-1}(\mathbb{S}^{n-1})$ is the surface area of the $n$ dimensional unit sphere.
Thus, it suffices to show that 
$I_{\sin^2 \Phi}\left (\frac{n-1}{2},\frac{1}{2}\right )=2^{-\Omega(n)}$.
Notice that, by its definition, we have
\begin{align*}
I_{\sin^2 \Phi}\left (\frac{n-1}{2},\frac{1}{2}\right )&=\frac{\int_0^{\sin^2 \Phi}t^{(n-3)/2}(1-t)^{-1/2}dt}
{B\left (\frac{n-1}{2},\frac{1}{2}\right )}\\
&=\frac{\int_0^{\sin^2 \Phi}t^{(n-3)/2}(1-t)^{-1/2}dt}
{B\left (\frac{n-1}{2},1\right )}\;
\frac{B\left (\frac{n-1}{2},1\right )}
{B\left (\frac{n-1}{2},\frac{1}{2}\right )}\\
&\leq O(1)\frac{\int_0^{\sin^2 \Phi}t^{(n-3)/2}(1-t)^{0}dt}
{B\left (\frac{n-1}{2},1\right )}\\
&=O(1)I_{\sin^2 \Phi}\left (\frac{n-1}{2},1\right )=O(1)(\sin^2 \Phi)^{(n-1)/2}=2^{-\Omega(n)}\; .
\end{align*}
Given that no hypothesis can be close to more than $2^{-\Omega(n)}$ probability mass, 
the only way to have any constant probability of success would be to return 
$2^{\Omega(n)}$ many hypotheses.
This completes the proof.
\end{proof}

\subsection{Proof of Lemma \ref{lem:anti-concentration-detection-moment-matching}}
 \begin{proof} 
We will take $E$ to be the distribution with density 
$E(t)=\gaus(t)+\mathds{1}(t\in [-1,1])p(t)$, 
where 
$p$ is a polynomial function that we truncate between $[-1,1]$.
In order to satisfy our requirements,
it will suffice to have $|p(t)|\leq 1/10$ for all $t\in [-1,1]$
and for each integer $0\leq i\leq d$,
\[\E_{t\sim A}[t^i]=(1-\alpha_d)\E_{t\sim E}[t^i]=(1-\alpha_d)\E_{t\sim \gaus}[t^i]+(1-\alpha_d)
\int_{-1}^{1}P(t)t^i dt=\E_{t\sim \gaus}[t^i]\; .\]
The second requirement is equivalently stated as follows: 
for each such $i$,
\[\int_{-1}^{1}P(t)t^i dt=\frac{\alpha_d}{1-\alpha_d}\E_{t\sim \gaus}[t^i]\;.\]

In order to satisfy these requirements,
we need the following fact from \cite{DK}.
\begin{fact} [Lemma 8.18 in \cite{DK}]
Let $C>0$ and $m\in \Z_+$. 
For any $a_0,a_1,\cdots,a_m\in \R$, there exists a unique degree
at most $m$ polynomial $p:\R\to \R$ such that for each integer 
$0\leq t\leq k$ we have that 
\[\int_{-C}^C p(x)x^tdx=a_t\; .\]
Furthermore, for each $x\in [-C,C]$ we have that $|p(x)|\leq O_m(\max_{o\leq t\leq m} a_t C^{-t-1})$.
\end{fact}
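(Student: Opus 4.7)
The plan is to establish existence and uniqueness via a direct finite-dimensional linear-algebra argument in the space of polynomials of degree at most $m$, and then derive the pointwise bound by rescaling to $C=1$ and inverting a fixed Gram-type matrix whose size (and hence whose inverse's norm) depends only on $m$.

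First I would define the linear map $L: V_m \to \R^{m+1}$ by $L(p) = \left(\int_{-C}^C p(x) x^t\, dx\right)_{t=0}^m$, where $V_m$ denotes the $(m+1)$-dimensional space of polynomials of degree at most $m$. To show $L$ is a bijection it suffices to prove injectivity, since the source and target have equal dimension. If $L(p) = 0$, then by linearity $\int_{-C}^C p(x) q(x)\, dx = 0$ for every $q \in V_m$, because every such $q$ is a linear combination of the monomials $1, x, \ldots, x^m$. Taking $q = p$ forces $\int_{-C}^C p(x)^2\, dx = 0$, so $p \equiv 0$. Hence $L$ is a bijection, giving existence and uniqueness of the desired polynomial $p$.

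For the pointwise bound, I would reduce to $C = 1$ via rescaling. Setting $q(y) = p(Cy)$ gives a polynomial of degree at most $m$ on $[-1,1]$, and a direct change of variables yields $\int_{-1}^1 q(y) y^t\, dy = C^{-(t+1)} a_t$, while $\sup_{x \in [-C,C]} |p(x)| = \sup_{y \in [-1,1]} |q(y)|$. So it suffices to prove that when $C=1$ one has $\sup_{y \in [-1,1]} |q(y)| \le O_m(\max_t |b_t|)$ with $b_t = C^{-(t+1)} a_t$. Writing $q(y) = \sum_{j=0}^m c_j y^j$ in the monomial basis, the moments are $Hc$, where $H$ is the fixed $(m+1) \times (m+1)$ Hilbert-type matrix with entries $H_{tj} = \int_{-1}^1 y^{j+t}\, dy$. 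By the injectivity argument above, $H$ is invertible, so $c = H^{-1} b$ and $\max_j |c_j| \le \|H^{-1}\|_\infty \max_t |b_t| = O_m(\max_t |b_t|)$. Combining with $|q(y)| \le (m+1) \max_j |c_j|$ on $[-1,1]$ and the rescaling, we obtain $\sup_{x \in [-C,C]} |p(x)| \le O_m(\max_t |a_t|/C^{t+1})$, which is the stated bound.

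There is no serious obstacle here: the problem is purely finite-dimensional, and the Gram-type matrix $H$ has an inverse whose norm depends only on $m$. The only mild subtlety is that one must arrange the $C$-dependence to appear precisely as $C^{-(t+1)}$; this is exactly what the rescaling $q(y) = p(Cy)$ accomplishes, cleanly separating the $C$-dependence from the $O_m(\cdot)$ constant.
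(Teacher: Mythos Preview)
Your proof is correct. The paper does not actually prove this statement: it is quoted as a known fact (Lemma~8.18 from the cited reference~\cite{DK}) and used as a black box, so there is no in-paper proof to compare against. Your argument---injectivity of the moment map on the $(m+1)$-dimensional space $V_m$ via $\int p^2=0$, followed by the rescaling $q(y)=p(Cy)$ to reduce to $C=1$ and inversion of the fixed Hilbert-type Gram matrix---is the standard and clean way to establish the result, and all steps are valid.
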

We apply the fact and take $C=1$.
This implies that there is such a polynomial with $|p(x)|\leq 1/10$ for sufficiently small $\alpha_d$ 
depending only on $d$.
\end{proof}

\subsection{Proof of Theorem \ref{thm:anti-concentration-detection}}
We provide a more detailed statement of Theorem \ref{thm:anti-concentration-detection} here.
\begin{theorem}[SQ Lower Bound for AC Detection] \label{thm:anti-concentration-detection-correct}
    There exists a function $f:(0,1/2)\to \N$ that $\lim_{\alpha\rightarrow 0} f(\alpha)=\infty$
    and satisfies the following.
    For any sufficiently small $\alpha\in (0,1/2)$, any SQ algorithm 
    that has access to a distribution that is either
    (i) a standard Gaussian; or
    (ii) a distribution that has at least $\alpha$ probability mass 
    in a $(n-1)$-dimensional subspace $V\subset \R^n$,
    and distinguishes the two cases with success probability at least $2/3$,
    either requires a query with error at most $O_\alpha(n^{-f(\alpha)})$,
    or uses at least $2^{n^{\Omega(1)}}$ many queries.
\end{theorem}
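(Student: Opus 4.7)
The plan is to reduce AC detection to the NGCA problem and then invoke Theorem~\ref{thm:main-result}. The key observation is that if $A = \alpha D_0 + (1-\alpha)E$ is the univariate moment-matching distribution produced by Lemma~\ref{lem:anti-concentration-detection-moment-matching}, then the hidden-direction distribution $\p_\bv^A$ places probability mass exactly $\alpha$ on the hyperplane $\{\bx\in\R^n:\langle\bv,\bx\rangle=0\}$, which is an $(n-1)$-dimensional subspace. So any SQ algorithm solving AC detection automatically distinguishes $\gaus_n$ from $\p_\bv^A$ (with $\bv \sim U(\mathbb{S}^{n-1})$), and any lower bound for the latter transfers to the former.

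Concretely, first I would inspect the construction behind Lemma~\ref{lem:anti-concentration-detection-moment-matching} (which uses the polynomial-moment-matching fact from~\cite{DK}) to observe that for each fixed $d\in\Z_+$, the value of $\alpha_d$ can be made as small as desired, and moreover that for every $\alpha \le \alpha_d$ the same construction (with $\alpha$ in place of $\alpha_d$) produces a valid univariate distribution of the form $\alpha D_0 + (1-\alpha)E$ whose first $d$ moments agree with $\gaus(0,1)$. This lets me define $d(\alpha)$ as the largest even integer $d$ such that the construction with mass $\alpha$ at $0$ succeeds. By construction $d(\alpha) \to \infty$ as $\alpha \to 0$.

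Second, given $\alpha$ sufficiently small, I set $d=d(\alpha)$, construct the corresponding univariate $A$, and apply Theorem~\ref{thm:main-result} with $m=1$, $\nu=0$, and (say) $\lambda, c$ fixed small constants. For $n$ large enough relative to $\alpha$ we have $d \le n^\lambda / \log n$, so the theorem yields that any SQ algorithm distinguishing $\gaus_n$ from $\p_\bv^A$ with probability $2/3$ either uses a query of tolerance at most $O_{d}(n^{-((1-\lambda)/4 - c)d}) = O_\alpha(n^{-\Omega(d(\alpha))})$, or uses at least $2^{n^{\Omega(1)}}$ queries. Setting $f(\alpha) := \Omega(d(\alpha))$ gives the claimed function; since $d(\alpha)\to\infty$ as $\alpha\to 0$, so does $f(\alpha)$.

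The main subtlety, rather than a real obstacle, is verifying that $\alpha_d$ in Lemma~\ref{lem:anti-concentration-detection-moment-matching} can indeed be taken arbitrarily small in terms of $d$ (so that $d(\alpha)\to\infty$), and that the chi-squared norm of $A$ being infinite (because of the delta mass at $0$) does not impede the reduction. The latter is precisely why the older framework of~\cite{DKS17-sq} gives a vacuous bound here and why our new main theorem, which dispenses with the $\chi^2$ assumption, is essential. Beyond these points, the reduction is immediate: $\p_\bv^A$ is by definition a distribution with $\alpha$-mass on an $(n-1)$-dimensional subspace, so it is a legitimate instance of the ``alternative'' hypothesis in AC detection, and the lower bound transfers directly.
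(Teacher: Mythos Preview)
Your proposal is correct and follows essentially the same route as the paper: construct the moment-matching distribution $A=\alpha D_0+(1-\alpha)E$ from Lemma~\ref{lem:anti-concentration-detection-moment-matching}, observe that $\p_\bv^A$ places mass at least $\alpha$ on the hyperplane orthogonal to $\bv$, and invoke Theorem~\ref{thm:main-result}. The only cosmetic difference is that the paper uses the distribution with mass exactly $\alpha_d\ge\alpha$ at zero (which already satisfies the ``at least $\alpha$'' requirement), whereas you argue that the construction also works with mass exactly $\alpha$; both are fine, and the paper sets the explicit choice $f(\alpha)=d/32$ where $d$ is the largest integer with $\alpha_d\ge\alpha$.
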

\begin{proof} [Proof of Theorem \ref{thm:anti-concentration-detection-correct}]
This is a direct application of Theorem~\ref{thm:main-result}.
We will let the one-dimensional moment-matching distribution
be the distribution in Lemma~\ref{lem:anti-concentration-detection-moment-matching},
where we will take $d$ only depends on $\alpha$ to be the largest integer such that 
$\alpha_d$ in Lemma~\ref{lem:anti-concentration-detection-moment-matching} satisfies $\alpha_d\geq\alpha$.
Notice that $d\rightarrow \infty$ as $\alpha\rightarrow 0$.
Taking $f(\alpha)=d/32$, it follows that
any algorithm distinguishing between
\begin{itemize}
    \item  A standard Gaussian; and
    \item  The distribution $\p_\bv^A$ for $\bv\sim U(\mathbb{S}^{n-1})$, 
    where 
    $A$ is the moment-matching distribution in Lemma~\ref{lem:anti-concentration-detection-moment-matching},
\end{itemize}
with at least $2/3$ probability must require either 
a query of tolerance at most $O_d(n^{-d/32})=O_\alpha(n^{-f(\alpha)})$
or $2^{n^{\Omega(1)}}$ many queries.
Notice that the distribution $\p_\bv^A$ has at least $\alpha$ probability mass 
resides inside the 
orthogonal complement of $\mathrm{span}(\bv)$, which is a $(n-1)$-dimensional subspace. 
Therefore, any SQ algorithm for solving the AC detection solves the hypothesis testing problem above. 
This completes the proof.
\end{proof}

\subsection{Proof of Theorem~\ref{thm:SQ-periodic}}
Let $G'_{s,\theta}$ be the probability measure obtained by rescaling $G'_{s,\theta}$ such that the total measure is one. 
We first show the following fact.
\begin{fact}\label{fct:discrete-gaussian-approx}
    For any polynomial $p$ of degree at most $k$ that $\E_{t\sim \gaus(0,1)}[p(t)^2]=1$, 
    $s$ that is at most a sufficiently small universal constant,
    $|\E_{t\sim G'_{s,\theta}}[p(t)]-\E_{t\sim \gaus(0,1)}[p(t)]|=k!2^{O(k)}\exp(-\Omega(1/s^2))$.
\end{fact}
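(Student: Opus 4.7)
The plan is to reduce to the monomial moment bound of Fact~\ref{fct:discrete-gaussian-moment} after a basis change, and then handle the renormalization from $G_{s,\theta}$ to $G'_{s,\theta}$.

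First, I would expand $p$ in the normalized probabilist's Hermite basis, $p=\sum_{j=0}^k c_j h_j$, so that the hypothesis $\E_{t\sim \gaus(0,1)}[p(t)^2]=1$ becomes $\sum_j c_j^2=1$; in particular $|\E_{t\sim \gaus(0,1)}[p(t)]|=|c_0|\le 1$. Converting back to monomials $p(t)=\sum_{m=0}^k a_m t^m$ using the classical formula $\mathrm{\textit{He}}_j(t)=\sum_l (-1)^l j!\, t^{j-2l}/(l!\,2^l(j-2l)!)$ and applying Cauchy--Schwarz against the vector $(c_j)_j$ yields a coefficient bound of the form $|a_m|\le 2^{O(k)}\sqrt{k!}/m!$.

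Second, I would apply Fact~\ref{fct:discrete-gaussian-moment} termwise, which gives $|\E_{t\sim \gaus(0,1)}[t^m]-\E_{t\sim G_{s,\theta}}[t^m]|\le m!\,O(s)^m\exp(-\Omega(1/s^2))$ for each $0\le m\le k$. Combining with the coefficient bound and summing,
\[
|\E_{G_{s,\theta}}[p]-\E_{\gaus(0,1)}[p]|\le 2^{O(k)}\sqrt{k!}\exp(-\Omega(1/s^2))\sum_{m=0}^k O(s)^m.
\]
For $s$ below a sufficiently small universal constant the geometric sum is $O(1)$, so this is at most $k!\,2^{O(k)}\exp(-\Omega(1/s^2))$.

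Third, I would pass from $G_{s,\theta}$ to $G'_{s,\theta}$. Let $Z=G_{s,\theta}(\R)$; the $m=0$ case of Fact~\ref{fct:discrete-gaussian-moment} gives $|Z-1|=O(\exp(-\Omega(1/s^2)))$, so $Z\in[1/2,2]$ for $s$ small. Using $\E_{G'_{s,\theta}}[p]=\E_{G_{s,\theta}}[p]/Z$ together with $|\E_{\gaus(0,1)}[p]|\le 1$, one obtains
\[
|\E_{G'_{s,\theta}}[p]-\E_{\gaus(0,1)}[p]|\le 2\bigl(|\E_{G_{s,\theta}}[p]-\E_{\gaus(0,1)}[p]|+|1-Z|\bigr),
\]
and the second step delivers the required $k!\,2^{O(k)}\exp(-\Omega(1/s^2))$ bound. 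There is no substantive obstacle here; the only care needed is that the $m!$ factor from Fact~\ref{fct:discrete-gaussian-moment} cancels against the $1/m!$ appearing in the Hermite-to-monomial coefficient bound, and the ``sufficiently small universal constant'' hypothesis on $s$ is used precisely to keep the geometric sum $\sum_m O(s)^m$ bounded.
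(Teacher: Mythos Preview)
Your proof is correct and follows essentially the same approach as the paper: expand $p$ in the Hermite basis, pass to monomials via the explicit formula for $\mathrm{\textit{He}}_j$, apply Fact~\ref{fct:discrete-gaussian-moment} termwise, and handle the renormalization $G_{s,\theta}\to G'_{s,\theta}$ using the $m=0$ case. The only cosmetic difference is ordering: the paper renormalizes first at the monomial level and then bounds $|\E[h_j]-\E_{G'}[h_j]|$ for each $j$ before combining via $\|c\|_1\le\sqrt{k}$, whereas you go straight to the monomial coefficients of $p$ and renormalize at the very end.
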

\begin{proof}
Using Fact~\ref{fct:discrete-gaussian-moment}, 
we have that the total measure of $G_{s,\theta}$ is $1\pm \exp(-\Omega(1/s^2))$.
Therefore, for the rescaled $G'_{s,\theta}$, for any $k\in \N$, $s>0$ and all $\theta\in \R$,
\[
|\E_{t\sim \gaus(0,1)}[t^k]-\E_{t\sim G'_{s,\theta}}[t^k]|=k!\exp(-\Omega(1/s^2))\; .
\]
Then using the definition of Hermite polynomial, 
\begin{align*}
    \E_{t\sim \gaus(0,1)}[h_k(t)]-\E_{t\sim G'_{s,\theta}}[h_k(t)]
    =&\frac{1}{\sqrt{k!}}\sum_{t=0}^{\lfloor k/2\rfloor}
    \frac{k!}{2^tt!(k-2t)!}\left (\E_{t\sim \gaus(0,1)}[t^{k-2t}]-\E_{t\sim G'_{s,\theta}}[t^{k-2t}]\right )\\
    \leq &\left (k!\exp(-\Omega(1/s^2))\right )\sum_{t=0}^{\lfloor k/2\rfloor}
    \frac{\sqrt{k!}}{2^tt!(k-2t)!}t^{k-2t}\; .
\end{align*}
Notice that the denominator is minimized when $t=k/2-O(\sqrt{k})$.
Then it follows that the sum is at most $k!2^{O(k)}\exp(-\Omega(1/s^2))$.
Now let $p(t)=\sum_{i=0}^k \bw_i h_i(t)$. Since $\E_{t\sim \gaus(0,1)}[p(t)^2]=1$, it must be 
$\|\bw\|_2=1$ and it follows that $\|\bw\|_1\leq \sqrt{k}$.
Therefore, we have 
\[
    \E_{t\sim \gaus(0,1)}[p(t)]-\E_{t\sim G'_{s,\theta}}[p(t)]
    \leq \sum_{i=0}^k |\bw_i| |\E_{t\sim \gaus(0,1)}[h_k(t)]-\E_{t\sim G'_{s,\theta}}[h_k(t)]|
    =k!2^{O(k)}\exp(-\Omega(1/s^2))\; .
\]
\end{proof}
Now given Fact~\ref{fct:discrete-gaussian-approx}, we can apply our main result Theorem~\ref{thm:main-result}.
We will consider the following distribution distinguishing problem.
In both the null hypothesis case and the alternative hypothesis case, 
the algorithm is given a joint distribution $D$ of $(\bx,y)$ over $\R^n\times \R$.
In the null hypothesis case, we have $\bx\sim \gaus(0,\bI_n)$ and $y\sim U([-1,+1])$ independently.
While in the alternative hypothesis case, we have $\bx\sim \gaus(0,\bI_n)$ and $y=\cos(2\pi(\delta\langle \bw,\bx\rangle +\zeta))$ with noise $\zeta\sim \gaus(0,\sigma^2)$ as in the definition of learning periodic function.
Notice that any SQ algorithm that can always returns a hypothesis $h$ such that $\E_{(\bx,y)\sim D}[(h(\bx)-y)^2]=o(1)$ can also be easily used to distinguish the two cases.
Therefore to lower bound such SQ algorithms, it suffices for us to give an SQ lower bound for this distribution distinguishing problem.

Notice that in the alternative hypothesis, the distribution of $\bx$ conditioned on any value of $y$ is 
the hidden direction distribution $\p_\bw^A$ where $A$ is
$\int_{-\infty}^{\infty}f_y(\zeta) \frac{1}{2}\left (G'_{1/\delta,\frac{\arccos{(y)}}{2\pi}-\zeta}
+G'_{1/\delta,\frac{2\pi-\arccos{(y)}}{2\pi}-\zeta}\right )d \zeta$
and $f_y:\R\to\R$ is the PDF function of distribution of $\zeta$ conditioned on $y$.
Notice that this is a mixture
of $G_{s,\theta}$ with $s=1/\delta$ and different $\theta$.
Thus applying Theorem~\ref{thm:main-result} and Fact~\ref{fct:discrete-gaussian-approx} yields that any SQ algorithm for solving the distinguishing problem, either requires a query of error at most $O_{k}(n^{-((1-\lambda)/4-\beta) k})+k!2^{O(k)}\exp(-\Omega(1/s^2))$, or at least $2^{n^{\Omega(\beta)}}$ many queries
for $\beta>0$.
Since $k$ will have dependence on $n$, we will need to calculate the constant factor in $O_{k}(n^{-((1-\lambda)/4-\beta) k})$ which depends on $k$.
According to Proposition~\ref{prp:main-tail-bound}, plug in the factor gives
$\sqrt{k!}n^{-((1-\lambda)/4-\beta) k}+k!2^{O(k)}\exp(-\Omega(1/s^2))$.

For convenience, let $(1-\lambda)/4-\beta=\gamma$.
It only remains to choose the value of $k$ and $\gamma$ so that $k\leq n^{\lambda}$, $\gamma<(1-\lambda)/4$ and
$\sqrt{k!}n^{-\gamma k}+k!2^{O(k)}\exp(-\Omega(\delta^2))$ is minimized.
We will chose $k=n^{c''}$ and $\gamma=c''$ for $c'<c''<\min(2c,1/10)$.
Then the error tolerance here is 
$\sqrt{k!} n^{-\gamma k}+k!2^{O(k)}\exp(-\Omega(\delta^2))
\leq \exp(-k)+\exp({O(c''(\log n)n^{c''})+O(n^{c''})-\Omega(n^{2c})})
=\exp(-\Omega(n^{c''}))+\exp(-\Omega(n^{2c}))=\exp(-n^{c'})$.
The number of queries here is 
$2^{n^{\Omega(\beta)}}=2^{n^{\Omega((1-\lambda)/4-\gamma)}}=2^{n^{\Omega((1-c'')/4-c')}}=2^{n^{\Omega(1)}}$.
This completes the proof.

\subsection{SQ Lower Bounds as Information-Computation Tradeoffs} \label{app:ic}
We note that both aforementioned results 
can be viewed as evidence 
of information-computation tradeoffs for the application problems we discussed.

For the problem of list-decodable Gaussian mean estimation,
the information-theoretically optimal error is $\Theta(\log^{1/2} (1/\alpha))$
and is achievable with $\poly(n/\alpha)$ many samples 
(\cite{DKS18-list}; see, e.g., Corollary 5.9 and Proposition 5.11 of \cite{DK}).
The best known algorithm for this problem achieves $\ell_2$-error guarantee $O(\alpha^{-1/d})$
using sample complexity and run time $(n/\alpha)^{O(d)}$ 
(\cite{DKS18-list}, see Theorem 6.12 of \cite{DK}).
Notice that in order to achieve error guarantee even sub-polynomial in $1/\alpha$,
the above algorithm will need super-polynomial runtime and sample complexity.
Informally speaking, Theorem \ref{thm:decodable-bound}
shows that no SQ algorithm 
can perform list-decodable mean estimation with a sub-exponential in $n^{\Omega(1)}$ many queries, 
unless using queries of very small tolerance  --- that would require at least
super-polynomially many samples to simulate.
Therefore, it can be viewed as evidence supporting an inherent tradeoff between
robustness and time/sample complexity for this problem.

For the AC detection hypothesis testing problem,
the information-theoretically optimal sample complexity is $O(n/\alpha)$.
To see this, note that if the input distribution is a standard Gaussian,
then any $n$ samples will almost surely be linearly independent.
On the other hand, 
suppose that the input distribution has $\alpha$ probability mass in a subspace.
Then with $O(n/\alpha)$ many samples, with high probability, 
there will be a subset of $n$ samples all coming from that subspace,
which cannot be linearly independent.
However, our SQ lower bound suggests that no efficient algorithm 
can solve the problem with even $n^{\omega(1)}$ samples where the $\omega(1)$
is w.r.t. $\alpha\rightarrow 0$.
This suggests an inherent tradeoff between the sample complexity 
and time complexity of the problem.

For the problem of learning periodic functions,
the SQ lower bound given by our result will be larger than the algorithmic upper bound in~\cite{SZB21}(with sample complexity $O(n)$ and run-time $2^{O(n)}$).
However, the algorithms in~\cite{SZB21} are based on LLL lattice-basis-reduction which is not captured by the SQ framework; therefore, this does not contradict our SQ lower bound result.

\section{The Optimality of Parameters in Theorem \ref{thm:main-result}} \label{app:optimality-proof}
In this section, we show that the lower bound in Theorem \ref{thm:main-result}
is nearly optimal. Specifically, we construct an NGCA problem 
instance that can be solved by an SQ algorithm 
with a single query of tolerance $n^{-(d+2)/4}$ 
(as we explained, in most cases, we can take $\lambda, c$ 
arbitrarily close to $0$ and our lower bound 
on tolerance is $O_{m,d}(n^{-d/4})$).
We recall the definition of hypothesis testing version of NGCA:
For integers $n>m\ge1$ and a distribution $A$ supported on $\R^m$,
one is given access to a distribution $D$ such that either: (1) $H_0:D=\mathcal{N}_n$,
(2) $H_1: D$ is given by $\p_\bV^A$,
where $\bV\sim U(\mathbf{O}_{n,m})$.
The goal is to distinguish between these two cases $H_0$ and $H_1$.

To define our NGCA instance, we let $A=(1-2\epsilon)\mathcal{N}+\epsilon\delta_{B}+\epsilon\delta_{-B}+p\mathbf{1}[x\in[-1,1]]$ for some polynomial $p:\R\to\R$ of degree at most $d$,
where we assign probability mass $\epsilon$ on some points $B$ and $-B$ respectively.

In order to match the first $d$ moments of $A$ with the standard 
Gaussian, we require the following technical result.
\begin{lemma}[\cite{DK}]\label{lem:moment-matching}
Let $C>0$ and $d\in\mathbb{Z}_+$. For any $a_0,a_1,\ldots,a_d\in\R$, there exists a unique degree at most $d$ polynomial $p:\R\to\R$ such that for each integer $0\le t\le d$ we have that
\begin{align*}
\int_{-C}^Cp(x)x^tdx=a_t.
\end{align*}
Furthermore, for each $x\in[-C,C]$ we have that $|p(x)|\le O_d(\max_{0\le t\le d}|a_t|C^{-t-1})$.
\end{lemma}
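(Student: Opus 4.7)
The plan naturally splits into two parts: proving (i) existence and uniqueness of $p$, and (ii) the claimed pointwise bound. Both parts are essentially finite-dimensional linear algebra, and the only technical care needed is in the rescaling step.

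For (i), I would expand $p(x)=\sum_{k=0}^d c_k x^k$ and note that the $d+1$ moment conditions become a linear system $M c = a$, where $M_{t,k} = \int_{-C}^C x^{t+k}\,dx$ for $0\le t,k\le d$. This matrix $M$ is exactly the Gram matrix of the monomials $1,x,\ldots,x^d$ with respect to the inner product $\langle f,g\rangle = \int_{-C}^C f(x)g(x)\,dx$. Since these monomials are linearly independent in $L^2([-C,C])$, the Gram matrix is positive definite and hence invertible, which immediately yields existence and uniqueness of the coefficient vector $c$ and thus of $p$.

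For (ii), I would reduce the bound to the case $C=1$ by rescaling. Set $y=x/C$ and $q(y) = p(Cy)$; the change of variables gives $\int_{-C}^C p(x) x^t\,dx = C^{t+1}\int_{-1}^1 q(y) y^t\,dy$, so the moment conditions on $q$ become $\int_{-1}^1 q(y) y^t\,dy = a_t/C^{t+1}$ for $0\le t\le d$. By part (i) applied at $C=1$, the polynomial $q$ of degree at most $d$ is uniquely determined by these right-hand sides, and the map sending $(b_0,\ldots,b_d) \mapsto q$ is a linear operator between the finite-dimensional spaces $\mathbb{R}^{d+1}$ and the space of polynomials of degree at most $d$. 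Equipping the domain with the $\ell_\infty$-norm and measuring $q$ by its sup-norm on $[-1,1]$, this operator has some finite norm $K_d$ depending only on $d$. Therefore $\sup_{y\in[-1,1]} |q(y)| \le K_d \max_t |a_t|/C^{t+1}$, and since $p(x) = q(x/C)$ for $x\in[-C,C]$, we obtain $|p(x)| \le O_d(\max_t |a_t| C^{-t-1})$, as required.

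The main thing to get right is the bookkeeping of the $C^{t+1}$ factors in the rescaling step and ensuring the operator-norm constant $K_d$ is extracted by a genuinely dimension-only argument (compactness of the unit sphere in $\mathbb{R}^{d+1}$ together with continuity of the map $a\mapsto\sup_{[-1,1]}|q|$ works). Otherwise there is no substantial obstacle, since the problem reduces entirely to finite-dimensional linear algebra plus a one-variable change of variables.
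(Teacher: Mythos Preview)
Your proof is correct. The paper does not actually supply a proof of this lemma; it is quoted from \cite{DK} (as Lemma~8.18 there) and used as a black box, so there is nothing to compare against. Your Gram-matrix argument for existence/uniqueness and your rescaling to $C=1$ followed by a finite-dimensional operator-norm bound are a clean and standard way to establish the result; the only thing to be careful about is that the bound you obtain is $K_d\max_t(|a_t|C^{-t-1})$, which is exactly how the expression $\max_{0\le t\le d}|a_t|C^{-t-1}$ in the statement should be parsed.
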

In order to apply Lemma~\ref{lem:moment-matching}, we pick $C=1$ and $a_0=0$. For any $1\le t\le d$, we let $a_t=2\epsilon((t-1)!!-B^t)$ for even $t$ and $a_t=0$ for odd $t$.
By Lemma~\ref{lem:moment-matching}, there is a polynomial $p$ of degree at most $d$ such that $A$ exactly matches the first $d$ moments of the standard Gaussian.
Furthermore, for each $x\in[-1,1]$, we have that $|p(x)|\le O_d(\max_{0\le t\le d}|a_t|)$. We take $B=\sqrt{n}$ and $\epsilon=n^{-(d+2)/2}$. In this way, for any $x\in[-1,1]$, we have that $(1-2\epsilon)\mathcal{N}(x)\ge O_d(\max_{0\le t\le d}|a_t|)=O_d(\epsilon B^d)=O_d(1/n)$.
Therefore, $A$ is well-defined.

Let $F(\bx)=\mathrm{\textit{He}}_{(d+2)/2}\left(\frac{\|\bx\|_2^2-n}{\sqrt{n}}\right)$.
The SQ algorithm is simple.
\begin{itemize}
\item Ask the SQ oracle to obtain an estimate $v$ of $\E_{\bx\sim D}[F(\bx)]$ with tolerance $n^{-(d+2)/4}/4$.
\item If $v<n^{-(d+2)/4}/2$ then return $H_0$, otherwise return $H_1$.
\end{itemize}
We assume the dimension $n$ is sufficiently large.
To prove the correctness of our algorithm,
it suffices to show that $\E_{\bx\sim\p_\bv^A}[F(\bx)]\ge n^{-(d+2)/4}$.
Note that $\frac{\|\bx\|_2^2-n}{\sqrt{n}}$ is sufficiently close to the standard Gaussian, and we have that
\begin{align*}
\E_{\bx\sim\p_\bv^A}[F(\bx)]&=\E_{\bx\sim\p_\bv^A}\left[\mathrm{\textit{He}}_{(d+2)/2}\left(\frac{\|\bx\|_2^2-n}{\sqrt{n}}\right)\right]=\E_{y\sim\chi_{n-1},z\sim A}\left[\mathrm{\textit{He}}_{(d+2)/2}\left(\frac{y^2+z^2-n}{\sqrt{n}}\right)\right]\\&=\E_{z\sim A}\left[\left(\frac{z^2-1}{\sqrt{n}}\right)^{(d+2)/2}\right]=n^{-(d+2)/4}\E_{z\sim A}\left[(z^2-1)^{(d+2)/2}\right],
\end{align*}
where we apply the identity 
$\mathrm{\textit{He}}_k(x+y)=\sum_{\ell=0}^k\binom{k}{\ell}x^{k-\ell}\mathrm{\textit{He}}_\ell(y)$ in the third equation.
Since $A$ exactly matches the first $d$ moments with the standard Gaussian and $d$ is even, we have that
\begin{align*}
&\quad\E_{z\sim A}[(z^2-1)^{(d+2)/2}]-\E_{z\sim\mathcal{N}}[(z^2-1)^{(d+2)/2}]=\E_{z\sim A}[z^{d+2}]-\E_{z\sim\mathcal{N}}[z^{d+2}]\\&=2\epsilon B^{d+2}-2\epsilon\E_{z\sim\mathcal{N}}[z^{d+2}]+\int_{-1}^1p(x)x^{d+2}dx=2\epsilon B^{d+2}-2\epsilon(d+1)!!+\int_{-1}^1p(x)x^{d+2}dx\;.
\end{align*}
This gives that 
\begin{align*}
\E_{z\sim A}[(z^2-1)^{(d+2)/2}]&\ge2\epsilon(B^{d+2}-(d+1)!!)-\int_{-1}^1|p(x)|x^{d+2}dx\\&\ge2\epsilon(B^{d+2}-(d+1)!!)- O_d(\epsilon B^d)\int_{0}^C x^{d+2}dx\\&=2\epsilon(B^{d+2}-(d+1)!!)-O_d(1/n)\\&\ge1\;.
\end{align*}
Therefore, we conclude that $\E_{\bx\sim\p_\bv^A}[F(\bx)]=n^{-(d+2)/4}\E_{z\sim A}\left[(z^2-1)^{(d+2)/2}\right]\ge n^{-(d+2)/4}$.

\begin{remark}
{\em By the standard definition of an SQ algorithm, we need to pick a bounded function $F(\bx)$ for the SQ queries. To address this, we can truncate the value of $F(\bx)$ with $\frac{\|\bx\|_2^2-n}{\sqrt{n}}$ more than $M_d$, where $M_d$ is a parameter depending on $d$. Since $n$ is sufficiently large, it suffices to show that $\E_{x\sim\gaus}[\mathrm{\textit{He}}_k(x)\mathbf{1}[|x|\le M]]$ is sufficiently close to 0 for any $1\le k\le (d+2)/2$. This can be achieved since there is an $M = M_d$ such that $\left|\int_{-M}^M \gaus(x)\mathrm{\textit{He}}_k(x)dx\right|\le \frac{1}{O(d)^d},1\le k \le d$.}
\end{remark}

\end{document}